\journal{European Journal of Operational Research}
\newcommand{\Mmodel}{\mu}
\newcommand{\Mmodeldot}{\dot{\mu}}
\newcommand{\MmodeldotUnder}{\dot{\underline{\mu_i}}}
\newcommand{\Imatrixd}{\mathbf{I}_d}
\newcommand{\algotwo}{\textsc{CB-MNL}}
\newcommand{\algocompUCB}{\textsc{UCB-MNL}}
\newcommand{\algocompTS}{\textsc{TS-MNL}}
\newcommand{\MthetaProjectedt}{\theta_t^{\,\text{est}}}
\newcommand{\PredErr}{\Delta^{\text{pred}}}
\newcommand{\xsi}{x_{s,i}}
\newcommand{\xti}{x_{t,i}}
\newcommand{\sumInAssortS}{\sum_{i\in \mathcal{Q}_s}}
\newcommand{\sumInAssortSj}{\sum_{j\in \mathcal{Q}_s}}
\newcommand{\sumInAssortTj}{\sum_{j\in \mathcal{Q}_t}}
\newcommand{\sumInAssortT}{\sum_{i\in \mathcal{Q}_t}}
\newcommand{\designQt}{\mathbf{X}_{\mathcal{Q}_t}}
\newcommand{\designQtopt}{\mathbf{X}_{\mathcal{Q}^*_t}}
\newcommand{\designQs}{\mathbf{X}_{\mathcal{Q}_s}}
\DeclareMathOperator*{\argmin}{argmin}
\DeclareMathOperator*{\argmax}{argmax}
\newtheorem{thm}{Theorem}
\newtheorem{lemma}[thm]{Lemma}
\newtheorem{corol}[thm]{Corollary}
\newtheorem{remark}{Remark}
\newtheorem{assumption}{Assumption}
\newcommand{\PA}[1]{\textcolor{red}{PA: #1}}
\providecommand{\customgenericname}{}
\newcommand{\newcustomtheorem}[2]{%
  \newenvironment{#1}[1]
  {%
   \renewcommand\customgenericname{#2}%
   \renewcommand\theinnercustomgeneric{##1}%
   \innercustomgeneric
  }
  {\endinnercustomgeneric}
}
\begin{document}

\begin{frontmatter}

\title{A Tractable Online Learning Algorithm for the Multinomial Logit Contextual Bandit}



\author[mymainaddress]{Priyank Agrawal\corref{mycorrespondingauthor}}
\cortext[mycorrespondingauthor]{Corresponding author}
\ead{pa2608@columbia.edu}

\author[mysecondaryaddress2]{Theja Tulabandhula}
\ead{tt@theja.org}

\author[mysecondaryaddress]{Vashist Avadhanula}
\ead{vas1089@gmail.com}

\address[mymainaddress]{500 W 120th St, New York, NY 10027}

\address[mysecondaryaddress2]{University Hall, 601 S Morgan St, Chicago, IL 60607}

\address[mysecondaryaddress]{1100 Enterprise Way, Sunnyvale, CA 94089}

\begin{abstract}
In this paper, we consider the contextual variant of the MNL-Bandit problem. More specifically, we consider a dynamic set optimization problem, where a decision-maker offers a subset (assortment) of products to a consumer and observes the response in every round. Consumers purchase products to maximize their utility. We assume that a set of attributes describe the products, and the mean utility of a product is linear in the values of these attributes. We model consumer choice behavior using the widely used Multinomial Logit (MNL) model and consider the decision maker’s problem of dynamically learning the model parameters while optimizing cumulative revenue over the selling horizon $T$. Though this problem has recently attracted considerable attention, many existing methods often involve solving an intractable non-convex optimization problem. Their theoretical performance guarantees depend on a problem-dependent parameter which could be prohibitively large. In particular, current algorithms for this problem have regret bounded by $O(\sqrt{\kappa d T})$, where $\kappa$ is a problem-dependent constant that may have an exponential dependency on the number of attributes, $d$. In this paper, we propose an optimistic algorithm and show that the regret is bounded by $O(\sqrt{dT} + \kappa)$, significantly improving the performance over existing methods. Further, we propose a convex relaxation of the optimization step, which allows for tractable decision-making while retaining the favorable regret guarantee. We also demonstrate that our algorithm has robust performance for varying $\kappa$ values through numerical experiments.
\end{abstract}

\begin{keyword}
Revenue management
\sep OR in marketing \sep Multi-armed bandit \sep Multinomial Logit model \sep Sequential decision-making
\end{keyword}

\end{frontmatter}



\section{Introduction}\label{sec: introduction main}
Assortment optimization problems arise in many industries, and prominent examples include retailing and online advertising (check \cite{ALFANDARI2021830,TIMONINAFARKAS20201058,WANG2020237} and see \cite{kok2007demand} for a detailed review). The problem faced by a decision-maker is that of selecting a subset (assortment) of items to offer from a universe of \emph{substitutable} items\footnote{If all consumers have identical preferences towards same characteristics of an item, then that item is termed as substitutable} such that the expected revenue is maximized. In many e-commerce applications, the data on consumer choices tends to be either limited or non-existent (similar to the \emph{cold start} problem in recommendation systems). Consumer preferences must be learned by experimenting with various assortments and observing consumer choices, but this experimentation with various assortments must be balanced to maximize cumulative revenue. Furthermore, in many settings, the retailer has to consider a very large number of products that are \emph{similar} (examples range from apparel to consumer electronics). The commonality in their features can be expressed with the aid of auxiliary variables which summarize product attributes. This enables a significant reduction in dimensionality but introduces additional challenges in designing policies that have to dynamically balance demand learning (exploration) while simultaneously maximizing cumulative revenues (exploitation). 

Motivated by these issues, we consider the dynamic assortment optimization problem. In every round, the retailer offers a subset (assortment) of products to a consumer and observes the consumer response. Consumers purchase (at most one product from each assortment) products that maximize their utility, and the retailer enjoys revenue from the successful purchase. We assume that the products are described by a set of attributes and the mean utility of a product is linear in the values of these attributes. We model consumer choice behavior using the widely used Multinomial Logit (MNL) model and consider the retailer's problem of dynamically learning the model parameters while optimizing cumulative revenues over the selling horizon $T$. Specifically, we have a universe of $N$ \emph{substitutable} items, and each item $i$ is associated with an attribute vector $x_i \in \mathbb{R}^d,$ which is known a priori. The mean utility for the consumer for the product $i$ is given by the inner product $\theta \cdot x_i,$ where $\theta \in \mathbb{R}^d$ is some fixed but initially unknown parameter vector. Each of the $d$ coordinates of $x_i$ for product $i$ represent a variety of characteristics such as cost, popularity, brand, etc. Given the \emph{substitutable} good assumption, the preference of all consumers towards these characteristics are identical and denoted by the same parameter\footnote{This assumption may appear quite restrictive at first. But, as described in the following paragraph and Section~\ref{sec: model setting}, the model is rich enough to model non-identical consumer behavior as well.} $\theta \in \mathbb{R}^d$. Further, any two products $i$ and $j$ could vary in terms of these characteristics and hence are associated with different vectors $x_i$ and $x_j \in \mathbb{R}^d$ respectively. Our goal is to offer assortments $\mathcal{Q}_1,\cdots, \mathcal{Q}_T$ at times $1,\cdots, T$ from a feasible collection of assortments such that the cumulative expected revenue of the retailer over the said horizon is maximized. In general, the feasible set of assortments can reflect the constraints of retailers and online platforms (such as cardinality, inventory availability and other related constraints).

For an intuitive understanding of the choice model, consider an example of an online furniture retailer that offers $N$ distinct products where the $i^{th}$ product has an attribute vector $x_i$ (in general, this attribute can vary over time, representing varying consumers' choices, and is more appropriately represented by $x_{t,i}$). 
Suppose consumers query for a specific product category, say \emph{tables}. In this example, the $\theta$ parameter will be a distinct vector corresponding to the product category: \emph{table}. As discussed before, th true $\theta_*$ that determines consumer choice behavior is unknown. With each interaction with the consumer, the online retailer is learning which of the $N$ products offers the most utility (captured by $\theta\cdot x_{i}$ for each product $i$) to the consumer by observing the past purchase decisions of the consumers. The online furniture retailer is constrained to offer at most $K$ of $N$ products in each interaction with the consumer. Such a constraint may be encountered in practical situations: limitation of the online consumer interface to display large number of products; consumer preferring to examine only a subset of products at a time etc. Out of $N$ furniture items, some particular \emph{table} $j$ could have high utility, $\theta\cdot x_j$, whereas, for a some product $k$ (say, a \emph{table} with unpopular color, bad design or inferior material etc.) $\theta\cdot x_k$ could be low. The consumer may purchase one or none of the $K$ presented products. Later in Section~\ref{sec: model setting}, we demonstrate that when the consumer's propensity to purchase a specific product is driven by its utility, the retailer's expected revenue at each round is given by a softmax function.

The rest of this section is organized as follows: We first describe the related literature and qualitative significance of the parameter $\kappa$. Then, we highlight our contributions and end the section by contrasting them with recent notable research works.

\subsection{Related literature} 

The MNL model is a widely used 
choice model for capturing consumer purchase behavior in assortment selection models (see \cite{FLORES20191052} and \cite{avadhanula2019mnl}). Recently, large-scale field experiments at Alibaba  \citep{feldman2018taking} have demonstrated the efficacy of the MNL model in boosting revenues. \cite{rusmevichientong2010dynamic} and \cite{saure2013optimal} were a couple of early works that studied  \emph{explore-then-commit} strategies for the dynamic assortment selection problem under the MNL model when there are no contexts/product features. The works of \cite{agrawal2019mnl} and \cite{agrawal2017thompson} revisited this problem and presented \emph{adaptive} online learning algorithms based on the Upper Confidence Bounds(UCB) and Thompson Sampling (TS) ideas. These approaches, unlike earlier ideas, did not require prior information about the problem parameters and had near-optimal regret bounds. Following these developments, the contextual variant of the problem has received considerable attention. \cite{cheung2017thompson} and \cite{oh2019thompson} propose TS-based approaches and establish Bayesian regret bounds on their performance\footnote{Our results give worst-case regret bound which is strictly stronger than Bayesian regret bound. Worst-case regret bounds directly imply Bayesian regret bounds with same order dependence.}. \cite{chen2018dynamic} present a UCB-based algorithm and establish min-max regret bounds. However, these contextual MNL algorithms and their performance bounds depend on a problem parameter $\kappa$ that can be prohibitively large, even for simple real-life examples. See Figure~\ref{fig:kappa_illustration} for an illustration and Section~\ref{sec: comparision with other works main} for a detailed discussion.\begin{figure}
\centering
\includegraphics[width=.75\textwidth]{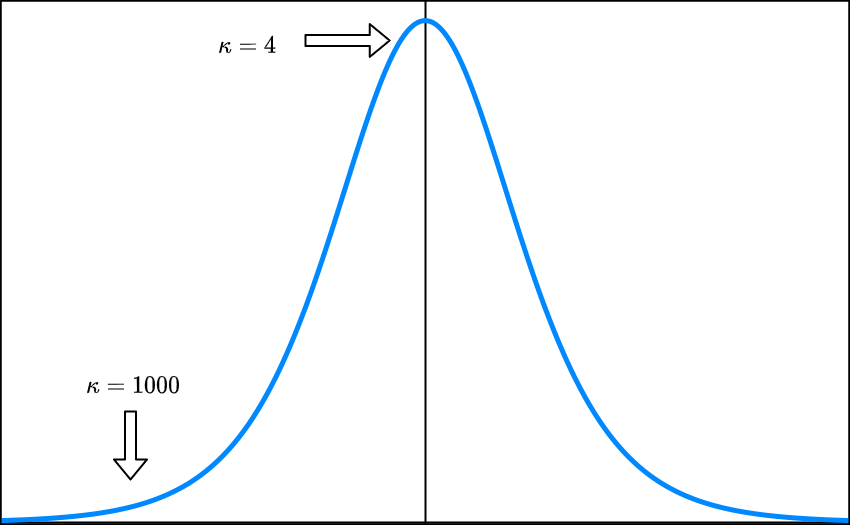}
\caption{Illustration of the impact of the $\kappa$ parameter (logistic case, multinomial logit case closely follows): A representative plot of the derivative of the reward function. The x-axis represents the linear function $x^\top\theta$ and the y-axis is proportional to $1/\kappa$. Parameter $\kappa$ is small only in the narrow region around $0$ and grows arbitrarily large depending on the problem instance (captured by $x^\top\theta$ values).}
\end{figure}\label{fig:kappa_illustration} We note that \cite{ou2018multinomial} also consider a similar problem of developing an online algorithm for the MNL model with linear utility parameters. Though they establish a regret bound that does not depend on the aforementioned parameter $\kappa$, they work with an \emph{inaccurate} version of the MNL model. More specifically, in the MNL model, the probability of a consumer preferring an item is proportional to the exponential of the utility parameter and is \emph{not} linear in the utility parameter as assumed in \cite{ou2018multinomial}.


The multi-armed bandit problem, which underlies these dynamic decision making settings, has been well studied in the literature (see \cite{XU2021622,GRANT2021575}). Our problem is closely related to the parametric bandit problem, where a common unknown parameter connects the rewards of each arm. In particular, for linear bandits, each arm $a \in A$ (consider $A$ to be the set of all arms) is associated with a $d$-dimensional vector $x_a \in \mathbb{R}^d$ known a priori. And the expected reward upon selecting arm $a \in A$ is given by the inner product $\theta \cdot x_a$, for some unknown parameter vector $\theta$ (see \cite{dani2008stochastic,rusmevichientong2010linearly,abbasi2011improved}). The key difference is that the rewards (i.e., the revenue of the retailer) corresponding to an assortment under the MNL cannot be modeled in the framework of linear payoffs.  Closer to our formulation is the literature on generalized linear bandits (see \cite{filippi2010parametric} and \cite{faury2020improved}), where the expected payoff upon selecting arm $a$ is given by $f(\theta\cdot x_a)$, where $f$ is a real-valued, non-linear function. However, unlike our setting, where an \emph{arm} could be a collection of $K$  products (thus involving $K$ $d$-dimensional vectors), $f(.)$ is a single variable function in these prior works. 

\subsection{On the parameter $\kappa$ }\label{sec: comparision with other works main}

As discussed earlier, the retailer's revenue (reward function) is the softmax function. Intuitively, the curvature of the reward function influences how easy (or difficult) it is to learn the true choice parameter $\theta_*$. In Section~\ref{sec: preliminaries main}, we explicitly define $\kappa$ as inversely proportional to the lower bound on of the \emph{derivative} of the reward function in the entire decision region. Existence of a global lower bound on the curvature of the reward function is a necessary assumption for the maximum likelihood estimation of $\theta_*$.

In previous works on generalized linear bandits and variants~\citep{filippi2010parametric,li2017provably,oh2019thompson}, the quantity $\kappa$ features in regret guarantees as a multiplicative factor of the primary term (i.e., as $\tilde{\mathrm{O}}(\kappa\sqrt{T})$), and this is because they ignore the local effect of the curvature, and use global properties (via $\kappa$) leading to loose worst-case bounds. For a cleaner exposition of this issue, lets take $K=1$, i.e., the rewards are given by a sigmoid function of $\theta\cdot x$. The derivative of sigmoid is ``bell''-shaped (see Figure~\ref{fig:kappa_illustration}). When $\theta\cdot x$ is very high (i.e., the assortment contains products with high utilities) or when $\theta\cdot x$ is very low (i.e., the assortment contains products with low utility), the value of $\kappa$ will be large. From Assumption~\ref{assm: kappa assumption}, for $K=1$, $\kappa$ is equivalent to $\max \frac{1}{a(1-a)}$, for some $a\in(0,1)$. Thus, when $a$ is close to $1$ or $0$, the value of $\kappa$ will be large. The exponential dependence for $K=1$ case follows when we replace $a$ with a sigmoid function. In the context of our problem, this translates to an exponential dependence of the per-round regret on the magnitude of utilities (i.e., $\theta\cdot x$). 




\subsection{Contributions} 
In this paper, we build on recent developments for generalized linear bandits (\cite{faury2020improved}) to propose a new optimistic algorithm, \algotwo{} for the problem of contextual multinomial logit bandits. \algotwo{} follows the standard template of optimistic parameter search strategies (also known as \emph{optimism in the face of uncertainty} approaches) ~\citep{abbasi2011improved,abeille2020instance}. We use Bernstein-style concentration for self-normalized martingales, which were previously proposed in the context of scalar logistic bandits in~\cite{faury2020improved}, to define our confidence set over the true parameter, taking into account the effects of the local curvature of the reward function. We show that the performance of \algotwo{} (as measured by regret) is bounded as $\tilde{\mathrm{O}}\del{d\sqrt{T} + \kappa}$, significantly improving the theoretical performance over existing algorithms where $\kappa$ appears as a multiplicative factor in the leading term. We also leverage a self-concordance~\citep{bach2010self} like relation for the multinomial logit reward function~\citep{zhang2015disco}, which helps us limit the effect of $\kappa$ on the final regret upper bound to only the higher-order terms. Finally, we propose a different convex confidence set for the optimization problem in the decision set of~\algotwo{}, which reduces the optimization problem to a constrained convex problem. 

In summary, our work establishes strong worst-case regret guarantees by carefully accounting for local gradient information and using \emph{second-order} function approximation for the estimation error. 

\subsection{Comparison with notable prior works}\label{sec: comparison with specific literature}


\noindent \textbf{Comparison with~\cite{filippi2010parametric}} Our setting is different from the standard generalized linear bandit of~\cite{filippi2010parametric}. In our setting, the reward due to an action (assortment) can be dependent on up to $K$ variables ($\theta_*\cdot x_{t,i},\,i\in \mathcal{Q}_t$) instead of a single variable. Further, we focus on removing the multiplicative dependence on $\kappa$ from the regret bounds. This leads to a more involved technical treatment in our work.

\noindent \textbf{Comparison with~\cite{oh2019thompson}} The Thompson Sampling based approach is inherently different from our \emph{Optimism in the face of uncertainty (OFU)} style Algorithm~\algotwo{}. However, the main result in~\cite{oh2019thompson} also relies on a confidence set based analysis along the lines of~\cite{filippi2010parametric} but has a multiplicative $\kappa$ factor in the bound.

\noindent \textbf{Comparison with~\cite{faury2020improved}} \cite{faury2020improved} use a bonus term for optimization in each round, and their algorithm performs non-trivial projections on the admissible log-odds. While we do reuse the Bernstein-style concentration inequality as proposed by them, their results do not seem to extend directly to the MNL setting without requiring significantly more work. Further, our algorithm~\algotwo{} performs an optimistic parameter search for making decisions instead of using a bonus term, which allow for a cleaner and shorter analysis.

\noindent \textbf{Comparison with~\cite{oh2021multinomial}} While the authors in~\cite{oh2021multinomial} provide sharper bounds by a factor of $\tilde{\mathrm{O}}(\sqrt{d})$, they still retain the $\kappa$ multiplicative factor in their regret bounds. Their focus is on improving the dependence on the dimension parameter $d$ for the dynamic assortment optimization problem. 

\noindent \textbf{Comparison with~\cite{abeille2020instance}} ~\cite{abeille2020instance} recently proposed the idea of convex relaxation of the confidence set for the more straightforward logistic bandit setting. Our work can be viewed as an extension of their construction to the MNL setting.

\noindent \textbf{Comparison with~\cite{amani2021ucb}} While the authors in~\cite{amani2021ucb} also extend the algorithms of~\cite{faury2020improved} to a multinomial problem, their setting is materially different from ours. They model various click-types for the same advertisement (action) via the multinomial distribution. further, they consider actions played at each round to be non-combinatorial, i.e., a single action as opposed to a bundle of actions, which differs from the assortment optimization setting in this work. Therefore, their approach and technical analysis are different from ours.


\section{Preliminaries}\label{sec: preliminaries main}
\subsection{Notations}\label{sec: notation main}
For a vector $x\,\in\,\mathbb{R}^d$, $x^\top$ denotes the transpose. Given a positive definite matrix $\mathbf{M}\,\in\,\mathbb{R}^{d \times d}$, the induced norm is given by $||x||_{\mathbf{M}} = \sqrt{x\mathbf{M}x}$. For two symmetric matrices $\mathbf{M_1}$ and $\mathbf{M_2}$, $\mathbf{M_1} \succeq \mathbf{M_2}$ means that $\mathbf{M_1}-\mathbf{M_2}$ is positive semi-definite. For any positive integer $n$, $[n]\coloneqq \{1,2,3,\cdots,n\}$. $\Imatrixd$ denotes an identity matrix of dimension $d\times d$. The platform (i.e. the learner) is referred using the pronouns she/her/hers.

\subsection{Model setting}\label{sec: model setting}

\paragraph{Rewards Model:} At every round $t$, the platform (learner) is presented with set $\mathcal{N}$ of distinct items, indexed by $i\,\in\,[N]$ and their attribute vectors (contexts): $\{x_{t,i}\}_{i=1}^{N}$ such that $\forall \, i\,\in [N],\,x_{t,i}\,\in\,\mathbb{R}^d$, where $N=|\mathcal{N}|$ is the cardinality of set $\mathcal{N}$. The platform then selects an assortment $\mathcal{Q}_t \subset \mathcal{N}$ and the interacting consumer (environment) offers the reward $r_t$ to the platform. The assortments have a cardinality of at most $K$, i.e. $|\mathcal{Q}_t|\leq K$. The platform's decision is based on the entire history of interaction. The history is represented by the filtration set $\mathcal{F}_t\coloneqq\{\mathcal{F}_0,\sigma(\{\{x_{s,i}\}_{i= 1}^N,\mathcal{Q}_s\}_{s=1}^{t-1})\}$,\footnote{$\sigma(\{\cdot\})$ denotes the $\sigma$-algebra set over the sequence $\{\cdot\}$.} where $\mathcal{F}_0$ is any prior information available to the platform. The interaction lasts for $t=1,2,\cdots,T$ rounds. Conditioned on $\mathcal{F}_t$, the reward $r_t$ is a binary vector such that $r_t\,\in\,\{0,1\}^{N}$ and the vector $\{r_{t,i}\}_{i\in\mathcal{Q}_t}$ follows a multinomial distribution. We have $r_{t,i}=0,\forall \,i\,\notin\,\mathcal{Q}_t$.  Specifically, the probability that $r_{t,i} = 1,\forall\,i\,\in\,\mathcal{Q}_t$ is given by the softmax function:
\begin{align}\label{eq: rewards eq}
\mathbb{P}(r_{t,i)=1 | \mathcal{Q}_t,\mathcal{F}_t } = \mu_{i}(\mathcal{Q}_t,\theta_*) \coloneqq \frac{\exp(x_{t,i}^\top\theta_*)}{1+\sum_{j\in\mathcal{Q}_t}\exp(x_{t,j}^\top\theta_*)},
\end{align}
where $\theta_*$ is an unknown time-invariant parameter. The numeral $1$ in the denominator accounts for the case when the consumer purchases none of the items in the assortment. By definition, $\sum_{i\in\mathcal{Q}_t} r_{t,i} \leq 1$, i.e., $r_t$ is multinomial with a single trial. Also, the expected revenue due to the assortment\footnote{Each item $i$ is also associated with a price (or revenue) parameter, $p_{t,i}$ for round $t$. We assume $p_{t,i}=1$ for all items and rounds for an uncluttered exposition of results. If $p_{t,i}$ is not $1$, then it features as a fixed factor in the definition of $\mu_i(\cdot)$ and the analysis exactly follows as that presented here $p_{t,i}=1$ for all rounds and items.} $\mathcal{Q}_t$ is given by:
\begin{equation}\label{eq: total revenue}
\mu(\mathcal{Q}_t,\theta_*)\coloneqq\sum_{i\in\mathcal{Q}_t}\mu_{i}(\mathcal{Q}_t,\theta_*).
\end{equation}
Also, $\{x_{t,i}\}$ may vary adversarially in each round in our model, unlike in~\cite{li2017provably}, where the attribute vectors are assumed to be drawn from an unknown i.i.d. distribution. When $K=1$, the above model reduces to the case of the logistic bandit.  

\paragraph{Choice Modeling Perspective:} Eq~\ref{eq: rewards eq} can be considered from a discrete choice modeling viewpoint, where the platform presents an assortment of items to a user, and the user selects at most one item from this assortment. In this interpretation, the probability of choosing an item $i$ is given by $\Mmodel_i(\mathcal{Q}_t,\theta_*)$. Likewise, the probability of the user not selecting any item is given by: $\nicefrac{1}{(1+\sum_{j\in\mathcal{Q}_t}\exp(x_{t,j}^\top\theta_*))}$. The platform is motivated to offer such an assortment that the user's propensity to make a successful selection is high.

\paragraph{Regret:} The platform does not know the value of $\theta_*$. Our learning algorithm~\algotwo{} (see Algorithm~\ref{alg: ML UCB}) sequentially makes the assortment selection decisions, $\mathcal{Q}_1,\mathcal{Q}_2,\cdots, \mathcal{Q}_T$ so that the cumulative expected revenue $\sum_{t=1}^T\mu(\mathcal{Q}_t,\theta_*)$ is high. Its performance is quantified by \textit{pseudo-regret}, which is the gap between the expected revenue generated by the algorithm and that of the optimal assortments in hindsight. The learning goal is to minimize the cumulative pseudo-regret up to time $T$, defined as:
\begin{equation}\label{eq: regret expression}
    \mathbf{R}_T \coloneqq \sum_{t=1}^T [\Mmodel(\mathcal{Q}_t^*,\theta_*)-\Mmodel(\mathcal{Q}_t,\theta_*)],
\end{equation}
where $\mathcal{Q}_t^*$ is the offline optimal assortment at round $t$ under full information of $\theta_*$, defined as:
$
\mathcal{Q}_t^* \coloneqq \argmax_{\mathcal{Q}\subset \mathcal{N}}\Mmodel(\mathcal{Q},\theta_*).
$

As in the case of contextual linear bandits~\cite{abbasi2011improved,chu2011contextual}, the emphasis here is to make good sequential decisions while tracking the true parameter $\theta_*$ with a close estimate $\hat{\theta}_t$ (see Section~\ref{sec: MLE main}). Our algorithm (like others) does not necessarily \emph{improve} the estimate at each round. However, it ensures that $\theta_*$ is always within a confidence interval of the estimate of $\theta_*$ (with high probability) and the future analysis demonstrates that the aggregate prediction error over all $T$ rounds is bounded.

Our model is fairly general, as the contextual information $x_{t,i}$ may be used to model combined information of the item $i$ in the set $\mathcal{N}$ and the user at round $t$. Suppose the user at round $t$ is represented by a vector $v_t$ and the item $i$ has attribute vector as $w_{t,i}$, then $x_{t,i} = \text{vec}(v_tw_{t,i}^\top)$ (vectorized outer product of $v_t$ and $w_{t,i}$). We assume that the platform knows the interaction horizon $T$.\\ \textbf{Additional notations:} $\designQt$ denotes a design matrix whose columns are the attribute vectors ($x_{t,i}$) of the items in the assortment $\mathcal{Q}_t$. Also, we now denote $\mu(\mathcal{Q}_t,\theta_*)$ as $\mu(\designQt^\top\theta_*)$ to signify that $\mu(\mathcal{Q}_t,\theta_*) : \mathbb{R}^{|\mathcal{Q}_t|}\to \mathbb{R}$.

\subsection{Assumptions}\label{sec: assumptions main}
Following~\cite{filippi2010parametric,li2017provably,oh2019thompson,faury2020improved}, we introduce the following assumptions on the problem structure. 
\begin{assumption}[Bounded parameters]\label{assm: bounded parameters assumption}
$\theta_*\,\in\,\Theta$, where $\Theta$ is a compact subset of $\mathbb{R}^d$. $S\coloneqq \max_{\theta\in\Theta}||\theta||_2$ is known to the learner. Further, $||x_{t,i}||_2\leq 1$ for all values of $t$ and $i$.
\end{assumption}
This assumption simplifies analysis and removes scaling constants from the equations.

\begin{assumption}\label{assm: kappa assumption}
There exists $\kappa > 0$ such that for every item $i\,\in\,\mathcal{Q}_t$ and for any $\mathcal{Q}_t\subset \mathcal{N}$ and all rounds $t$:
$$
\inf_{\mathcal{Q}_t\subset \mathcal{N},\theta \in \mathbb{R}^d} \Mmodel_i(\designQt^\top\theta)(1-\Mmodel_i(\designQt^\top\theta))\geq \frac{1}{\kappa}. 
$$
\end{assumption}
Note that $\Mmodel_i(\designQt^\top\theta)(1-\Mmodel_i(\designQt^\top\theta))$ denotes the derivative of the softmax function along the $i_{th}$ direction. This assumption is necessary from the likelihood theory~\cite{lehmann2006theory} as it ensures that the fisher matrix for $\theta_*$ estimation is invertible for all possible input instances. We refer to~\cite{oh2019thompson} for a detailed discussion in this regard. We denote $L$ and $M$ as the upper bounds on the first and second derivatives of the softmax function along any component, respectively. We have $L,M\leq 1$~\citep{gao2017properties} for all problem instances.

\subsection{Maximum likelihood estimate}\label{sec: MLE main}
\algotwo, described in Algorithm~\ref{alg: ML UCB}, uses a regularized maximum likelihood estimator to compute an estimate $\hat{\theta}_t$ of $\theta_*$. Since $\{r_{t,i}\}_{i\in\mathcal{Q}_t}$ follows a multinomial distribution, the regularized log-likelihood (negative cross entropy loss) function, till the $(t-1)_{th}$ round, under parameter $\theta$ could be written as:
\begin{align}\label{eq: log liklehood main}
\mathcal{L}_t^{\lambda_t}(\theta) = \sum_{s=1}^{t-1}\sum_{i\in \mathcal{Q}_s} r_{s,i}\log(\mu_i(\designQs^\top \theta)) - \frac{\lambda_t}{2}||\theta||_2^2,
\end{align}
$\mathcal{L}_t^{\lambda_t}(\theta)$ is concave in $\theta$ for $\lambda_t >0$, and the maximum likelihood estimator is given by calculating the critical point of $\mathcal{L}_t^{\lambda_t}(\theta)$. Setting $\nabla_{\theta}\mathcal{L}_t^{\lambda_t}(\theta) = 0$, we get $\hat{\theta}_t$ as the solution of:
\begin{equation}\label{eq: MLE estimate}
    \sum_{s=1}^{t-1}\sumInAssortS [\mu_i(\designQs^\top \hat{\theta}_t) - r_{t,i}]\xsi + \lambda_t\hat{\theta}_t = 0. 
\end{equation}
For future analysis we also define
\begin{align}
g_t(\theta) \coloneqq \sum_{s=1}^{t-1}\sumInAssortS \mu_i(\designQs^\top \theta) \xsi + \lambda_t\theta,\quad 
g_t(\hat{\theta}_t) \coloneqq \sum_{s=1}^{t-1}\sumInAssortS r_{s,i}\xsi.\label{eq: gt main}
\end{align}
At the start of the interaction, when no contexts have been observed, $\hat{\theta}_t$ is well-defined by Eq~(\ref{eq: MLE estimate}) when $\lambda_t>0$. Therefore, the regularization parameter $\lambda_t$ makes \algotwo{} burn-in period free, in contrast to some previous works, e.g.~\cite{filippi2010parametric}.

\subsection{Confidence sets}\label{sec: confidence set main}
Algorithm~\ref{alg: ML UCB} follows the template of \emph{in the face of uncertainty (OFU)} strategies~\citep{auer2002finite,filippi2010parametric,faury2020improved}. Technical analysis of \emph{OFU} algorithms relies on two key factors: the design of the confidence set and the ease of choosing an action using the confidence set.

In Section~\ref{sec: main results}, we derive $E_t(\delta)$ (defined below) as the confidence set on $\theta_*$ such that $\theta_*\in C_t(\delta),\,\forall t$ with probability at least $1-\delta$ (randomness is over user choices). $E_t(\delta)$ used for making decisions at each round (see Eq~(\ref{eq: algo decision})) by \algotwo{} in Algorithm~\ref{alg: ML UCB}:
\begin{equation}\label{eq: convex relaxation set prev}
 E_t(\delta) \coloneqq \{ \theta\in\Theta, \, \mathcal{L}^{\lambda_t}_t(\theta)- \mathcal{L}^{\lambda_t}_t(\hat{\theta}_t) \leq \beta^2_t(\delta) \},
\end{equation}
where $\beta_t(\delta) \coloneqq \gamma_t(\delta)+\frac{\gamma_t^2(\delta)}{\lambda_t}$,
and
\begin{align}\label{eq: gamma value main}
\gamma_t(\delta) \coloneqq& \frac{\sqrt{\lambda_t}}{2} +  \frac{2}{\sqrt{\lambda_t}}\log(\frac{(\lambda_t +L Kt/d)^{d/2}\lambda_t^{-d/2}}{\delta}) + \frac{2d}{\sqrt{\lambda_t}}\log(2).
\end{align}

A confidence set similar to $E_t(\delta)$ in Eq~(\ref{eq: convex relaxation set prev}) was recently proposed in~\cite{abeille2020instance} for the simpler logisitic bandit setting. Here, we extend its construction to the MNL setting. The set $E_t(\delta)$ is convex since the log-loss function is convex. This makes the decision step in Eq~(\ref{eq: algo decision}) a constraint convex optimization problem. However, it is difficult to prove bounds directly with $E_t(\delta)$. Therefore we leverage a result in~\cite{faury2020improved}, where the authors proposed a new Bernstein-like tail inequality for self-normalized vectorial martingales (see Appendix A.1), to derive another confidence set on $\theta_*$:
\begin{equation}\label{eq: confidence set def main}
    C_t(\delta) \coloneqq \{ \theta\in\Theta, \, ||g_t(\theta) - g_t(\hat{\theta}_t)||_{\mathbf{H}_t^{-1}(\theta)} \leq \gamma_t(\delta) \}.
\end{equation}
where 
\begin{equation}\label{eq: Hm norm main}
\mathbf{H}_t(\theta_1) \coloneqq \sum_{s=1}^{t-1}\sumInAssortS\Mmodeldot_i(\designQs^\top\theta_1)\xsi \xsi^\top+\lambda_t\Imatrixd.
\end{equation}
$\Mmodeldot_i(\cdot)$ is the partial derivative of $\Mmodel_i$ in the direction of the $i_{th}$ component of the assortment and $\gamma_t(\delta)$ is defined in Eq~(\ref{eq: gamma value main}). The value of $\gamma_t(\delta)$ is an outcome of the concentration result of~\cite{faury2020improved}. As a consequence of this concentration, we have $\theta_*\,\in\,C_t(\delta)$ with probability at least $1-\delta$ (randomness is over user choices). The Bernstein-like concentration inequality used here is similar to Theorem 1 of~\cite{abbasi2011improved} with the difference that we take into account local variance information (hence local curvature information of the reward function) in defining $\mathbf{H}_t$. The above discussion is formalized in Appendix A.1. 

The set $C_t(\delta)$ is non-convex, which follows from the non-linearity of $\mathbf{H}^{-1}_t(\theta)$. We use $C_t(\delta)$ directly to prove regret guarantees. In Section~\ref{sec: convex relaxation main}, we mention how the a convex set $E_t(\delta)$ is related to $C_t(\delta)$ and share many useful properties of $C_t(\delta)$. Till then, to maintain ease of technical flow and to compare it with the previous work~\cite{faury2020improved}, we assume that the algorithm uses $C_t(\delta)$ as the confidence set. We highlight that for the confidence sets, $C_t(\delta)$ and $E_t(\delta)$, Algorithm~\algotwo{} is identical except for the calculation in Eq~(\ref{eq: algo decision}). For later sections we also define the following norm inducing design matrix based on all the contexts observed till time $t-1$:
\begin{equation}\label{eq: vm definition main}
\mathbf{V}_t \coloneqq \sum_{s=1}^{t-1}\sumInAssortS\xsi \xsi^\top+\lambda_t\Imatrixd.
\end{equation}

\section{Algorithm}\label{sec: algorithm main}
At each round $t$, the attribute parameters (contexts) $\{x_{t,1},x_{t,2},\cdots,x_{t,N}\}$ are made available to the algorithm (online platform) \algotwo. The algorithm calculates an estimate of the true parameter $\theta_*$ according to Eq~(\ref{eq: MLE estimate}). The algorithm keeps track of the confidence set $C_t(\delta)$ ($E_t(\delta)$) as defined in Eq~(\ref{eq: confidence set def main}) (Eq~(\ref{eq: convex relaxation set prev}). Let the set $\mathbf{\mathcal{A}}$ contain all feasible assortments of $\mathcal{N}$ with cardinality up to $K$. The algorithm makes the following decision:
\begin{equation}\label{eq: algo decision}
(\mathcal{Q}_t,\theta_t) = \argmax_{A_t \in \mathcal{A}, \theta \in C_t(\delta)} \Mmodel(\mathbf{X}_{A_t}^\top\theta).
\end{equation}
In each round $t$, the reward of the online platform is denoted by the vector $r_t$. Also, the \emph{prediction error} of $\theta$ at $\designQt$, defined as:
\begin{equation}\label{eq: prediction error main}
\PredErr(\designQt,\theta)\coloneqq|\Mmodel(\designQt^\top\theta_*)-\Mmodel(\designQt^\top\theta)|.
\end{equation}
$\PredErr\del{\designQt,\theta}$ represents the difference in perceived rewards due to the inaccuracy in the estimation of the parameter $\theta_*$.
\begin{remark}[Optimistic parameter search]
\algotwo{} enforces optimism via an optimistic parameter search (e.g. in~\cite{abbasi2011improved}), which is in contrast to the use of an exploration bonus as seen in~\cite{faury2020improved,filippi2010parametric}. Optimistic parameter search provides a cleaner description of the learning strategy. In non-linear reward models, both approaches may not follow similar trajectory but may have overlapping analysis styles (see~\cite{filippi2010parametric} for a short discussion).
\end{remark}

\begin{remark}[Tractable decision-making]
In Section~\ref{sec: convex relaxation main}, we show that the decision problem of Eq~(\ref{eq: algo decision}) can be relaxed to an convex optimization problem by using a convex set $E_t(\delta)$, instead of $C_t(\delta)$, while keeping the regret performance of Algorithm~\ref{alg: ML UCB} intact up to constant factors.
\end{remark}

\begin{algorithm}[ht]
	\SetAlgoLined
	\textbf{Input:} regularization parameters: $\lambda_t,\forall\, t\, \in \,[T]$, $N$ distinct items: $\mathcal{N}$, $K$\\
	\For{$t\,\geq\,1$ }{
	    {\bf Given:} Set $\{x_{t,1},x_{t,2},\cdots,x_{t,N}\}$ of $d$-dimensional parameters.\\
	    Estimate $\hat{\theta}_t$ according to Eq~(\ref{eq: MLE estimate}).\\
	    Construct $C_t(\delta)$ as defined in Eq~(\ref{eq: confidence set def main}).\\
        Construct the set $\mathbf{\mathcal{A}}$ of all feasible assortments of $\mathcal{N}$ with cardinality upto $K$.\\
        Play $(\mathcal{Q}_t,\theta_t) = \argmax_{A_t \in \mathcal{A}, \theta \in C_t(\delta)}\Mmodel(\mathbf{X}_{A_t}^\top\theta)$.\\
        Observe rewards $\mathbf{r}_t$
	}
	\caption{\algotwo}\label{alg: ML UCB}
\end{algorithm}

\section{Main results}\label{sec: main results}
We present a regret upper bound for the \algotwo{} algorithm in Theorem~\ref{thm: regret upper bounds main}.
\begin{thm}\label{thm: regret upper bounds main}
With probability at least $1-\delta$ over the randomness of user choices:
\begin{align*}
    \mathbf{R}_T \leq& C_1\gamma_T(\delta)\sqrt{2d\log(1+\frac{L KT}{d\lambda_T})T}+ C_2\kappa \gamma_T(\delta)^2d\log(1+\frac{KT}{d\lambda_T}),
\end{align*}
where the constants are given as $C_1 = (4+8S)$, $C_2= 4(4+8S)^{\nicefrac{3}{2}}M$, and $\gamma_T(\delta)$ is given by Eq~(\ref{eq: gamma value main}).
\end{thm}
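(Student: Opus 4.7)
My plan is to follow the optimistic-parameter-search template of generalized linear bandits, adapted for MNL rewards, using a second-order Taylor expansion of the softmax that isolates the $\kappa$ dependence into an additive lower-order term. I first condition on the high-probability event $\mathcal{E}(\delta) := \{\theta_* \in C_t(\delta)\text{ for all }t \leq T\}$, which holds with probability at least $1-\delta$ by the Bernstein-style self-normalized martingale bound underlying Eq~(\ref{eq: confidence set def main}) (Appendix A.1). On $\mathcal{E}(\delta)$, since $(\mathcal{Q}_t^*, \theta_*)$ is feasible in the argmax of Eq~(\ref{eq: algo decision}), the algorithm is optimistic: $\Mmodel(\designQt^\top \theta_t) \geq \Mmodel(\designQtopt^\top \theta_*)$. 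The per-round regret is therefore bounded by the prediction error,
\[
\Mmodel(\designQtopt^\top \theta_*) - \Mmodel(\designQt^\top \theta_*) \leq \Mmodel(\designQt^\top \theta_t) - \Mmodel(\designQt^\top \theta_*) \leq \PredErr(\designQt, \theta_t),
\]
and the task reduces to controlling $\sum_{t=1}^T \PredErr(\designQt, \theta_t)$.

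Next, I Taylor-expand $\Mmodel(\designQt^\top \cdot)$ around $\theta_*$ and invoke the self-concordance-like inequality for the MNL softmax (in the spirit of \cite{zhang2015disco,faury2020improved}) to absorb the Hessian remainder into a clean quadratic form, yielding a decomposition of the shape
\[
\PredErr(\designQt,\theta_t) \leq \sum_{i\in \mathcal{Q}_t}\Mmodeldot_i(\designQt^\top\theta_*)\,|x_{t,i}^\top(\theta_t-\theta_*)| + M\sum_{i\in \mathcal{Q}_t}\del{x_{t,i}^\top(\theta_t-\theta_*)}^2.
\]
To convert $|x_{t,i}^\top(\theta_t-\theta_*)|$ into the local Hessian norm, I use that both $\theta_*,\theta_t \in C_t(\delta)$: the mean-value identity $g_t(\theta_t) - g_t(\theta_*) = \bar{\mathbf{H}}_t(\theta_t - \theta_*)$ combined with the triangle inequality applied to the defining inequality of $C_t(\delta)$ at $\theta_*$ and $\theta_t$ gives $\|g_t(\theta_t) - g_t(\theta_*)\|_{\mathbf{H}_t^{-1}(\theta_*)} \leq 2\gamma_t(\delta)$. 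A second use of self-concordance replaces $\bar{\mathbf{H}}_t$ by $\mathbf{H}_t(\theta_*)$ up to a factor of order $(1+S)$, producing $|x_{t,i}^\top(\theta_t-\theta_*)| \leq (4+8S)\gamma_t(\delta)\|x_{t,i}\|_{\mathbf{H}_t^{-1}(\theta_*)}$.

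Plugging back, the leading contribution is $\sum_t\gamma_t(\delta)\sum_{i\in\mathcal{Q}_t}\Mmodeldot_i(\designQt^\top\theta_*)\|x_{t,i}\|_{\mathbf{H}_t^{-1}(\theta_*)}$. Cauchy--Schwarz across $t$ together with the elliptic-potential lemma applied to the $\mathbf{H}_t$-weighted sum $\sum_t\sum_i\Mmodeldot_i(\designQt^\top\theta_*)\|x_{t,i}\|^2_{\mathbf{H}_t^{-1}(\theta_*)} \leq 2d\log(1 + LKT/(d\lambda_T))$ produces the leading $C_1\gamma_T(\delta)\sqrt{2dT\log(1+LKT/(d\lambda_T))}$ term. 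For the quadratic remainder, the lower bound $\Mmodeldot_i \geq 1/\kappa$ from Assumption~\ref{assm: kappa assumption} permits passing from $\mathbf{V}_t$ to $\mathbf{H}_t$, and the elliptic-potential lemma on $\mathbf{V}_t$ yields the additive $C_2\kappa\gamma_T(\delta)^2 d\log(1 + KT/(d\lambda_T))$ term.

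The main obstacle I anticipate is the multi-output (assortment-level) character of the MNL reward: the scalar self-concordance inequality of \cite{faury2020improved} must be lifted to a joint statement over the vector softmax $\Mmodel(\designQt^\top\cdot)$, so that both the Hessian-transfer step (replacing $\bar{\mathbf{H}}_t$ by $\mathbf{H}_t(\theta_*)$) and the quadratic-remainder bound retain the correct dependence on $K$. A second delicate point is the precise bookkeeping of the constants $L$, $M$, $S$ through the self-concordance factors and the elliptic-potential sums so that they aggregate into exactly $C_1=4+8S$ and $C_2 = 4(4+8S)^{3/2}M$ rather than loose numerical factors.
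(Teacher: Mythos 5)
Your proposal follows essentially the same route as the paper's proof: optimism over $C_t(\delta)$ reduces regret to the summed prediction error, a second-order expansion with the MNL self-concordance relation splits each round into a local-curvature first-order term plus an $M$-weighted quadratic remainder, the confidence set together with the $\mathbf{G}_t$--$\mathbf{H}_t$ comparison controls $\|\theta_t-\theta_*\|_{\mathbf{H}_t(\theta_*)}$ by $O((1+S)\gamma_t(\delta))$, and the two elliptical-potential lemmas (on $\mathbf{J}_t$ and on $\mathbf{V}_t$, the latter after invoking $\mathbf{H}_t \succeq \kappa^{-1}\mathbf{V}_t$) yield the two stated terms. The only caveat is that your triangle-inequality step must route through the $\mathbf{G}_t^{-1}(\theta_t,\theta_*)$ norm (as in the paper's Lemma~\ref{lem: projection with Gt and Ht relation main}) because the confidence set's defining norm at $\theta_t$ is $\mathbf{H}_t^{-1}(\theta_t)$ rather than $\mathbf{H}_t^{-1}(\theta_*)$, but you already flag the requisite self-concordance norm transfer, so this is a matter of bookkeeping rather than a gap.
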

The formal proof is deferred to the technical Appendix, in this section we discuss the key technical ideas leading to this result. The order dependence on the model parameters is made explicit by the following corollary.
\begin{corol}\label{corol: simplified regret result}
Setting the regularization parameter $\lambda_T = \mathrm{O}(d \log(K T))$, where $K$ is the maximum cardinality of the assortments to be selected, makes $\gamma_T(\delta) = \mathrm{O}(d^{\nicefrac{1}{2}}\log^{\nicefrac{1}{2}}(KT))$. The regret upper bound is given by $\mathbf{R}_T = \mathrm{O}(d \sqrt{T}\log(KT)+\kappa d^2 \log^2(KT)) $.
\end{corol}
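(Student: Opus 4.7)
The plan is to treat this as a parameter-tuning calculation: substitute the suggested $\lambda_T$ into the expression for $\gamma_T(\delta)$ from Eq~(\ref{eq: gamma value main}) and then into the regret bound from Theorem~\ref{thm: regret upper bounds main}, simplifying each term and collecting the dominant orders.

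\textbf{Step 1: bound $\gamma_T(\delta)$.} With $\lambda_T = c\, d \log(KT)$ for a suitable absolute constant $c$, I would bound the three summands in Eq~(\ref{eq: gamma value main}) separately. The first, $\sqrt{\lambda_T}/2$, is immediately $\mathrm{O}(\sqrt{d \log(KT)})$. For the middle term, rewrite the argument of the logarithm as $\delta^{-1}\bigl(1 + LKT/(d\lambda_T)\bigr)^{d/2}$, so the logarithm equals $\tfrac{d}{2}\log\!\bigl(1 + LKT/(d\lambda_T)\bigr) + \log(1/\delta)$. Substituting $\lambda_T = c\,d\log(KT)$ gives $LKT/(d\lambda_T) = LKT/(c d^2 \log(KT))$, whose logarithm is $\mathrm{O}(\log(KT))$ (using $L \leq 1$ and hiding $\log\log$ factors). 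Multiplying by the prefactor $2/\sqrt{\lambda_T}$ yields $\mathrm{O}(d\log(KT)/\sqrt{d\log(KT)}) = \mathrm{O}(\sqrt{d\log(KT)})$. The third summand $(2d/\sqrt{\lambda_T})\log 2 = \mathrm{O}(\sqrt{d/\log(KT)})$ is of lower order. Combining, $\gamma_T(\delta) = \mathrm{O}(d^{1/2}\log^{1/2}(KT))$, as claimed.

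\textbf{Step 2: plug into the regret bound.} In the leading term of Theorem~\ref{thm: regret upper bounds main}, the factor $\log(1 + LKT/(d\lambda_T))$ is again $\mathrm{O}(\log(KT))$ by the same computation, so
\[
\gamma_T(\delta)\sqrt{2d\log\!\bigl(1 + LKT/(d\lambda_T)\bigr)\,T} = \mathrm{O}\!\bigl(d^{1/2}\log^{1/2}(KT)\bigr)\cdot \mathrm{O}\!\bigl(\sqrt{d\log(KT)\,T}\bigr) = \mathrm{O}\!\bigl(d\sqrt{T}\log(KT)\bigr).
\]
For the second (additive) term, $\gamma_T(\delta)^2 = \mathrm{O}(d\log(KT))$ and the $\log$ factor contributes another $\mathrm{O}(\log(KT))$, yielding $\mathrm{O}(\kappa d^2 \log^2(KT))$. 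Adding the two pieces gives the advertised bound.

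\textbf{Main obstacle (such as it is).} There is no deep step here; the only subtlety is making sure the choice of $\lambda_T$ is large enough to kill the $\sqrt{\lambda_T}$ in the denominator inside the logarithmic term, but not so large that $\sqrt{\lambda_T}/2$ itself dominates. Balancing these two contributions is exactly what motivates the choice $\lambda_T \asymp d\log(KT)$: the dominant contribution to $\gamma_T(\delta)$ becomes $\Theta(\sqrt{d\log(KT)})$ from either the first or the second summand, and no factor of $\kappa$ or $T$ enters the leading order. A small care point is verifying that $\lambda_T \geq 1$ (required in Section~\ref{sec: MLE main} for the MLE to be well-defined without a burn-in period), which holds for all $T$ large enough since $d\log(KT) \geq 1$.
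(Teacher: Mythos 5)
Your proposal is correct and follows essentially the same route as the paper, which obtains the corollary by directly substituting $\lambda_T = d\log(KT)$ into Eq~(\ref{eq: gamma value main}) and Theorem~\ref{thm: regret upper bounds main} (the paper states this substitution in one line at the end of the proof of Theorem~\ref{thm: regret upper bounds appendix}). Your version simply spells out the term-by-term bookkeeping, including the rewriting of the logarithm as $\tfrac{d}{2}\log\bigl(1+LKT/(d\lambda_T)\bigr)+\log(1/\delta)$ and the check that $\lambda_T\geq 1$, which the paper leaves implicit.
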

Recall the expression for cumulative regret
\begin{align*}
    \mathbf{R}_T &= \sum_{t=1}^T [\Mmodel(\designQtopt^\top\theta_*)-\Mmodel(\designQt^\top\theta_*)]\\
     &= \sum_{t=1}^T \underbrace{[\Mmodel(\designQtopt^\top\theta_*)-\Mmodel(\designQt^\top\theta_t)]}_{\text{pessimism}} + \sum_{t=1}^T \underbrace{[\Mmodel(\designQt^\top\theta_t)-\Mmodel(\designQt^\top\theta_*)]}_{\text{prediction error}},
\end{align*}
where \emph{pessimism} is the additive inverse of the optimism (difference between the payoffs under true parameters and those estimated by \algotwo{}). Due to optimistic decision-making and the fact that $\theta_*\in C_t(\delta)$ (see Eq~(\ref{eq: algo decision})), \emph{pessimism} is non-positive, for all rounds. Thus, the regret is upper bounded by the sum of the prediction error for $T$ rounds. In Section~\ref{sec: bounds on prediction error main} we derive an the expression for prediction error upper bound for a single round $t$. We also contrast with the previous works~\cite{filippi2010parametric,li2017provably,oh2021multinomial} and point out specific technical differences which allow us to use Bernstein-like tail concentration inequality and therefore, achieve stronger regret guarantees. In Section~\ref{sec: regret calculation main}, we describe the additional steps leading to the statement of Theorem~\ref{thm: regret upper bounds main}. The style of the arguments is simpler and shorter than that in~\cite{faury2020improved}. Finally, in Section~\ref{sec: convex relaxation main}, we discuss the relationship between two confidence sets $C_t(\delta)$ and $E_t(\delta)$ and show that even using $E_t(\delta)$ in place of $C_t(\delta)$, we get the regret upper bounds with same parameter dependence as in Corollary~\ref{corol: simplified regret result}. 
Lemma~\ref{lem: prediction error single round softmax main} gives the expression for an upper bound on the prediction error.
\subsection{Bounds on prediction error}\label{sec: bounds on prediction error main}
\begin{lemma}\label{lem: prediction error single round softmax main}
For $\theta_t\in C_t(\delta)$ (see Eq~(\ref{eq: algo decision})) with probability at least $1-\delta$:
\begin{align}\label{eq: prediction error main 55}
\PredErr(\designQt,\theta_t)
\leq & (2+4S)\gamma_t(\delta)\sumInAssortT\Mmodeldot_i(\designQt^\top\theta_*)||x_{t,i}||_{\mathbf{H}_t^{-1}(\theta_*)}\nonumber\\ &+ 4\kappa(1+2S)^2M \gamma_t(\delta)^2\sumInAssortS||x_{t,i}||^2_{\mathbf{V}_t^{-1}},
\end{align}
where $\mathbf{V}_t^{-1}$ is given by Eq~(\ref{eq: vm definition main}).
\end{lemma}
The detailed proof is provided in~\ref{sec: bounds on prediction error apendix}. Here we develop the main ideas leading to this result and develop an analytical flow which will be re-used while working with convex confidence set $E_t(\delta)$ in Section~\ref{sec: convex relaxation main}. In the previous works~\cite{filippi2010parametric,li2017provably,oh2021multinomial}, global upper and lower bounds of the derivative of the link function (here softmax) are employed early in the analysis, leading to loss of local information carried by the MLE estimate $\theta_t$. In those previous works the first step was to upper bound the prediction error by the Lipschitz constant (which is a global property) of the softmax (or sigmoid for the logistic bandit case) function, as:
{\small \begin{equation}\label{eq: prediction error main 2}
    |\Mmodel(\designQt^\top\theta_*)-\Mmodel(\designQt^\top\theta_t)|\leq L|\designQt^\top(\theta_*-\theta_t)|.
\end{equation}}
For building intuition, assume that $\designQt^\top\theta_*$ lies on ``flatter'' region of $\Mmodel(\cdot)$, then Eq~(\ref{eq: prediction error main 2}) is a loose upper bound. 

Next we show how using a global lower bound in form of $\kappa$ (see Assumption~\ref{assm: kappa assumption}) early in the analysis in the works~\cite{filippi2010parametric,li2017provably,oh2021multinomial} lead to loose prediction error upper bound. For this we first introduce a new notation:
\begin{align}\label{eq: notation alpha main}
    &\alpha_i(\designQt,\theta_t,\theta_*)\xti^\top(\theta_*-\theta_t)\coloneqq \mu_{i}(\designQt^\top\theta_*)-\mu_{i}(\designQt^\top\theta_t). 
\end{align}
We also define $\mathbf{G}_t(\theta_t,\theta_*) \coloneqq \sum_{s=1}^{t-1}\sumInAssortS\alpha_i(\designQs,\theta_t,\theta_*)\xsi \xsi^\top+\lambda\Imatrixd.$ From Eq~(\ref{eq: gt main}), we obtain (see~\ref{sec: Local Information Preserving Norm appendix} for details of this derivation):
{\small \begin{align}\label{eq: proof outline 1}
    g(\theta_*) -g(\theta_t) 
    =\mathbf{G}_t(\theta_t,\theta_*)(\theta_*-\theta_t).
\end{align}}
From Assumption~\ref{assm: kappa assumption}, $\mathbf{G}_t(\theta_t,\theta_*)$ is a positive definite matrix for $\lambda > 0$ and therefore can be used to define a norm. Using Cauchy-Schwarz inequality with Eq~(\ref{eq: proof outline 1}) simplifies the prediction error as:
{\small \begin{align}\label{eq: prediction error eq main 1 new}
&\PredErr(\designQt,\theta_t) \leq\big|\sumInAssortT\alpha_i(\designQt,\theta_t,\theta_*)||x_{t,i}||_{\mathbf{G}_t^{-1}(\theta_t,\theta_*)}||\theta_*-\theta_t||_{\mathbf{G}_t(\theta_t,\theta_*)}\big|
\end{align}}
The previous literature~\cite{filippi2010parametric,oh2021multinomial} has utilized $\mathbf{G}_t^{-1}(\theta_t,\theta_*) \succeq \kappa^{-1}\mathbf{V}_t$ and upper bounded $\alpha_i(\designQt,\theta_t,\theta_*)$ by Lipschitz constant (directly at this stage), thereby incurring loose regret bounds. Instead, here we work with the norm induced by $\mathbf{H}_t(\theta_*)$ and retain the location information in $\alpha_i(\designQt,\theta_t,\theta_*)$. 
\begin{align}\label{eq: prediction error eq main 1}
&\PredErr(\designQt,\theta_t) \leq\big|\sumInAssortT\alpha_i(\designQt,\theta_t,\theta_*)||x_{t,i}||_{\mathbf{H}_t^{-1}(\theta_*)}||\theta_*-\theta_t||_{\mathbf{H}_t(\theta_*)}\big|
\end{align}
It is not straight-forward to bound $||\theta_*-\theta_t||_{\mathbf{H}_t(\theta_*)}$, we extend the self-concordance style relations from~\cite{faury2020improved} for the multinomial logit function which allow us to relate $\mathbf{G}_t^{-1}(\theta_t,\theta_*)$ and $\mathbf{H}_t^{-1}(\theta_t)$ (or $\mathbf{H}_t^{-1}(\theta_*)$) to develop a bound on $||\theta_*-\theta||_{\mathbf{H}_t(\theta_*)}$.
\begin{lemma}\label{lem: relationship between Gt and Ht main}
For all $\theta_1,\theta_2\,\in\,\Theta$, the following inequalities hold:
\begin{align*}
\mathbf{G}_t(\theta_1,\theta_2) \succeq (1+2S)^{-1}\mathbf{H}_t(\theta_1),\quad\mathbf{G}_t(\theta_1,\theta_2) \succeq (1+2S)^{-1}\mathbf{H}_t(\theta_2)
\end{align*}
\end{lemma}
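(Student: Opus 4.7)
The plan is to reduce the claimed Loewner-order matrix inequality to a scalar inequality on each summand in the definitions of $\mathbf{G}_t$ and $\mathbf{H}_t$, and then invoke a self-concordance style bound for the softmax. By the symmetric structure of the statement in the roles of $\theta_1$ and $\theta_2$, it suffices to prove the first inequality $\mathbf{G}_t(\theta_1,\theta_2) \succeq (1+2S)^{-1}\mathbf{H}_t(\theta_1)$; the second one follows by relabeling.

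First I would express $\alpha_i(\mathbf{X}_{\mathcal{Q}_s},\theta_1,\theta_2)$, defined through Eq.~(\ref{eq: notation alpha main}), as a line integral along the segment $\theta_v = (1-v)\theta_1 + v\theta_2$, $v \in [0,1]$, by applying the fundamental theorem of calculus to the scalar map $v \mapsto \mu_i(\mathbf{X}_{\mathcal{Q}_s}^\top \theta_v)$ and dividing by $x_{s,i}^\top(\theta_2-\theta_1)$. Next I would establish a scalar self-concordance bound, analogous to Lemma~9 of \cite{faury2020improved}, for the coordinate-wise derivative of the softmax: since $|\mathrm{d}\log\dot\mu_i / \mathrm{d}z| \leq 1$ along the $i$-th direction, one obtains
$$\dot\mu_i(\mathbf{X}_{\mathcal{Q}_s}^\top \theta_v) \;\geq\; \frac{\dot\mu_i(\mathbf{X}_{\mathcal{Q}_s}^\top \theta_1)}{1 + |x_{s,i}^\top(\theta_2-\theta_1)|}, \qquad v \in [0,1].$$
Combining this with Assumption~\ref{assm: bounded parameters assumption}, which gives $|x_{s,i}^\top(\theta_2-\theta_1)| \leq \|x_{s,i}\|_2\,\|\theta_2-\theta_1\|_2 \leq 2S$, and integrating over $v\in[0,1]$ yields the key scalar bound
$$\alpha_i(\mathbf{X}_{\mathcal{Q}_s},\theta_1,\theta_2) \;\geq\; \frac{\dot\mu_i(\mathbf{X}_{\mathcal{Q}_s}^\top\theta_1)}{1+2S}.$$

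To conclude, I would multiply this scalar inequality by the PSD rank-one matrix $x_{s,i} x_{s,i}^\top$, sum over $s\in\{1,\dots,t-1\}$ and $i\in\mathcal{Q}_s$, and observe that $\lambda_t \geq (1+2S)^{-1}\lambda_t$ handles the regularization term. This delivers $\mathbf{G}_t(\theta_1,\theta_2) \succeq (1+2S)^{-1}\mathbf{H}_t(\theta_1)$ in the Loewner order, and the symmetric argument gives the inequality with $\theta_2$ in place of $\theta_1$.

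The main obstacle: unlike the logistic setting of \cite{faury2020improved}, where $\mu$ is a univariate function of $x^\top\theta$, here each $\mu_i$ depends on the entire utility vector $\mathbf{X}_{\mathcal{Q}_s}^\top\theta$, so any direct integral representation of $\alpha_i$ picks up off-diagonal cross-terms of the form $-\mu_i\mu_j$. The main technical work is to extend the self-concordance argument of \cite{faury2020improved} to the multinomial logit in a way that either absorbs or cleanly isolates these cross-term contributions, so that the diagonal-only bound above suffices for the paper's diagonal-only definitions of $\mathbf{G}_t$ and $\mathbf{H}_t$.
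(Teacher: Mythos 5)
Your proposal follows essentially the same route as the paper: represent $\alpha_i$ as an integral of $\dot{\mu}_i$ along the segment between $\theta_1$ and $\theta_2$, invoke the self-concordance relation $|\ddot{\mu}_i|\leq\dot{\mu}_i$, bound $|x_{s,i}^\top(\theta_1-\theta_2)|\leq 2S$ by Cauchy--Schwarz and Assumption~\ref{assm: bounded parameters assumption}, and then sum the resulting termwise scalar bounds against the rank-one matrices $x_{s,i}x_{s,i}^\top$ (the paper does exactly this in Lemma~\ref{lem: application of self concordance softmax} and the proof of Lemma~\ref{lem: relationship between Gt and Ht}). However, one intermediate step as written is false: the \emph{pointwise} inequality $\dot{\mu}_i(\mathbf{X}_{\mathcal{Q}_s}^\top\theta_v)\geq \dot{\mu}_i(\mathbf{X}_{\mathcal{Q}_s}^\top\theta_1)\,(1+|x_{s,i}^\top(\theta_2-\theta_1)|)^{-1}$ cannot hold for all $v\in[0,1]$. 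Self-concordance only gives $\dot{\mu}_i(z)\geq\dot{\mu}_i(z_1)e^{-|z-z_1|}$, and since $e^{-x}\leq(1+x)^{-1}$ this is \emph{weaker} than your claim at $v=1$; e.g.\ in the logistic case with $z_1=0$, $z_2=10$ one has $\dot\mu(10)\approx 4.5\times10^{-5}$ while $\dot\mu(0)/11\approx 0.023$. What is true, and what the paper actually proves, is the \emph{averaged} statement: $\int_0^1\dot{\mu}_i(\cdot)\,d\nu\geq\dot{\mu}_i(z_1)\frac{1-e^{-|z_2-z_1|}}{|z_2-z_1|}\geq\dot{\mu}_i(z_1)(1+|z_2-z_1|)^{-1}$, using $(1-e^{-x})/x\geq(1+x)^{-1}$. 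Since you integrate over $v$ anyway, your final scalar bound $\alpha_i\geq(1+2S)^{-1}\dot{\mu}_i(\mathbf{X}_{\mathcal{Q}_s}^\top\theta_1)$ is correct; you just need to integrate the exponential bound first rather than asserting the pointwise $(1+x)^{-1}$ bound.

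The obstacle you flag at the end is a fair one: writing $\mu_i(\mathbf{X}_{\mathcal{Q}_s}^\top\theta_2)-\mu_i(\mathbf{X}_{\mathcal{Q}_s}^\top\theta_1)$ as an integral of only the diagonal partial $\partial\mu_i/\partial t_i$ ignores the off-diagonal Jacobian entries $-\mu_i\mu_j$ that arise because all $K$ coordinates of $\mathbf{X}_{\mathcal{Q}_s}^\top\theta$ move along the segment. Be aware that the paper's own proof does not resolve this either: Eq.~(\ref{eq: line integral multinomial}) simply equates the segment integral of $\dot{\mu}_i$ with a single-coordinate integral, so your ``main technical work'' remains open in the paper as well and is not something your argument can borrow from it.
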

\begin{lemma}\label{lem: projection with Gt and Ht relation main}
For $\theta_t \in C_t(\delta) $, we have the following relation with probability at least $1-\delta$: $||\theta_t-\theta_*||_{\mathbf{H}_t(\theta_*)} \leq 2(1+2S)\gamma_t(\delta).
$
\end{lemma}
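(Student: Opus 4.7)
The plan is to pivot on the matrix $\mathbf{G}_t(\theta_t,\theta_*)$, which already carries the difference $g_t(\theta_t)-g_t(\theta_*)$ as a linear map, and then convert its induced norm into the $\mathbf{H}_t(\theta_*)$ norm using the spectral relation from Lemma~\ref{lem: relationship between Gt and Ht main}. The key observation is that Lemma~\ref{lem: relationship between Gt and Ht main} yields \emph{both} $\mathbf{G}_t^{-1}\preceq(1+2S)\mathbf{H}_t^{-1}(\theta_t)$ and $\mathbf{G}_t^{-1}\preceq(1+2S)\mathbf{H}_t^{-1}(\theta_*)$, which is exactly what lets us dualise the confidence set certificate (which is stated in $\mathbf{H}_t^{-1}(\theta)$ norm) into a bound in the $\mathbf{G}_t^{-1}$ norm applicable at both $\theta_t$ and $\theta_*$.

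Concretely, I would start from the identity \eqref{eq: proof outline 1}, rewritten as $g_t(\theta_t)-g_t(\theta_*)=\mathbf{G}_t(\theta_t,\theta_*)(\theta_t-\theta_*)$, and pair both sides with $\theta_t-\theta_*$ to get the quadratic form
\begin{equation*}
\|\theta_t-\theta_*\|_{\mathbf{G}_t(\theta_t,\theta_*)}^{2}
\;=\;\langle \theta_t-\theta_*,\,g_t(\theta_t)-g_t(\theta_*)\rangle.
\end{equation*}
Then I would split the right-hand side through $\hat{\theta}_t$ and apply Cauchy–Schwarz in the $\mathbf{G}_t$-norm to both pieces, obtaining
\begin{equation*}
\|\theta_t-\theta_*\|_{\mathbf{G}_t}
\;\le\;
\|g_t(\theta_t)-g_t(\hat{\theta}_t)\|_{\mathbf{G}_t^{-1}}
+\|g_t(\hat{\theta}_t)-g_t(\theta_*)\|_{\mathbf{G}_t^{-1}}.
\end{equation*}

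Now I would use Lemma~\ref{lem: relationship between Gt and Ht main} twice to inflate each $\mathbf{G}_t^{-1}$ norm by $\sqrt{1+2S}$: the first term is bounded by $\sqrt{1+2S}\,\|g_t(\theta_t)-g_t(\hat{\theta}_t)\|_{\mathbf{H}_t^{-1}(\theta_t)}\le\sqrt{1+2S}\,\gamma_t(\delta)$ using $\theta_t\in C_t(\delta)$ from the algorithm's choice \eqref{eq: algo decision}, and the second is bounded by $\sqrt{1+2S}\,\|g_t(\hat{\theta}_t)-g_t(\theta_*)\|_{\mathbf{H}_t^{-1}(\theta_*)}\le\sqrt{1+2S}\,\gamma_t(\delta)$ using $\theta_*\in C_t(\delta)$, which holds with probability at least $1-\delta$ by the Bernstein-style self-normalised concentration referenced in Appendix A.1. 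This gives $\|\theta_t-\theta_*\|_{\mathbf{G}_t}\le 2\sqrt{1+2S}\,\gamma_t(\delta)$. A final application of Lemma~\ref{lem: relationship between Gt and Ht main} in the opposite direction, $\mathbf{H}_t(\theta_*)\preceq(1+2S)\mathbf{G}_t$, yields $\|\theta_t-\theta_*\|_{\mathbf{H}_t(\theta_*)}\le\sqrt{1+2S}\,\|\theta_t-\theta_*\|_{\mathbf{G}_t}\le 2(1+2S)\gamma_t(\delta)$.

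The only real subtlety is making sure the \emph{same} spectral inequality works in both directions cleanly: we need $\mathbf{G}_t\succeq(1+2S)^{-1}\mathbf{H}_t(\theta_t)$ to convert the $\theta_t$-side confidence certificate, $\mathbf{G}_t\succeq(1+2S)^{-1}\mathbf{H}_t(\theta_*)$ to convert the $\theta_*$-side certificate, and then the second of these again (read as $\mathbf{H}_t(\theta_*)\preceq(1+2S)\mathbf{G}_t$) to pass from the $\mathbf{G}_t$-norm bound back to the target $\mathbf{H}_t(\theta_*)$-norm bound. I expect this bookkeeping of which $\theta$ indexes which $\mathbf{H}_t$, together with the high-probability caveat that $\theta_*\in C_t(\delta)$, to be the main point one has to verify; once that is settled, the rest is Cauchy–Schwarz and the identity \eqref{eq: proof outline 1}.
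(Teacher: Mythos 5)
Your proposal is correct and follows essentially the same route as the paper's proof: both pivot on $\mathbf{G}_t(\theta_t,\theta_*)$ via the identity $g_t(\theta_t)-g_t(\theta_*)=\mathbf{G}_t(\theta_t,\theta_*)(\theta_t-\theta_*)$, split through $\hat{\theta}_t$ by the triangle inequality, apply Lemma~\ref{lem: relationship between Gt and Ht main} to convert each $\mathbf{G}_t^{-1}$-norm into the appropriate $\mathbf{H}_t^{-1}(\theta_t)$- or $\mathbf{H}_t^{-1}(\theta_*)$-norm bounded by $\gamma_t(\delta)$, and convert back to $\mathbf{H}_t(\theta_*)$ at the end. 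If anything, your explicit use of the two certificates ($\theta_t\in C_t(\delta)$ by construction and $\theta_*\in C_t(\delta)$ by the concentration result) is cleaner than the paper's phrasing of the same step.
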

Proofs of Lemma~\ref{lem: relationship between Gt and Ht main} and~\ref{lem: projection with Gt and Ht relation main} have been deferred to~\ref{sec: self concordance results appendix}. Notice that Lemma~\ref{lem: projection with Gt and Ht relation main} is a key result which characterizes worthiness of the confidence set $C_t(\delta)$. Recall that $\gamma_T(\delta) = \mathrm{O}(\sqrt{d\log(KT)})$ (with a tuned $\lambda$ as in Corollary~\ref{corol: simplified regret result}). Therefore, any $\theta\,\in\,C_t(\delta)$ is not too far from the optimal $\theta_*$ under the norm induced by $||\cdot||_{\mathbf{H}_t(\theta_*)}$. Now, we use Lemma~\ref{lem: projection with Gt and Ht relation main} in Eq~(\ref{eq: prediction error eq main 1}) to get:
\begin{align}\label{eq: prediction error main 3}
&\PredErr(\designQt,\theta_t) \leq
2(1+2S)\gamma_t(\delta)\sumInAssortT|\alpha_i(\designQt,\theta_*,\theta_t)||x_{t,i}||_{\mathbf{H}_t^{-1}(\theta_*)}|.
\end{align}
The quantity $\alpha_i(\designQt,\theta_t,\theta_*)$ as described in the Eq~(\ref{eq: notation alpha main}) is upper bounded in the following result 

\begin{lemma}\label{lem: prediction error single round softmax helper main}
For the assortment chosen by the algorithm \algotwo{}, $\mathcal{Q}_t$ as given by Eq~(\ref{eq: algo decision}) and any $\theta \in C_t(\delta)$ the following holds with probability at least $1-\delta$: $
     \alpha_i(\designQt,\theta_*,\theta)\leq
     \Mmodeldot_i(\designQt^\top\theta_*) + 2(1+2S)M \gamma_t(\delta)||x_{t,i}||_{\mathbf{H}_t^{-1}(\theta_*)}.$
\end{lemma}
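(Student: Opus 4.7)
The plan is to treat $\alpha_i(\designQt,\theta_*,\theta)$ as an averaged directional derivative of $\mu_i$ along the segment from $\theta_*$ to $\theta$, peel off the leading contribution $\Mmodeldot_i(\designQt^\top\theta_*)$, and control the resulting second-order remainder using the uniform Lipschitz constant $M$ on $\nabla\mu_i$ together with the $\mathbf{H}_t(\theta_*)$-norm bound on $\theta-\theta_*$ supplied by Lemma~\ref{lem: projection with Gt and Ht relation main}.

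First I would unfold the definition in Eq~(\ref{eq: notation alpha main}) and rewrite the right-hand side via the fundamental theorem of calculus along the segment $\theta_v = \theta_* + v(\theta-\theta_*)$, giving
\[
\alpha_i(\designQt,\theta_*,\theta)\,\xti^\top(\theta-\theta_*) \;=\; \int_0^1 \nabla\mu_i(\designQt^\top\theta_v)^\top(\theta-\theta_*)\,dv.
\]
I would then add and subtract $\nabla\mu_i(\designQt^\top\theta_*)^\top(\theta-\theta_*)$ inside the integral. Using the softmax identities $\partial_{u_i}\mu_i(u) = \Mmodeldot_i(u)$ and $\partial_{u_j}\mu_i(u) = -\mu_i\mu_j$ for $j\neq i$, the linearized piece at $\theta_*$ contributes exactly $\Mmodeldot_i(\designQt^\top\theta_*)\,\xti^\top(\theta-\theta_*)$ together with cross terms coming from the remaining items in $\mathcal{Q}_t$; the leftover integral is a standard Taylor remainder.

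Next I would bound the cross terms and the Taylor remainder jointly. Applying Lipschitz continuity of $\nabla\mu_i$ with constant $M$, followed by Cauchy–Schwarz in the $\mathbf{H}_t(\theta_*)$ norm (possible because $\mathbf{H}_t(\theta_*)$ by construction contains a positively weighted sum of every $x_{t,j}x_{t,j}^\top$ for $j\in\mathcal{Q}_s$, $s\le t-1$), and finally dividing through by $\xti^\top(\theta-\theta_*)$, the error should collapse to
\[
\alpha_i(\designQt,\theta_*,\theta) - \Mmodeldot_i(\designQt^\top\theta_*) \;\le\; M\,\|\xti\|_{\mathbf{H}_t^{-1}(\theta_*)}\,\|\theta-\theta_*\|_{\mathbf{H}_t(\theta_*)}.
\]
Invoking Lemma~\ref{lem: projection with Gt and Ht relation main}, which guarantees $\|\theta-\theta_*\|_{\mathbf{H}_t(\theta_*)} \le 2(1+2S)\gamma_t(\delta)$ with probability at least $1-\delta$, then delivers the stated bound.

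The main obstacle will be the division by $\xti^\top(\theta-\theta_*)$, which can be arbitrarily small, combined with the cross-coupling terms $\partial_{u_j}\mu_i$ for $j\neq i$ that are absent in the purely logistic setting of~\cite{faury2020improved}. The resolution is to express each scalar $x_{t,j}^\top(\theta-\theta_*)$ via Cauchy–Schwarz in the $\mathbf{H}_t(\theta_*)$ norm and to leverage the self-concordance-style relation of Lemma~\ref{lem: relationship between Gt and Ht main} between $\mathbf{G}_t$ and $\mathbf{H}_t$ so that the cross contributions are absorbed into the constant factor $(1+2S)$, rather than inflating the bound with a ratio that depends on the particular direction $\theta-\theta_*$.
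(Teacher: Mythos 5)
Your overall strategy --- peel off $\Mmodeldot_i(\designQt^\top\theta_*)$, control the remainder by $M$ times a Cauchy--Schwarz bound in the $\mathbf{H}_t(\theta_*)$ norm, and invoke Lemma~\ref{lem: projection with Gt and Ht relation main} --- matches the paper's in its last two steps, but your first step takes the full multivariate route, and that is where a genuine gap appears. You correctly identify the obstacle: after writing $\alpha_i(\designQt,\theta_*,\theta)\,\xti^\top(\theta-\theta_*)$ as a line integral of the full gradient of $\mu_i$, the cross-partials $\partial_{u_j}\mu_i=-\mu_i\mu_j$ for $j\neq i$ produce terms proportional to $x_{t,j}^\top(\theta-\theta_*)$, and dividing by $\xti^\top(\theta-\theta_*)$ leaves ratios $x_{t,j}^\top(\theta-\theta_*)/\xti^\top(\theta-\theta_*)$ that can be arbitrarily large. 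Your proposed resolution --- absorbing these into the factor $(1+2S)$ via Lemma~\ref{lem: relationship between Gt and Ht main} --- does not work: that lemma compares the aggregated Gram matrices $\mathbf{G}_t$ and $\mathbf{H}_t$ as quadratic forms and gives no per-direction control of such ratios. Moreover, even if the cross terms were bounded, they would contribute norms $\|x_{t,j}\|_{\mathbf{H}_t^{-1}(\theta_*)}$ for $j\neq i$, which cannot collapse into the single term $\|\xti\|_{\mathbf{H}_t^{-1}(\theta_*)}$ appearing in the statement; the step asserting that the error \emph{should collapse} to $M\|\xti\|_{\mathbf{H}_t^{-1}(\theta_*)}\|\theta-\theta_*\|_{\mathbf{H}_t(\theta_*)}$ is therefore not justified as written.

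The paper sidesteps this entirely by never taking the multivariate route: consistent with how $\alpha_i$ is introduced in the appendix (the difference $\mu_i(\designQt^\top\theta)-\mu_i(\designQt^\top\theta_*)$ is identified with a one-dimensional integral of $\Mmodeldot_i$ in the $i$-th coordinate alone), its proof performs a single-variable second-order Taylor expansion of $f_i(r_i)=e^{r_i}/(1+e^{r_i}+\sum_{j\neq i}e^{r_j})$ about $a_i=\xti^\top\theta_*$ with the other coordinates held fixed. The remainder is then quadratic in $\xti^\top(\theta-\theta_*)$, so dividing by $\xti^\top(\theta-\theta_*)$ is benign and leaves $M|\xti^\top(\theta-\theta_*)|$, after which Cauchy--Schwarz and Lemma~\ref{lem: projection with Gt and Ht relation main} give exactly the stated constant $2(1+2S)M\gamma_t(\delta)$. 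If you want to pursue your (more faithful) multivariate decomposition, you need a genuinely new argument for the cross-coupling terms, or you must settle for a per-assortment rather than per-item bound.
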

We use the result of Lemma~\ref{lem: prediction error single round softmax helper main} in Eq~(\ref{eq: prediction error main 3}) followed by an application of Lemma~\ref{lem: relationship between Gt and Ht main} and the relation $\sumInAssortT||x_{t,i}||^2_{\mathbf{H}_t^{-1}(\theta_t)} \leq \kappa\sumInAssortT||x_{t,i}||^2_{\mathbf{V}_t^{-1}}$ from Assumption~\ref{assm: kappa assumption}, to arrive at the statement of Lemma~\ref{lem: prediction error single round softmax main}.

\subsection{Regret calculation}\label{sec: regret calculation main}
The complete technical work is provided in~\ref{sec: bounds on prediction error apendix} and~\ref{sec: regret calculation appendix}. The key step to retrieve the upper bounds of Theorem~\ref{thm: regret upper bounds main} is to calculate $T$ rounds summation of the prediction error as given in Eq~(\ref{eq: prediction error main 55}). Compared to the previous literature~\cite{filippi2010parametric,li2017provably,oh2021multinomial}, the term $\sumInAssortT\Mmodeldot_i(\designQt^\top\theta_*)||x_{t,i}||_{\mathbf{H}_t^{-1}(\theta_*)}$ is new here. Further, our treatment of this term is much simpler and straight-forward as compared to that in~\cite{faury2020improved}


\subsection{Convex relaxation of the optimization step}\label{sec: convex relaxation main}
Sections~\ref{sec: bounds on prediction error main} \&~\ref{sec: regret calculation main} provide an analytical framework for calculating the regret bounds given: (1) the confidence set (Eq~(\ref{eq: confidence set def main})) with the guarantee that $\theta_* \,\in\,C_t(\delta)$ with probability at least $1-\delta$; (2) the assurance that the confidence set is small (Lemma~\ref{lem: projection with Gt and Ht relation main}). In order to re-use previously developed techniques, we show: (1) $E_t(\delta) \supseteq C_t(\delta)$ (see Eq~(\ref{eq: convex relaxation set prev})) and therefore $\theta_*\,\in\,E_t(\delta)$ with probability at least $1-\delta$ (see Lemma~\ref{lem: convex relaxation lemma main}; (2) an analog of Lemma~\ref{lem: projection with Gt and Ht relation main} using $E_t(\delta)$ (see Lemma~\ref{lem: convex relaxation lemma 2 main}). The proof of Theorem~\ref{thm: regret upper bounds main} is therefore repeated while using Lemma~\ref{lem: convex relaxation lemma 2 main}, following steps as sketched in sections~\ref{sec: bounds on prediction error main} \&~\ref{sec: regret calculation main}. The order dependence of the regret upper bound is retained (see Corollary~\ref{corol: simplified regret result}).

\begin{lemma}\label{lem: convex relaxation lemma main}
$E_t(\delta) \supseteq C_t(\delta)$, therefore for any $\theta \in C_t(\delta)$, we also have $\theta\,\in\,E_t(\delta)$ (see Eq~(\ref{eq: convex relaxation set prev})).
\end{lemma}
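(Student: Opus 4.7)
The lemma asserts that any $\theta \in C_t(\delta)$ also lies in $E_t(\delta)$. Reading $E_t(\delta)$ with the sign convention that makes it nontrivial (since $\hat\theta_t$ maximizes $\mathcal{L}_t^{\lambda_t}$), the target inequality to certify is $\mathcal{L}_t^{\lambda_t}(\hat\theta_t) - \mathcal{L}_t^{\lambda_t}(\theta) \leq \beta_t^2(\delta)$. The plan is to bound this log-likelihood suboptimality at $\theta$ directly using the Mahalanobis gradient control that defines $C_t(\delta)$, via the chain ``concavity $\Rightarrow$ dual Cauchy--Schwarz $\Rightarrow$ Lemma~\ref{lem: relationship between Gt and Ht main}''.

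\textbf{Main argument.} From the definitions in Eq~(\ref{eq: gt main}), $\nabla \mathcal{L}_t^{\lambda_t}(\theta) = g_t(\hat\theta_t) - g_t(\theta)$. By concavity of $\mathcal{L}_t^{\lambda_t}$, then Cauchy--Schwarz in the $\mathbf{H}_t^{-1}(\theta)/\mathbf{H}_t(\theta)$ dual pair, and the hypothesis $\theta\in C_t(\delta)$,
\begin{equation*}
\mathcal{L}_t^{\lambda_t}(\hat\theta_t) - \mathcal{L}_t^{\lambda_t}(\theta) \leq \langle g_t(\theta) - g_t(\hat\theta_t),\, \theta - \hat\theta_t\rangle \leq \gamma_t(\delta)\,\|\theta - \hat\theta_t\|_{\mathbf{H}_t(\theta)}.
\end{equation*}
To control the factor $\|\theta - \hat\theta_t\|_{\mathbf{H}_t(\theta)}$ I would replay the derivation underlying Lemma~\ref{lem: projection with Gt and Ht relation main}, with $(\theta,\hat\theta_t)$ in place of $(\theta_t,\theta_*)$: writing $g_t(\theta) - g_t(\hat\theta_t) = \mathbf{G}_t(\theta,\hat\theta_t)(\theta - \hat\theta_t)$ as in Eq~(\ref{eq: proof outline 1}), invoking Lemma~\ref{lem: relationship between Gt and Ht main} to get $\mathbf{G}_t(\theta,\hat\theta_t) \succeq (1+2S)^{-1}\mathbf{H}_t(\theta)$, and sandwiching via $A := \mathbf{H}_t^{-1/2}(\theta)\mathbf{G}_t\mathbf{H}_t^{-1/2}(\theta) \succeq (1+2S)^{-1} I$ yields $\|\theta - \hat\theta_t\|_{\mathbf{H}_t(\theta)} \leq (1+2S)\gamma_t(\delta)$. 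Chaining produces $\mathcal{L}_t^{\lambda_t}(\hat\theta_t) - \mathcal{L}_t^{\lambda_t}(\theta) \leq (1+2S)\gamma_t^2(\delta)$.

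\textbf{Main obstacle.} What remains, and is the hardest part, is to match the precise envelope $\beta_t^2(\delta) = \gamma_t^2(\delta) + 2\gamma_t^3(\delta)/\lambda_t + \gamma_t^4(\delta)/\lambda_t^2$: the additive $\gamma_t^2/\lambda_t$ summand inside $\beta_t$ is not slack but reflects a self-concordance correction that the coarse concavity step above absorbs into an uncontrolled $(1+2S)$. To recover the stated form I would replace the first-order concavity bound by the integral Taylor identity
\begin{equation*}
\mathcal{L}_t^{\lambda_t}(\hat\theta_t) - \mathcal{L}_t^{\lambda_t}(\theta) = \int_0^1 (1-v)\,\|\theta - \hat\theta_t\|^2_{\mathbf{H}_t(\hat\theta_t + v(\theta - \hat\theta_t))}\,dv,
\end{equation*}
and then invoke the multinomial analogue of the scalar self-concordance inequality of~\cite{faury2020improved} to replace $\mathbf{H}_t$ at any interior point by $\mathbf{H}_t(\theta)$ up to a multiplicative distortion $1 + \|\theta - \hat\theta_t\|_2$. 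Since $\mathbf{H}_t(\theta) \succeq \lambda_t \Imatrixd$ implies $\|\theta-\hat\theta_t\|_2 \lesssim \gamma_t(\delta)/\sqrt{\lambda_t}$, the distortion contributes precisely the $\gamma_t^2/\lambda_t$ surplus; combining with the $(1+2S)$-adjusted Lemma~\ref{lem: relationship between Gt and Ht main} bound and regrouping as $(\gamma_t + \gamma_t^2/\lambda_t)^2$ then yields the convex relaxation $E_t(\delta)\supseteq C_t(\delta)$.
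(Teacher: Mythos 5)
Your first-order argument is sound as far as it goes: concavity plus Cauchy--Schwarz in the $\mathbf{H}_t^{-1}(\theta)/\mathbf{H}_t(\theta)$ pair, plus the $\mathbf{G}_t\succeq(1+2S)^{-1}\mathbf{H}_t$ comparison, correctly yields $\mathcal{L}_t^{\lambda_t}(\hat\theta_t)-\mathcal{L}_t^{\lambda_t}(\theta)\leq(1+2S)\gamma_t^2(\delta)$. But that is a different (and for large $S$, larger) radius than $\beta_t^2(\delta)=(\gamma_t(\delta)+\gamma_t^2(\delta)/\lambda_t)^2$, so it does not certify membership in $E_t(\delta)$ as defined; you correctly flag this. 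The problem is that your proposed repair does not close the gap either: the integral Taylor identity plus a self-concordance distortion of $1+\|\theta-\hat\theta_t\|_2$ still forces you to control $\|\theta-\hat\theta_t\|^2_{\mathbf{H}_t(\theta)}$, and the only bound you have on that quantity is the $(1+2S)^2\gamma_t^2$ one from Lemma~\ref{lem: relationship between Gt and Ht main}. Multiplying by the distortion gives roughly $(1+(1+2S)\gamma_t/\sqrt{\lambda_t})(1+2S)^2\gamma_t^2$, which still carries the $(1+2S)$ factors and is not $\beta_t^2$; the $\gamma_t^2/\lambda_t$ summand in $\beta_t$ is not where the $(1+2S)$ gets absorbed.

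The paper's proof avoids this by never passing through $\|\theta-\hat\theta_t\|_{\mathbf{H}_t(\theta)}$ and never invoking the global $(1+2S)$ comparison. After the same integral Taylor expansion, the remainder is bounded by $\|\theta-\hat\theta_t\|^2_{\mathbf{G}_t(\theta,\hat\theta_t)}=\|g_t(\theta)-g_t(\hat\theta_t)\|^2_{\mathbf{G}_t^{-1}(\theta,\hat\theta_t)}$ via Eq~(\ref{eq: theta and G relation multinomial}), and then a separate helper lemma bounds this dual norm \emph{directly}. The key is a localized version of the self-concordance comparison: from Lemma~\ref{lem: application of self concordance softmax} and $\|x_{s,i}\|_{\mathbf{G}_t^{-1}(\theta,\hat\theta_t)}\leq\lambda_t^{-1/2}$ (rather than $\|\theta-\hat\theta_t\|_2\leq 2S$), one gets
\begin{equation*}
\mathbf{G}_t(\theta,\hat\theta_t)\succeq\Bigl(1+\lambda_t^{-1/2}\|g_t(\theta)-g_t(\hat\theta_t)\|_{\mathbf{G}_t^{-1}(\theta,\hat\theta_t)}\Bigr)^{-1}\mathbf{H}_t(\theta),
\end{equation*}
which, combined with the membership condition $\|g_t(\theta)-g_t(\hat\theta_t)\|_{\mathbf{H}_t^{-1}(\theta)}\leq\gamma_t(\delta)$, produces a quadratic self-bounding inequality in $\|g_t(\theta)-g_t(\hat\theta_t)\|_{\mathbf{G}_t^{-1}(\theta,\hat\theta_t)}$ whose solution is exactly $\beta_t(\delta)$ (up to the paper's own $\sqrt{\lambda_t}$ versus $\lambda_t$ bookkeeping). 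That self-referential step is the missing idea in your sketch; without it the $(1+2S)$ cannot be traded for the $\gamma_t^2/\lambda_t$ correction. (Separately, your reading of the sign in the definition of $E_t(\delta)$ is the sensible one; as literally written the set is all of $\Theta$, and the paper's own proof implicitly treats $\nabla^2\mathcal{L}_t^{\lambda}=+\mathbf{H}_t$, i.e., $\mathcal{L}$ as a loss.)
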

The complete proof is provided in~\ref{sec: convex relaxation appendix}. We highlight the usefulness of Lemma~\ref{lem: convex relaxation lemma main}. Since all of set $C_t(\delta)$ lies within $E_t(\delta)$, the consequence of the concentration inequality also implies $\mathbb{P}(\forall t \geq 1, \theta_* \in E_t(\delta)) \geq 1-\delta$.
\begin{lemma}\label{lem: convex relaxation lemma 2 main}
Under the event $\theta_*\,\in\,C_t(\delta)$, the following holds $\forall\,\theta\,\in\,E_t(\delta)$: 
\begin{align*}
    ||\theta-\theta_*||_{\mathbf{H}_t(\theta_*)} \leq (2+2S)\gamma_t(\delta) + 2\sqrt{1+S}\beta_t(\delta).
\end{align*}
When $\lambda_t = \mathrm{O}(d\log(Kt))$, then $\gamma_t(\delta) = \tilde{\mathrm{O}}(\sqrt{d\log(t)})$, $\beta_t(\delta) = \tilde{\mathrm{O}}(\sqrt{d\log(t)})$, and $||\theta - \theta_*||_{\mathbf{H}_t(\theta_*)} = \tilde{\mathrm{O}}(\sqrt{d\log(t)})$.
\end{lemma}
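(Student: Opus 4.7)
The natural route is to split the deviation through the maximum likelihood estimate $\hat\theta_t$ using the triangle inequality,
\begin{equation*}
\|\theta - \theta_*\|_{\mathbf{H}_t(\theta_*)} \;\leq\; \|\theta - \hat\theta_t\|_{\mathbf{H}_t(\theta_*)} \;+\; \|\hat\theta_t - \theta_*\|_{\mathbf{H}_t(\theta_*)}.
\end{equation*}
The two pieces are controlled by two different mechanisms: the first uses the constraint $\theta\in E_t(\delta)$ through strong concavity of the regularized log-likelihood; the second uses only the event $\theta_*\in C_t(\delta)$ together with the fact that $\hat\theta_t$ itself is trivially a member of $C_t(\delta)$ (since $g_t(\hat\theta_t)-g_t(\hat\theta_t)=0$).

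\textbf{Second term.} For the MLE-to-truth distance, I would use the identity $g_t(\theta_*)-g_t(\hat\theta_t) = \mathbf{G}_t(\theta_*,\hat\theta_t)(\theta_*-\hat\theta_t)$ from Eq~(\ref{eq: proof outline 1}). A Cauchy-Schwarz step in the norm induced by $\mathbf{H}_t(\theta_*)$ together with the curvature comparison $\mathbf{G}_t(\theta_*,\hat\theta_t)\succeq (1+2S)^{-1}\mathbf{H}_t(\theta_*)$ from Lemma~\ref{lem: relationship between Gt and Ht main} turns the $C_t(\delta)$-membership of $\theta_*$ (which bounds $\|g_t(\theta_*)-g_t(\hat\theta_t)\|_{\mathbf{H}_t^{-1}(\theta_*)}$ by $\gamma_t(\delta)$) directly into $\|\hat\theta_t-\theta_*\|_{\mathbf{H}_t(\theta_*)}\leq (1+2S)\gamma_t(\delta)$, which is tighter than the advertised $(2+2S)\gamma_t(\delta)$ contribution. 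This is essentially a sharper variant of Lemma~\ref{lem: projection with Gt and Ht relation main} specialized to $\hat\theta_t$.

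\textbf{First term.} Since $\hat\theta_t$ maximizes the concave function $\mathcal{L}_t^{\lambda_t}$, one has $\nabla\mathcal{L}_t^{\lambda_t}(\hat\theta_t)=0$, and a second-order Taylor expansion with integral remainder yields
\begin{equation*}
\mathcal{L}_t^{\lambda_t}(\hat\theta_t)-\mathcal{L}_t^{\lambda_t}(\theta)\;=\;\int_0^1(1-v)\,(\theta-\hat\theta_t)^\top\bigl[-\nabla^2\mathcal{L}_t^{\lambda_t}(\hat\theta_t+v(\theta-\hat\theta_t))\bigr](\theta-\hat\theta_t)\,dv.
\end{equation*}
The left-hand side is controlled by $\beta_t^2(\delta)$ for any $\theta\in E_t(\delta)$. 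For the right-hand side I would use two structural facts about the MNL log-likelihood: (i) a curvature lower bound of the form $-\nabla^2\mathcal{L}_t^{\lambda_t}(\tilde\theta)\succeq c_1\,\mathbf{H}_t(\tilde\theta)$ for a universal constant $c_1>0$, which quantifies that the off-diagonal cross-coupling $-\mu_i\mu_j\,x_{s,i}x_{s,j}^\top$ among items in the same assortment contributes a PSD correction and the diagonal part $\dot\mu_i\,x_{s,i}x_{s,i}^\top$ already reproduces $\mathbf{H}_t$; and (ii) a self-concordance-style transport of the norm, $\mathbf{H}_t(\tilde\theta)\succeq c_2(S)\,\mathbf{H}_t(\theta_*)$ for $\tilde\theta$ on the segment between $\hat\theta_t$ and $\theta$ (an MNL-analog of the mechanism used in Lemma~\ref{lem: relationship between Gt and Ht main}, with $c_2(S)$ of order $(1+S)^{-1}$). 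Combining these gives $\|\theta-\hat\theta_t\|_{\mathbf{H}_t(\theta_*)}^2\leq 4(1+S)\,\beta_t^2(\delta)$, and adding the two triangle inequality contributions produces exactly the stated bound. The asymptotic orders in the final sentence of the lemma then follow directly by substituting $\lambda_t=\mathrm{O}(d\log(Kt))$ into Eq~(\ref{eq: gamma value main}) and using $\beta_t(\delta)=\gamma_t(\delta)+\gamma_t^2(\delta)/\lambda_t$.

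\textbf{Main obstacle.} The delicate step is (i): establishing the global curvature lower bound $-\nabla^2\mathcal{L}_t^{\lambda_t}\succeq c_1\mathbf{H}_t$ for the multinomial (rather than scalar logistic) log-likelihood, because the Hessian of the MNL loss mixes items within an assortment through the softmax Jacobian, while $\mathbf{H}_t$ retains only the per-item diagonal terms $\dot\mu_i$. Ensuring this comparison does not leak a $\kappa$ factor, and simultaneously transporting the local curvature across parameters via a self-concordance-type inequality, is what prevents a naive use of the relaxed set $E_t(\delta)$ from inflating the regret order.
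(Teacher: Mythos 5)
Your decomposition is genuinely different from the paper's: you split through $\hat\theta_t$ with a triangle inequality and Taylor-expand the log-likelihood at $\hat\theta_t$ (where the gradient vanishes), whereas the paper Taylor-expands at $\theta_*$, keeps the non-vanishing gradient term, controls it by $\|\nabla\mathcal{L}_t^{\lambda}(\theta_*)\|_{\mathbf{H}_t^{-1}(\theta_*)}=\|g_t(\hat\theta_t)-g_t(\theta_*)\|_{\mathbf{H}_t^{-1}(\theta_*)}\leq\gamma_t(\delta)$ using the event $\theta_*\in C_t(\delta)$, and then solves a quadratic inequality in $\|\theta-\theta_*\|_{\mathbf{H}_t(\theta_*)}$. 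Your bound on the second piece, $\|\hat\theta_t-\theta_*\|_{\mathbf{H}_t(\theta_*)}\leq(1+2S)\gamma_t(\delta)$, is correct and is exactly the mechanism of Lemma~\ref{lem: projection with Gt and Ht relation main} specialized to $\hat\theta_t$ via Eq~(\ref{eq: theta and G relation multinomial}) and Lemma~\ref{lem: relationship between Gt and Ht main}.

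The gap is in the first piece, specifically in your step (ii). The self-concordance machinery of the paper (Eq~(\ref{eq: application of self con 1})) only gives the \emph{pointwise} comparison $\dot\mu_i(z)\geq\dot\mu_i(z_1)e^{-|z-z_1|}$, so a direct transport $\mathbf{H}_t(\tilde\theta)\succeq c_2(S)\,\mathbf{H}_t(\theta_*)$ between two arbitrary points of $\Theta$ only holds with $c_2(S)=e^{-2S}$, not with $c_2(S)=\mathrm{O}((1+S)^{-1})$ as you assert. The polynomial-in-$S$ constant is available only for the \emph{integrated} curvature along a segment, and it compares that integral to $\mathbf{H}_t$ evaluated at an \emph{endpoint} of that segment (this is the content of Lemma~\ref{lem: application of self concordance softmax} and of Lemma 8 of \cite{abeille2020instance}). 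Since your Taylor remainder integrates along the segment from $\hat\theta_t$ to $\theta$, the resulting lower bound is anchored at $\mathbf{H}_t(\hat\theta_t)$ (or $\mathbf{H}_t(\theta)$), and converting that to the target norm $\mathbf{H}_t(\theta_*)$ costs the exponential factor $e^{2S}$; there is no polynomial-in-$S$ upper bound on $\mathbf{G}_t$ in terms of $\mathbf{H}_t$ that would let you sandwich your way out. This is precisely why the paper expands at $\theta_*$: the segment is then anchored at $\theta_*$, the integrated Hessian dominates $(2+2S)^{-1}\mathbf{H}_t(\theta_*)$ directly, and the price — a non-zero linear term — is paid for by the hypothesis $\theta_*\in C_t(\delta)$, which is exactly why that hypothesis appears in the lemma statement. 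As written, your route yields $\|\theta-\hat\theta_t\|_{\mathbf{H}_t(\theta_*)}\lesssim e^{S}\beta_t(\delta)$ rather than $2\sqrt{1+S}\,\beta_t(\delta)$. Your step (i), flagged as the main obstacle, is indeed delicate (the MNL Hessian carries the softmax cross terms $-\mu_i\mu_j x_{s,i}x_{s,j}^\top$ and, strictly, depends on the realized purchases), but note the paper does not resolve it either — it simply identifies $\nabla^2\mathcal{L}^{\lambda}_t(\cdot)$ with $\mathbf{H}_t(\cdot)$ in the proof of Lemma~\ref{lem: convex relaxation lemma appendix} — so the decisive defect of your proposal relative to the paper is the norm-transport step, not (i).
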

The complete proof can be found in~\ref{sec: convex relaxation appendix}. 


\section{Numerical experiments}\label{sec: numerical experiments}
In this section we compare the empirical performance of our proposed algorithm~\algotwo{} with the previous state of the art in the MNL contextual bandit literature: \algocompUCB{}\citep{oh2021multinomial} and \algocompTS{}\citep{oh2019thompson} on artificial data. We focus on performance comparison for varying values of parameter $\kappa$, and show that our algorithm has a consistently superior performance for different $\kappa$ values in Figure~\ref{fig: main 1}. This highlights the primary contribution of our theoretical analysis. Refer to~\ref{app: numerical experiments} for additional empirical analysis. 

\begin{figure}
\centering
\includegraphics[width=.32\columnwidth]{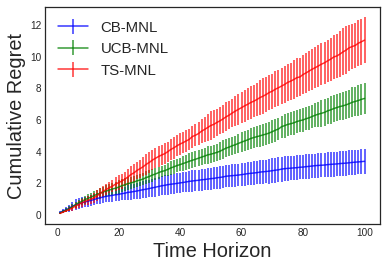}{}
\includegraphics[width=.32\columnwidth]{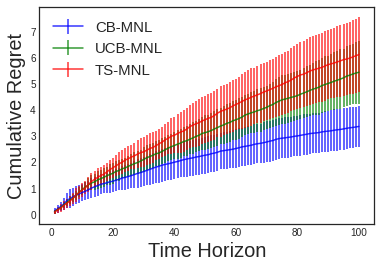}{}
\includegraphics[width=.32\columnwidth]{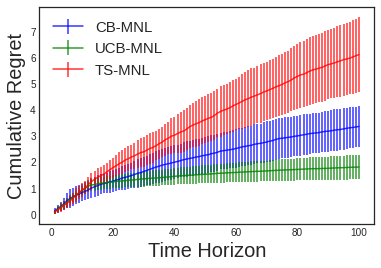}
\caption{Comparison of cumulative regret as a function of time for varying $\kappa$ ( left to right: $\kappa \gg \del{\sqrt{T}}$, $\kappa < \del{\sqrt{T}}$, and $\kappa \ll \del{\sqrt{T}}$) }
\label{fig: main 1}
\end{figure}

For each experimental configuration, we consider a problem instance with inventory size $N=15$, instance dimensions $d=5$, maximum assortment size $K=4$, and time horizon $T=100$, averaged over $25$ Monte Carlo simulation runs. $\theta_*\in \mathbb{R}^d$ is a $d-$dimensional random vector with each coordinate in $[0,1]$, independently and uniformly distributed. The contexts follow a multivariate Gaussian distribution. The $\lambda$ parameter is manually tuned. Algorithm~\algotwo{} only knows the value of $N,T,K,d$. In contrast, algorithms~\algocompTS{}and~\algocompUCB{} also need to know the value of $\kappa$ for their implementation. We observe that that our algorithm~\algotwo{} has robust performance for varying values of $\kappa$.

\section{Conclusion and discussion}\label{sec: conclusion main}
In this work, we proposed an optimistic algorithm for learning under the MNL contextual bandit framework. Using techniques from~\cite{faury2020improved}, we developed an improved technical analysis to deal with the non-linear nature of the MNL reward function. As a result, the leading term in our regret bound does not suffer from the problem-dependent parameter $\kappa$. This contribution is significant as $\kappa$ can be very large (refer to Section~\ref{sec: comparision with other works main}). For example, for $\kappa = \mathrm{O}(\sqrt{T})$, the results of~\cite{oh2021multinomial,oh2019thompson} suffer $\tilde{\mathrm{O}}(T)$ regret, while our algorithm continues to enjoy $\tilde{\mathrm{O}}(\sqrt{T})$. Further, we also presented a tractable version of the decision-making step of the algorithm by constructing a convex relaxation of the confidence set. 

Our result is still $\mathrm{O}(\sqrt{d})$ away from the minimax lower of bound~\cite{chu2011contextual} known for the linear contextual bandit. In the case of logistic bandits,~\cite{li2017provably} makes an i.i.d. assumption on the contexts to bridge the gap (however, they still retain the $\kappa$ factor). Improving the worst-case regret bound by $\mathrm{O}(\sqrt{d})$ while keeping $\kappa$ as an additive term is an open problem. It may be possible to improve the dependence on $\kappa$ by using a higher-order approximation for estimation error. Finding a lower bound on dependence $\kappa$ is an interesting open problem and may require newer techniques than presented in this work.

\cite{oh2019thompson} gave a Thompson sampling (TS) based learning strategy for the MNL contextual bandit. Thompson sampling approaches may not have to search the entire action space to take decisions as optimistic algorithms (such as ours) do. TS-based strategies are likely to have better empirical performance. Authors in~\cite{oh2019thompson} use a confidence set based analysis to bound the estimation error term. However, results in~\cite{oh2019thompson} suffer from the prohibitive scaling of the problem-dependent parameter $\kappa$ that we have overcome here. Modifying our analysis for a TS-based learning strategy could bring together the best of both worlds.

\bibliography{ref}

\onecolumn
\appendix
\section{Appendix}
\subsection{Confidence set}\label{sec: confidence set appendix}
In this section, we justify the design of confidence set defined in Eq~(\ref{eq: confidence set def main}). This particular choice is based on the following concentration inequality for self-normalized vectorial martingales.

\begin{thm}{Appears as Theorem 4 in~\cite{abeille2020instance}}\label{thm: bernstein style concentration}
Let $\{\mathcal{F}_t\}_{t=1}^\infty$ be a filtration. Let $\{x_t\}_{t=1}^{\infty}$ be a stochastic process in $\mathcal{B}_2(d)$ such that $x_t$ is $\mathcal{F}_{t}$ measurable. Let $\{\varepsilon_{t}\}_{t=2}^\infty$ be a martingale difference sequence such that $\varepsilon_{t+1}$ is $\mathcal{F}_{t+1}$ measurable. Furthermore, assume that conditionally on $\mathcal{F}_t$ we have $ \vert \varepsilon_{t+1}\vert \leq 1$ almost surely, and note $\sigma_t^2 \coloneqq \mathbb{E}\left[\varepsilon_{t+1}^2\vert \mathcal{F}_t \right]$. Let $\{\lambda_t\}_{t=1}^{\infty}$ be a predictable sequence of non-negative scalars. Define:
\begin{align*}
\mathbf{H}_t\coloneqq\sum_{s=1}^{t-1}\sigma_s^2 x_sx_s^T + \lambda_t\mathbf{I}_d, \qquad S_t\coloneqq \sum_{s=1}^{t-1} \varepsilon_{s+1}x_s.
\end{align*}
Then for any $\delta\in(0,1]$:
\begin{align*}
\mathbb{P}\Bigg(\exists t\geq 1, \, \left\lVert S_t\right\rVert_{\mathbf{H}_t^{-1}} \!\geq\! \frac{\sqrt{\lambda_t}}{2}\!+\!\frac{2}{\sqrt{\lambda_t}}\log\!\left(\frac{\det\left(\mathbf{H}_t\right)^{\frac{1}{2}}\!\lambda_t^{-\frac{d}{2}}}{\delta}\right)+\frac{2}{\sqrt{\lambda_t}}d\log(2)\Bigg)\leq \delta.
\end{align*}
\end{thm}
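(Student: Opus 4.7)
The plan is to adapt the classical method-of-mixtures argument for self-normalized martingales (as in~\cite{abbasi2011improved}) to a Bernstein-type setting that exploits the conditional variances $\sigma_s^2$. The structural idea is to build a nonnegative supermartingale whose Laplace-transform kernel involves the variance-weighted Gram matrix, integrate it against a Gaussian prior on $\mathbb{R}^d$, and then apply Ville's maximal inequality to the resulting mixture.

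First, I would establish a Bernstein-type one-step cumulant bound: since $|\varepsilon_{t+1}| \leq 1$ with conditional variance $\sigma_t^2$, Bennett's inequality yields $\mathbb{E}[\exp(\eta \varepsilon_{t+1}) \mid \mathcal{F}_t] \leq \exp(\sigma_t^2\, \eta^2)$ on a bounded range of $\eta$. For any deterministic $\lambda \in \mathbb{R}^d$ with $\|\lambda\|_2$ suitably small (using $\|x_s\|_2 \leq 1$), the process $M_t^\lambda = \exp\bigl(\lambda^\top S_t - \lambda^\top \bigl(\sum_{s<t}\sigma_s^2 x_s x_s^\top\bigr)\lambda\bigr)$ is then a nonnegative supermartingale with $\mathbb{E}[M_1^\lambda] \leq 1$; this follows by taking conditional expectations one step at a time and plugging in the cumulant bound. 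Mixing $M_t^\lambda$ against a Gaussian prior on $\lambda$ with covariance $(2\lambda_t)^{-1}\mathbf{I}_d$ and completing the square yields $\overline{M}_t = \lambda_t^{d/2}\det(\mathbf{H}_t)^{-1/2}\exp\bigl(\tfrac{1}{2}\|S_t\|_{\mathbf{H}_t^{-1}}^2\bigr)$, which is still a nonnegative supermartingale with mean at most $1$. Ville's inequality then gives $\mathbb{P}(\sup_{t \geq 1} \overline{M}_t \geq 1/\delta) \leq \delta$; taking logarithms and solving for $\|S_t\|_{\mathbf{H}_t^{-1}}$ delivers a uniform-in-$t$ bound whose leading term matches the $(2/\sqrt{\lambda_t})\log\bigl(\det(\mathbf{H}_t)^{1/2}\lambda_t^{-d/2}/\delta\bigr)$ summand in the theorem.

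Two complications remain. The Bennett bound is quadratic only on a bounded range of $\eta$, so the Gaussian prior must be truncated or reweighted to keep the mixture step legitimate; the resulting correction is precisely what the extra summands $\sqrt{\lambda_t}/2$ and $(2/\sqrt{\lambda_t})d\log(2)$ in the final inequality absorb, the $d\log(2)$ coming from the covering cost of the truncated ball. Second, $\lambda_t$ is a predictable sequence rather than a fixed constant: since $\lambda_t$ is $\mathcal{F}_{t-1}$-measurable, one handles this by a stopping-time freezing argument, or equivalently by folding the running $\lambda_t \mathbf{I}_d$ directly into $\mathbf{H}_t$ from the start, a standard device for predictable regularizers in self-normalized bounds.

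The hardest step, in my view, is reconciling the finite-range validity of the Bennett cumulant bound with the method of mixtures, which a priori demands a cumulant inequality valid for every $\lambda$ in the support of the prior. The~\cite{faury2020improved} resolution is a carefully engineered prior together with a sharper one-step cumulant control; getting the constants $\tfrac{1}{2}$ and $d\log(2)$ exactly right in the final expression is where the bookkeeping lives, but the conceptual backbone is the classical Laplace-Ville technique applied to a variance-aware supermartingale.
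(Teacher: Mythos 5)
The paper does not prove this statement at all: it is imported verbatim as Theorem~4 of \cite{abeille2020instance} (itself a refinement of the concentration result in \cite{faury2020improved}), and the appendix only \emph{applies} it to derive Lemma~\ref{lem: bernstein confidence bound multinomial}. So there is no in-paper proof to compare against; your sketch has to stand on its own, and its backbone --- a variance-adaptive exponential supermartingale, a mixture over the parameter, and Ville's maximal inequality --- is indeed the strategy used in the cited sources.

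There is, however, a concrete gap in the way you execute the mixture step. If you integrate $M_t^{\lambda}=\exp\bigl(\lambda^\top S_t-\lambda^\top(\sum_{s<t}\sigma_s^2x_sx_s^\top)\lambda\bigr)$ against a Gaussian prior of covariance $(2\lambda_t)^{-1}\mathbf{I}_d$ and complete the square, you obtain $\overline{M}_t=\lambda_t^{d/2}\det(\mathbf{H}_t)^{-1/2}\exp\bigl(\tfrac{1}{4}\lVert S_t\rVert^2_{\mathbf{H}_t^{-1}}\bigr)$ (note the $\tfrac14$, not $\tfrac12$), and Ville's inequality then yields a bound of the form $\lVert S_t\rVert_{\mathbf{H}_t^{-1}}\leq 2\sqrt{\log\bigl(\det(\mathbf{H}_t)^{1/2}\lambda_t^{-d/2}/\delta\bigr)}$. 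That is a \emph{square-root-of-log-determinant} bound, which is not the statement you are asked to prove: the theorem's leading term is $\frac{2}{\sqrt{\lambda_t}}\log\bigl(\det(\mathbf{H}_t)^{1/2}\lambda_t^{-d/2}/\delta\bigr)$, i.e.\ \emph{linear} in the log-determinant and scaled by $\lambda_t^{-1/2}$. The discrepancy is not a lower-order correction that the summands $\sqrt{\lambda_t}/2$ and $2d\log(2)/\sqrt{\lambda_t}$ ``absorb''; it reflects the fact that the one-step cumulant bound $\mathbb{E}[\exp(\eta\varepsilon_{t+1})\mid\mathcal{F}_t]\leq\exp(\sigma_t^2\eta^2)$ only holds for $|\eta|\leq 1$, so the prior must be supported on a set where $|\lambda^\top x_s|\leq 1$, and the passage from the mixed supermartingale to the norm bound then goes through an inequality that is linear (not quadratic) in $\lVert S_t\rVert_{\mathbf{H}_t^{-1}}$ over part of its range; solving that self-bounding inequality is exactly what produces the $\frac{\sqrt{\lambda_t}}{2}+\frac{2}{\sqrt{\lambda_t}}\log(\cdot)+\frac{2d\log 2}{\sqrt{\lambda_t}}$ structure. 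As written, your Gaussian-mixture computation proves a different (and in the relevant regime incomparable) inequality, and the truncation/reweighting you defer is precisely the step that changes the functional form of the result. The handling of the predictable sequence $\lambda_t$ is likewise waved at rather than carried out --- since the mixing measure depends on $\lambda_t$, one must either mix with a fixed measure and introduce $\lambda_t$ only in the final deterministic norm comparison, or justify the supermartingale property of the time-varying mixture; ``folding $\lambda_t\mathbf{I}_d$ into $\mathbf{H}_t$ from the start'' does not by itself do either.
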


Theorem~\ref{thm: bernstein style concentration} cannot be directly used in our setting as in the MNL model the actual rewards (for any time step $s$) $\{r_{s,i}\}_{i \in Q_s}$ are correlated. Hence a concentration almost identical (varying only in minor constant modification) to Theorem~\ref{thm: bernstein style concentration}, appearing as Theorem C.6 in~\cite{perivier2022dynamic} is used instead.

\begin{lemma}[confidence bounds for multinomial logistic rewards]\label{lem: bernstein confidence bound multinomial}
With $\hat{\theta}_t$ as the regularized maximum log-likelihood estimate as defined in Eq~(\ref{eq: MLE estimate}), the following follows with probability at least $1-\delta$:
\begin{align*}
    \forall t \, \geq \, 1,\quad \enVert{g_t(\hat{\theta}_t) - g_t(\theta_*) }_{\mathbf{H}_t^{-1}} \leq \gamma_t\del{\delta},
\end{align*}
where $\mathbf{H}_t\del{\theta_1} = \sum_{s=1}^{t-1}\sumInAssortS\Mmodeldot_i\del{\designQs^\top\theta_1}\xsi \xsi^\top+\lambda\Imatrixd$ and $g_t(\cdot)$ is defined in Eq~(\ref{eq: gt main}).
\end{lemma}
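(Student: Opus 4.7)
The plan is to derive the stated bound as a direct application of the Bernstein-type self-normalized concentration inequality in Theorem~\ref{thm: bernstein style concentration}, with the multinomial observation noise playing the role of the scalar martingale differences and with $\dot\mu_i$ supplying the conditional variance. First, I would use the MLE defining relation \eqref{eq: MLE estimate} together with \eqref{eq: gt main} to rewrite
\begin{equation*}
g_t(\hat\theta_t) - g_t(\theta_*) \;=\; \sum_{s=1}^{t-1}\sum_{i\in\mathcal{Q}_s}\bigl(r_{s,i}-\mu_i(\mathbf{X}_{\mathcal{Q}_s}^\top\theta_*)\bigr)\, x_{s,i},
\end{equation*}
which identifies the left-hand side with a self-normalized vectorial martingale of the form $S_t$ appearing in Theorem~\ref{thm: bernstein style concentration}, once the inner sum over $i\in\mathcal{Q}_s$ is unrolled into a single index respecting the filtration $\mathcal{F}_s$ of Section~\ref{sec: model setting}.

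Next, I would verify the hypotheses of Theorem~\ref{thm: bernstein style concentration}. The noise variables $\varepsilon_{s,i}:= r_{s,i}-\mu_i(\mathbf{X}_{\mathcal{Q}_s}^\top\theta_*)$ are bounded in absolute value by $1$ since $r_{s,i}\in\{0,1\}$ and $\mu_i\in[0,1]$, and they are zero-mean conditionally on $\mathcal{F}_s$ by \eqref{eq: rewards eq}. The key observation is that, for the multinomial law with a single trial,
\begin{equation*}
\mathbb{E}\bigl[\varepsilon_{s,i}^2\mid\mathcal{F}_s\bigr] \;=\; \mu_i(\mathbf{X}_{\mathcal{Q}_s}^\top\theta_*)\bigl(1-\mu_i(\mathbf{X}_{\mathcal{Q}_s}^\top\theta_*)\bigr) \;=\; \dot\mu_i(\mathbf{X}_{\mathcal{Q}_s}^\top\theta_*),
\end{equation*}
so the $\sigma_s^2$ appearing in the matrix $\mathbf{H}_t$ of Theorem~\ref{thm: bernstein style concentration} matches exactly the weights defining $\mathbf{H}_t(\theta_*)$ in \eqref{eq: Hm norm main}. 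The cross-terms inside a single round carry negative covariance ($-\mu_i\mu_j$), so they only make the true conditional covariance matrix smaller than the diagonal surrogate used in $\mathbf{H}_t(\theta_*)$, which is the direction of domination that the Bernstein argument tolerates when working with an upper bound on the variance proxy.

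With the hypotheses in place, Theorem~\ref{thm: bernstein style concentration} yields, uniformly over $t\ge 1$ with probability at least $1-\delta$,
\begin{equation*}
\bigl\|g_t(\hat\theta_t)-g_t(\theta_*)\bigr\|_{\mathbf{H}_t^{-1}(\theta_*)} \;\le\; \tfrac{\sqrt{\lambda_t}}{2} + \tfrac{2}{\sqrt{\lambda_t}}\log\!\Bigl(\tfrac{\det(\mathbf{H}_t(\theta_*))^{1/2}\lambda_t^{-d/2}}{\delta}\Bigr) + \tfrac{2d}{\sqrt{\lambda_t}}\log 2.
\end{equation*}
To match the closed form of $\gamma_t(\delta)$ in \eqref{eq: gamma value main}, I would convert the determinant bound via the standard AM-GM / trace argument: since $\dot\mu_i\le L$ and $\|x_{s,i}\|_2\le 1$ by Assumption~\ref{assm: bounded parameters assumption}, the trace of $\mathbf{H}_t(\theta_*)$ is at most $d\lambda_t + LKt$, hence $\det(\mathbf{H}_t(\theta_*))\le (\lambda_t+LKt/d)^d$. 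Substituting this upper bound into the logarithm gives exactly $\gamma_t(\delta)$.

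\textbf{Main obstacle.} The principal subtlety, and the only place where the multinomial setting genuinely departs from the scalar logistic case treated in~\cite{faury2020improved}, is the within-round dependence among $\{r_{s,i}\}_{i\in\mathcal{Q}_s}$: Theorem~\ref{thm: bernstein style concentration} is phrased for a scalar martingale difference indexed by time, so one must either flatten the double index $(s,i)$ into a single predictable stream with an auxiliary intra-round filtration, or argue directly at the matrix level that replacing the true conditional covariance by its diagonal part $\sum_i\dot\mu_i x_{s,i}x_{s,i}^\top$ only enlarges $\mathbf{H}_t$ and therefore cannot invalidate the $\mathbf{H}_t^{-1}$-norm concentration. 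I would take the second route, as it avoids introducing spurious orderings and keeps the argument consistent with the matrix-variance interpretation used throughout Section~\ref{sec: confidence set main}.
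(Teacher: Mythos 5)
Your overall route is the same as the paper's: identify $g_t(\hat{\theta}_t)-g_t(\theta_*)$ with a self-normalized vectorial martingale, check that the conditional variance of $\varepsilon_{s,i}=r_{s,i}-\mu_i(\mathbf{X}_{\mathcal{Q}_s}^\top\theta_*)$ equals $\dot{\mu}_i(\mathbf{X}_{\mathcal{Q}_s}^\top\theta_*)$, invoke Theorem~\ref{thm: bernstein style concentration}, and clean up the determinant with the trace bound of Lemma~\ref{lem: determinant inequality}. There is, however, one concrete error: your opening identity drops the regularization term. From Eq~(\ref{eq: gt main}), $g_t(\hat{\theta}_t)=\sum_{s=1}^{t-1}\sum_{i\in\mathcal{Q}_s}r_{s,i}x_{s,i}$ while $g_t(\theta_*)=\sum_{s=1}^{t-1}\sum_{i\in\mathcal{Q}_s}\mu_i(\mathbf{X}_{\mathcal{Q}_s}^\top\theta_*)x_{s,i}+\lambda_t\theta_*$, so the difference is $S_{t,K}-\lambda_t\theta_*$, not $S_{t,K}$. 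This term does not cancel and must be handled: the paper bounds it separately via $\enVert{\lambda_t\theta_*}_{\mathbf{H}_t^{-1}(\theta_*)}\leq\sqrt{\lambda_t}\,S$ (using $\mathbf{H}_t(\theta_*)\succeq\lambda_t\mathbf{I}_d$ for $\lambda_t\geq 1$) and folds that $\sqrt{\lambda_t}S$ slack into $\gamma_t(\delta)$ through a triangle inequality before the Bernstein bound on $S_{t,K}$ is applied. As written, your chain of inequalities bounds a quantity that is not the left-hand side of the lemma; the fix is two lines, but the step as stated fails.

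On the within-round dependence, you are actually more careful than the paper: the paper simply sets $S_t\coloneqq S_{t,K}$ and invokes Theorem~\ref{thm: bernstein style concentration} without comment on the double index $(s,i)$ or the negative cross-covariances $-\mu_i\mu_j$ inside a round. Your proposed resolution (the diagonal surrogate $\sum_i\dot{\mu}_i x_{s,i}x_{s,i}^\top$ dominates the true conditional covariance, and domination is the direction a Bernstein argument tolerates) is the right intuition, but Theorem~\ref{thm: bernstein style concentration} as stated takes scalar martingale differences and scalar variances $\sigma_s^2$, so making this airtight still requires either re-deriving the inequality with a matrix variance proxy or flattening $(s,i)$ into a single predictable stream with an intra-round filtration. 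Neither is carried out in the paper, so this part of your sketch identifies a real gap shared by the source rather than one you introduced.
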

\begin{proof}
$\hat{\theta}_t$ is the maximizer of the regularized log-likelihood:
\begin{align*}
    \mathcal{L}_t^{\lambda_t}(\theta) &= \sum_{s=1}^{t-1}\sum_{i\in \mathcal{Q}_s} r_{s,i}\log\left(\mu_i(\designQs^\top \theta)\right) - \frac{\lambda_t}{2}\enVert{\theta}_2^2,
\end{align*}
where $\mu_i(\designQs^\top\theta)$ is given by Eq~(\ref{eq: rewards eq}) as $\frac{e^{x^\top_{s,i}\theta}}{1+\sumInAssortSj e^{x^\top_{s,j}\theta}}$. Solving for $\nabla_\theta \mathcal{L}_t^{\lambda_t} = 0$, we obtain:
\begin{align*}
    \sum_{s=1}^{t-1}\sumInAssortS \mu_i(\designQs^\top\theta)x_{s,i} +\lambda_t\hat{\theta}_t= \sum_{s=1}^{t-1}\sumInAssortS r_{s,i}x_{s,i}
\end{align*}
This result, combined with the definition of $g_t(\theta_*) = \sum_{s=1}^{t-1}\sumInAssortS \mu_i(\designQs^\top \theta_*)x_{s,i}\\+\lambda_t\theta_*$ yields:
\begin{align*}
g_t(\hat{\theta}_t) - g_t(\theta_*) &= \sum_{s=1}^{t-1}\sumInAssortS  \varepsilon_{s,i}x_{s,i} - \lambda_t \theta_*\\
&= S_{t,K} - \lambda_t\theta_*
\end{align*}
where we denoted $\varepsilon_{s,i}\coloneqq r_{s,i}-\mu_i(\designQs^\top \theta_*)$ for all $s\geq 1$ and $i\in[K]$ and $S_{t,K} \coloneqq \sum_{s=1}^{t-1}\sumInAssortS \varepsilon_{s,i}x_{s,i}$ for all $t\geq 1$. For any $\lambda_t \geq 1$, from the definition of $\mathbf{H}_t(\theta_*)$ it follows that $\mathbf{H}^{-1}_t(\theta_*) \preceq \mathbf{I}_d$. Hence, $\enVert{\lambda_t\theta_*}_{\mathbf{H}^{-1}_t(\theta_*)} \leq \enVert{\lambda_t \theta_*}_2$. Later in the proof of Theorem~\ref{thm: regret upper bounds appendix}, we present our choice of $\lambda_t$ which always ensures $\lambda_t\geq 1$. 
\begin{align}\label{eq: bernstein multinomial 1}
    \enVert{g_t(\hat{\theta}_t) - g_t(\theta_*)}_{\mathbf{H}_t^{-1}(\theta_*)} \leq   \enVert{S_{t,K}}_{\mathbf{H}_t^{-1}(\theta_*)} + \sqrt{\lambda_t} S
\end{align}
Conditioned on the filtration set $\mathcal{F}_{t,i}$ (see Section~\ref{sec: model setting} to review the definition of the filtration set), $\varepsilon_{s,i}$ is a martingale difference is bounded by $1$ as we assume the maximum reward that is accrued at any round is upper bounded by $1$. We calculate for all $s \geq 1$:
\begin{align}
    &\mathbb{E}\left[\varepsilon^2_{s,i} \big\vert \mathcal{F}_{t}\right] = \mathbb{E}\left[\del{r_{s,i}-\mu_i(\designQs^\top \theta_*)}^2 \bigg\vert \mathcal{F}_t\right] \nonumber\\ = & \mathbb{V}\left[r_{s,i}\vert \mathcal{F}_t\right] = \mu_i(\designQs^\top \theta_*) \del{1- \mu_i(\designQs^\top \theta_*)}.
\end{align}
Also from Remark~\ref{lem: softmax derivative}, we have :
\begin{equation*}
    \Mmodeldot_i(\designQs^\top \theta_*) = \mu_i(\designQs^\top \theta_*) \del{1- \mu_i(\designQs^\top \theta_*)}.
\end{equation*}
Therefore setting ${H}_t$  as $ \mathbf{H}_t(\theta_*)= \sum_{s=1}^{t-1}\sumInAssortS \Mmodeldot_i(\designQs^\top \theta_*) \xsi \xsi^\top + \lambda_t\mathbf{I}_d $ and $U_t$ as $S_{t,K}$ we invoke an instance of Theorem C.6 in~\cite{perivier2022dynamic} to obtain:
\begin{align}\label{eq: bernstein multinomial 2}
    1-\delta \leq & \mathbb{P}\left(\forall t\geq 1, \enVert{S_t}_{\mathbf{H}_t^{-1}(\theta_*)}\leq \frac{\sqrt{\lambda_t}}{2} +  \frac{2}{\sqrt{\lambda_t}}\log\left(\frac{2^d\det(\mathbf{H}_t(\theta_*))^{1/2}\lambda_t^{-d/2}}{\delta}\right) \right. \nonumber\\ &+\left.  \frac{2d}{\sqrt{\lambda_t}}\log(2)\right)\nonumber\\
\end{align}

We simplify $\det(\mathbf{H}_t(\theta_*))$, using the fact that the multinomial logistic function is $L$-Lipschitz (see Assumption~\ref{assm: kappa assumption}):
\begin{align*}
    \det(\mathbf{H}_t(\theta_*)) = & \det\left(\sum_{s=1}^{t-1}\sumInAssortS \Mmodeldot_i(\designQs^\top \theta_*) \xsi \xsi^\top + \lambda_t\mathbf{I}_d\right) \\ \leq & L ^d \det\left(\sum_{s=1}^{t-1}\sumInAssortS  \xsi \xsi^\top + \frac{\lambda_t}{L }\mathbf{I}_d\right).
\end{align*}
Further, using Lemma~\ref{lem: determinant inequality} and using $\enVert{x_{s,i}}_2\leq 1$ we write:
\begin{align*}
L ^d \det\left(\sum_{s=1}^{t-1}\sumInAssortS  \xsi \xsi^\top + \frac{\lambda_t}{L }\mathbf{I}_d\right) \leq \left(\lambda_t + \frac{ L Kt}{d}\right)^d.
\end{align*}

This we simplify Eq~(\ref{eq: bernstein multinomial 2}) as:
\begin{align}
  1-\delta\leq &\mathbb{P}\left(\forall t\geq 1, \enVert{S_t}_{\mathbf{H}_t^{-1}(\theta_*)}\leq \frac{\sqrt{\lambda_t}}{2} +  \frac{2}{\sqrt{\lambda_t}}\log\left(\frac{\left(\lambda_t +L  Kt/d\right)^{d/2}\lambda_t^{-d/2}}{\delta}\right) \right.\nonumber \\ &+ \left. \frac{2d}{\sqrt{\lambda_t}}\log(2)\right)\nonumber\\
  &\leq \mathbb{P}\left(\forall t\geq 1, \enVert{S_t}_{\mathbf{H}_t^{-1}(\theta_*)}\leq \frac{\sqrt{\lambda_t}}{2} +  \frac{2}{\sqrt{\lambda_t}}\log\left(\frac{\left(1 +\frac{ L K t}{\lambda_t d}\right)^{d/2}}{\delta}\right) \right.\nonumber \\ &+ \left. \frac{2d}{\sqrt{\lambda_t}}\log(2)\right)\nonumber
  \\
  &= \mathbb{P}\left(\forall t\geq 1, \enVert{S_t}_{\mathbf{H}_t^{-1}(\theta_*)}\leq \gamma_t(\delta)-\sqrt{\lambda_t}S \right)\label{eq: bernstein multinomial 3}
\end{align}
Combining Eq~(\eqref{eq: bernstein multinomial 1}) and Eq~(\eqref{eq: bernstein multinomial 3}) yields:
\begin{align*}
    &\mathbb{P}\left(\forall t\geq 1, \,\enVert{g_t(\hat{\theta}_t) - g_t(\theta_*)}_{\mathbf{H}_t^{-1}(\theta_*)} \leq \gamma_t(\delta)\right) \\
    &\geq \mathbb{P}\left( \forall t\geq 1, \,\enVert{S_t}_{\mathbf{H}_t^{-1}(\theta_*)} + \sqrt{\lambda_t} S\leq \gamma_t(\delta)\right)\\
    &\geq  1-\delta.
\end{align*}
This completes the proof.
\end{proof}

It is insightful to compare Theorem~\ref{thm: bernstein style concentration} with Theorem 1 of~\cite{abbasi2011improved}. The later is re-stated below:

\begin{thm}\label{thm: gaussian concentration}
Let $\{\mathcal{F}\}_{t=0}^{\infty}$ be a filtration. Let $\{\eta\}_{t=1}^{\infty}$ be a real-valued stochastic process such that $\eta_t$ is $\mathcal{F}_t$-measurable and $\eta_t$ is conditionally $R$-sub-Gaussian for some $R\geq0$, i.e
$$
\forall\, \lambda_t\,\in\mathbb{R},\qquad \mathbb{E}\sbr{\exp(\lambda_t\eta_t)\mid \mathcal{F}_{t-1}}\leq \exp\del{\frac{\lambda_t^2R^2}{2}}.
$$
Let $\{x_t\}_{t=1}^{\infty}$ be an $\mathbb{R}^d-$valued stochastic process such that $X_t$ is $\mathcal{F}_{t-1}$-measurable. Assume $\mathbf{V}$ is a $d \times d$ positive definite matrix. For any $t\geq 0$, define:
$$
\overline{\mathbf{V}}_t = \mathbf{V} +\sum_{s=1}^t x_s x_s^\top, \qquad \quad S_t=\sum_{s=1}^t \eta_s x_s.
$$
Then, for any $\delta >0$, with probability at least $1-\delta$. for all $t\geq 0$,
$$
||S_t||_{\overline{\mathbf{V}}_t^{-1}} \leq 2R \log\del{\frac{\det(\overline{\mathbf{V}})^{-\nicefrac{1}{2}} \det(\mathbf{V})^{-\nicefrac{1}{2}}}{\delta}}.
$$
\end{thm}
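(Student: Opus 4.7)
My plan is to use the classical \emph{method of mixtures} of de la Peña, Klass and Lai, which reduces the self-normalized concentration to an application of Ville's (maximal) inequality applied to a non-negative supermartingale obtained by mixing a parametrised exponential martingale against a Gaussian prior. The reason this approach is well suited here is that $\|S_t\|_{\bar{\mathbf V}_t^{-1}}^2$ is exactly the quantity that appears after performing a Gaussian integral over the direction of tilting, so the non-uniform covering that would be needed for a naive union-bound argument is replaced by an exact integration.

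First I would introduce, for each fixed $\lambda\in\mathbb{R}^d$, the scalar process
\[
M_t^\lambda \coloneqq \exp\!\left(\lambda^\top S_t - \tfrac{R^2}{2}\,\lambda^\top \Bigl(\sum_{s=1}^t x_s x_s^\top\Bigr)\lambda\right),\qquad M_0^\lambda = 1.
\]
Using the conditional $R$-sub-Gaussianity of $\eta_t$ and the $\mathcal{F}_{t-1}$-measurability of $x_t$, a one-step computation shows $\mathbb{E}[M_t^\lambda\mid \mathcal{F}_{t-1}]\le M_{t-1}^\lambda$, so $M_t^\lambda$ is a non-negative supermartingale with $\mathbb{E}[M_t^\lambda]\le 1$. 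Next I would mix this family against a Gaussian prior: let $h$ be the density of $\mathcal{N}(0,\mathbf{V}^{-1})$ and define
\[
\bar M_t \coloneqq \int_{\mathbb{R}^d} M_t^\lambda\,h(\lambda)\,d\lambda.
\]
By Fubini $\bar M_t$ is itself a non-negative supermartingale with $\mathbb{E}[\bar M_t]\le 1$.

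The main technical step is to evaluate $\bar M_t$ in closed form. Collecting the quadratic and linear terms in $\lambda$ inside the exponent yields a Gaussian integral with precision matrix $R^2\,\bar{\mathbf{V}}_t$ (where $\bar{\mathbf{V}}_t = \mathbf{V}+\sum_{s\le t}x_s x_s^\top$) and linear term $\lambda^\top S_t$; completing the square and using the standard formula $\int \exp(-\tfrac12 \lambda^\top A\lambda + b^\top\lambda)\,d\lambda = (2\pi)^{d/2}\det(A)^{-1/2}\exp(\tfrac12 b^\top A^{-1}b)$ gives
\[
\bar M_t = \left(\frac{\det \mathbf{V}}{\det \bar{\mathbf{V}}_t}\right)^{1/2}\exp\!\left(\frac{1}{2R^2}\,\|S_t\|_{\bar{\mathbf{V}}_t^{-1}}^2\right).
\]
This is the step I expect to be the only real obstacle: I will have to be careful that the quadratic form in $\lambda$ remains positive definite (it does, because $\mathbf{V}\succ 0$) and that the determinant ratio comes out in the direction that, after applying the maximal inequality, matches the statement of the theorem.

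Finally I would apply Ville's inequality to the non-negative supermartingale $\bar M_t$: $\mathbb{P}(\exists\,t\ge 0:\bar M_t\ge 1/\delta)\le \delta \mathbb{E}[\bar M_0]\le \delta$. Substituting the closed form of $\bar M_t$ and taking logarithms rearranges, on the complementary event, to
\[
\tfrac{1}{2R^2}\|S_t\|_{\bar{\mathbf{V}}_t^{-1}}^2 \;\le\; \log\!\left(\frac{\det(\bar{\mathbf{V}}_t)^{1/2}\det(\mathbf{V})^{-1/2}}{\delta}\right)\quad\text{for all }t\ge 0,
\]
which after taking square roots yields the claimed self-normalized bound (up to the cosmetic difference between $2R\log(\cdot)$ and $R\sqrt{2\log(\cdot)}$ in the displayed constant). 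No union bound over $t$ is needed, because Ville's inequality already handles the supremum over time — this is the structural advantage the mixture argument buys us over Theorem~\ref{thm: bernstein style concentration}, which instead uses a peeling/stopping argument to handle the predictable-variance weighting in $\mathbf{H}_t$.
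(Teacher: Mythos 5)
The paper does not actually prove this statement: it is quoted verbatim as ``Theorem 1 of \cite{abbasi2011improved}'' purely to contrast the sub-Gaussian self-normalized bound with the Bernstein-style Theorem~\ref{thm: bernstein style concentration}, so there is no in-paper proof to compare against. Your method-of-mixtures argument is the standard proof of the cited result and is essentially correct: the one-step supermartingale verification, the Gaussian mixture, the closed-form evaluation of $\bar M_t$, and Ville's inequality all go through. Two small points deserve attention. First, with the prior $\mathcal{N}(0,\mathbf{V}^{-1})$ the precision matrix obtained after completing the square is $R^2\sum_{s\le t}x_sx_s^\top+\mathbf{V}$, not $R^2\overline{\mathbf{V}}_t$; to land exactly on the closed form you display you should mix against $\mathcal{N}\bigl(0,(R^2\mathbf{V})^{-1}\bigr)$ (equivalently, absorb a factor of $R$ into the tilting parameter), which is a harmless recalibration but should be stated. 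Second, the inequality your argument actually yields,
\begin{equation*}
\|S_t\|_{\overline{\mathbf{V}}_t^{-1}} \le R\sqrt{2\log\!\left(\frac{\det(\overline{\mathbf{V}}_t)^{1/2}\det(\mathbf{V})^{-1/2}}{\delta}\right)},
\end{equation*}
is the correct form of the Abbasi-Yadkori bound; the displayed statement in the paper, with $2R\log(\cdot)$ in place of $R\sqrt{2\log(\cdot)}$ and $\det(\overline{\mathbf{V}})^{-1/2}$ (wrong sign of the exponent and missing subscript $t$) in the numerator, is a garbled transcription. So the ``cosmetic difference'' you flag is in fact an error in the statement as printed, not a gap in your proof.
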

Theorem~\ref{thm: gaussian concentration} makes an uniform sub-Gaussian assumption and unlike Theorem~\ref{thm: bernstein style concentration} does not take into account local variance information.

\subsection{Local information preserving norm}\label{sec: Local Information Preserving Norm appendix} 
Deviating from the previous analyses as in~\cite{filippi2010parametric,li2017provably}, we describe norm which preserves the local information
The matrix $\designQs$ is the design matrix composed of the contexts $x_{s,1},x_{s,2},\cdots,x_{s,K}$ received at time step $s$ as its columns. The expected reward due to the $i_{th}$ item in the assortment is given by:
$$\mu_i(\designQs^\top\theta) = \frac{e^{x^\top_{s,i}\theta}}{1+\sumInAssortSj e^{x^\top_{s,j}\theta}}.
$$
Further, we consider the following integral:
\begin{align}\label{eq: line integral multinomial}
    \int_{\nu=0}^1 \Mmodeldot_i\del{\nu \designQs^\top\theta_2+\del{1-\nu}\designQs^\top\theta_1}\cdot d\nu 
    & = \int_{x_{s,i}^\top\theta_1}^{x_{s,i}^\top\theta_2}\frac{1}{x_{s,i}^\top(\theta_2-\theta_1)} \Mmodeldot_i(t_i)\cdot dt_i ,
\end{align}
where $\Mmodeldot_i$ is the partial derivative of $\Mmodel_i$ in the direction of the $i_{th}$ component and $\int_{x_{s,i}^\top\theta_1}^{x_{s,i}^\top\theta_2} \Mmodeldot_i(t_i)\cdot dt_i$ represents integration of $\Mmodeldot(\cdot)$ with respect to the coordinate $t_i$ ( hence the limits of the integration only consider change in the coordinate $t_i$). For notation purposes which would become clear later, we define:
\begin{align}\label{eq: gt relation helper 2}
    \alpha_i(\designQs,\theta_1,\theta_2)x_{s,i}^\top(\theta_2-\theta_1) 
    &\coloneqq \mu_{i}(\designQs^\top\theta_2)-\mu_{i}(\designQs^\top\theta_1)\nonumber\\
    &=\frac{e^{x_{s,i}^\top\theta_2}}{1+\sumInAssortSj e^{x_{s,j}^\top\theta_2}} - \frac{e^{x_{s,i}^\top\theta_1}}{1+\sumInAssortSj e^{x_{s,j}^\top\theta_1}}\\
    &= \int_{x_{s,i}^\top\theta_1}^{x_{s,i}^\top\theta_2} \Mmodeldot_i(t_i)\cdot dt_i\nonumber,
\end{align}
where the second step is due to Fundamental Theorem of Calculus. We have exploited the two ways to view the multinomial logit function: sum of individual probabilities and a vector valued function. We write:
\begin{equation}\label{eq: gt relation helper 1}
\sumInAssortS\alpha_i(\designQs,\theta_1,\theta_2)x_{s,i}^\top(\theta_2-\theta_1) = \sumInAssortS\int_{\nu=0}^1 \Mmodeldot_i\del{\nu \designQs^\top\theta_2+\del{1-\nu}\designQs^\top\theta_1}\cdot d\nu 
\end{equation}
We also have:
\begin{align}\label{eq: FTC multinonmial}
    \Mmodel(\designQs^\top\theta_1)- \Mmodel(\designQs^\top\theta_2) = \sum_{i=1}^K\alpha_i(\designQs,\theta_2,\theta_1)x_{s,i}^\top(\theta_1-\theta_2).
\end{align}
It follows that:
\begin{align*}
    g(\theta_1) -g(\theta_2) =&
    \sum_{s=1}^{t-1}\sumInAssortS \del{\frac{e^{x_{s,i}^\top\theta_1}}{1+\sumInAssortSj e^{x_{s,j}^\top\theta_1}} -  \frac{e^{x_{s,i}^\top\theta_2}}{1+\sumInAssortSj e^{x_{s,j}^\top\theta_2}}}x_{s,i} \nonumber \\ &+ \lambda_t(\theta_1-\theta_2)\\
    =&\sum_{s=1}^{t-1}\sumInAssortS \alpha_i(\designQs,\theta_2,\theta_1)x_sx_x^\top(\theta_1-\theta_2) +\lambda_t(\theta_1-\theta_2)\\
    =& \mathbf{G}_t(\theta_2,\theta_1)(\theta_1-\theta_2),
\end{align*}
where $\mathbf{G}_t\del{\theta_1,\theta_2} \coloneqq \sum_{s=1}^{t-1}\sumInAssortS \alpha_i\del{\designQs,\theta_1,\theta_2}x_sx_s^\top+\lambda_t\Imatrixd$. Since $\alpha\del{\designQs,\theta_1,\theta_2} \geq \frac{1}{\kappa}$ (from Assumption~\ref{assm: kappa assumption}), therefore $\mathbf{G}_t(\theta_1,\theta_2) \succ \mathbf{O}_{d\times d}$. Hence we get:
\begin{equation}\label{eq: theta and G relation multinomial}
\enVert{\theta_1-\theta_2}_{\mathbf{G}_t(\theta_2,\theta_1)} = \enVert{g(\theta_1)-g(\theta_2)}_{\mathbf{G}_t^{-1}(\theta_2,\theta_1)}.
\end{equation}

\subsection{Self-Concordance Style Relations for Multinomial Logistic Function}\label{sec: self concordance results appendix}

\begin{lemma}\label{lem: application of self concordance softmax}
For an assortment $\mathcal{Q}_s$ and $\theta_1,\theta_2\,\in\,\Theta$, the following holds:
\begin{align*}
\sumInAssortS \alpha_i(\designQs,\theta_2,\theta_1) &=\sumInAssortS \int_{\nu=0}^1 \Mmodeldot_i\del{\nu \designQs^\top\theta_2+\del{1-\nu}\designQs^\top\theta_1}\cdot d\nu \\
&\geq \sumInAssortS \Mmodeldot_i(\designQs^\top \theta_1)\del{1+|x_{s,i}^\top\theta_1-x_{s,i}^\top\theta_2|}^{-1}
\end{align*}
\end{lemma}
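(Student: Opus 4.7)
The plan is to reduce the lemma to a term-by-term inequality and then invoke the classical scalar self-concordance of the sigmoid, bridged to the multinomial case through a convenient re-parametrization. Specifically, I would prove that for each $i \in \mathcal{Q}_s$,
\begin{equation*}
\int_0^1 \Mmodeldot_i\bigl(\nu\designQs^\top\theta_2 + (1-\nu)\designQs^\top\theta_1\bigr)\,d\nu \,\geq\, \frac{\Mmodeldot_i(\designQs^\top\theta_1)}{1+|x_{s,i}^\top\theta_1 - x_{s,i}^\top\theta_2|},
\end{equation*}
which, once established, yields the lemma by summing over $i\in\mathcal{Q}_s$. The first (equality) claim of the lemma is immediate from combining the definition of $\alpha_i$ in Eq~\eqref{eq: gt relation helper 2} with Eq~\eqref{eq: line integral multinomial}, namely the fundamental theorem of calculus applied to $\nu\mapsto\Mmodel_i\bigl(\nu\designQs^\top\theta_2 + (1-\nu)\designQs^\top\theta_1\bigr)$ along the line segment from $\designQs^\top\theta_1$ to $\designQs^\top\theta_2$.

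For the component-wise inequality, I would first recast the softmax coordinate in the scalar sigmoid form $\Mmodel_i(z) = \sigma(w_i(z))$ with $w_i(z) := z_i - \log\bigl(1+\sum_{j\neq i}e^{z_j}\bigr)$ and $\sigma$ the standard logistic function. Differentiating gives $\Mmodeldot_i(z) = \sigma'(w_i(z))$, so the integrand can be viewed as $\sigma'$ evaluated along the scalar path $\nu \mapsto w_i(z(\nu))$, where $z(\nu) := \nu\designQs^\top\theta_2 + (1-\nu)\designQs^\top\theta_1$. The scalar sigmoid obeys the classical self-concordance bound $|\sigma''| \leq \sigma'$, from which one obtains the pointwise inequality $\sigma'(w_i(z(\nu))) \geq \sigma'(w_i(z(0)))\exp(-|w_i(z(\nu)) - w_i(z(0))|)$.

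The hardest step will be controlling $|w_i(z(\nu)) - w_i(z(0))|$ by $\nu|x_{s,i}^\top(\theta_2-\theta_1)|$, which is what is needed to match the denominator in the lemma. A direct computation shows $dw_i/d\nu = x_{s,i}^\top(\theta_2-\theta_1) - \sum_{j\neq i}\frac{\Mmodel_j(z(\nu))}{1-\Mmodel_i(z(\nu))}x_{s,j}^\top(\theta_2-\theta_1)$, so the path derivative couples to perturbations along \emph{every} item in the assortment through the softmax denominator, not just coordinate $i$. To absorb the extra terms and still reach a bound involving only $|x_{s,i}^\top(\theta_1-\theta_2)|$, I would exploit the sub-probability identity $\sum_{j\neq i}\Mmodel_j/(1-\Mmodel_i) \leq 1$ jointly with the multinomial generalized self-concordance of~\cite{zhang2015disco}; this is where the argument genuinely departs from the scalar logistic case treated in~\cite{faury2020improved}.

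Finally, once the exponential lower bound on $\Mmodeldot_i(z(\nu))/\Mmodeldot_i(z(0))$ is in hand, an explicit integration gives $\int_0^1 e^{-\nu c}\,d\nu = (1-e^{-c})/c \geq 1/(1+c)$, where the last inequality follows from $e^{c} \geq 1+c$ and the parameter is $c = |x_{s,i}^\top(\theta_1-\theta_2)|$. This delivers the claimed component-wise bound, and summing over $i \in \mathcal{Q}_s$ closes the proof of the lemma.
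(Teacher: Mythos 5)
Your equality step and the overall template (a log-derivative bound on $\Mmodeldot_i$, an exponential envelope, and the elementary inequality $(1-e^{-c})/c \ge (1+c)^{-1}$) match the paper's proof, which also argues item by item. The difference lies in how the path from $\designQs^\top\theta_1$ to $\designQs^\top\theta_2$ is handled, and that is where your proposal has a genuine gap. You correctly compute $\frac{d}{d\nu}w_i(z(\nu)) = x_{s,i}^\top(\theta_2-\theta_1) - \sum_{j\neq i}\frac{\Mmodel_j(z(\nu))}{1-\Mmodel_i(z(\nu))}\,x_{s,j}^\top(\theta_2-\theta_1)$, so the increment $|w_i(z(\nu))-w_i(z(0))|$ is at best controlled by $\nu\bigl(|x_{s,i}^\top(\theta_2-\theta_1)| + \max_{j\neq i}|x_{s,j}^\top(\theta_2-\theta_1)|\bigr)$; the sub-probability bound $\sum_{j\neq i}\Mmodel_j/(1-\Mmodel_i)\le 1$ cannot remove the second term, and the appeal to \cite{zhang2015disco} is left entirely unexecuted. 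Worse, the intermediate claim you are aiming for, $\sigma'(w_i(z(\nu)))\ge\sigma'(w_i(z(0)))\,e^{-\nu|x_{s,i}^\top(\theta_2-\theta_1)|}$, is false as stated: take $x_{s,i}^\top(\theta_2-\theta_1)=0$ while some other coordinate $x_{s,j}^\top(\theta_2-\theta_1)$ is large; the right-hand side is then the constant $\Mmodeldot_i(\designQs^\top\theta_1)$, while $\Mmodeldot_i(z(\nu))=\Mmodel_i(1-\Mmodel_i)$ can be driven arbitrarily close to $0$ along the path. So the step you flag as ``hardest'' is not merely hard; with the tools you name it cannot be closed.

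For comparison, the paper never confronts these cross-terms: it rewrites the $\nu$-integral as a one-dimensional integral in the single coordinate $t_i$, applies the diagonal self-concordance relation $|\nabla^2\mu_{i,i}|\le\Mmodeldot_i$ to $\frac{d}{dt_i}\log\Mmodeldot_i(t_i)$ with the off-$i$ coordinates effectively held fixed, obtains the two-sided envelope $\Mmodeldot_i(z_1)e^{-|z-z_1|}\le\Mmodeldot_i(z)\le\Mmodeldot_i(z_1)e^{|z-z_1|}$, and integrates. Your reparametrization $\Mmodel_i=\sigma(w_i)$ is more honest about the multivariate dependence and exposes exactly the terms the paper's coordinate-wise argument suppresses; but without a mechanism to absorb $\sum_{j\neq i}\frac{\Mmodel_j}{1-\Mmodel_i}x_{s,j}^\top(\theta_2-\theta_1)$ the proof is incomplete. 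To salvage your route you would need either an increment bound on $w_i$ that genuinely involves only the $i$-th coordinate (which does not hold), or a weaker conclusion whose denominator involves $\max_{j\in\mathcal{Q}_s}|x_{s,j}^\top(\theta_1-\theta_2)|$, which is not what the lemma asserts.
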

\begin{proof}
We write:
\begin{align}\label{eq: application of self con 5}
    \int_{\nu=0}^1 \Mmodeldot_i\del{\nu \designQs^\top\theta_2+\del{1-\nu}\designQs^\top\theta_1}\cdot d\nu  
    & = \sumInAssortS \int_{x_{s,i}^\top\theta_1}^{x_{s,i}^\top\theta_2}\frac{1}{x_{s,i}^\top(\theta_2-\theta_1)} \Mmodeldot_i(t_i)\cdot dt_i,
\end{align}
where $\int_{x_{s,i}^\top\theta_1}^{x_{s,i}^\top\theta_2} \Mmodeldot_i(t_i)\cdot dt_i$ represents integration of $\Mmodeldot(\cdot)$ with respect to the coordinate $t_i$ ( hence the limits of the integration only consider change in the coordinate $t_i$). For some $z>z_1\,\in\,\mathbb{R}$, consider:
\begin{align*}
    \int_{z_1}^{z} \frac{d}{dt_i}\log\del{\dot{\mu}_i(t_i)}\cdot dt_i = \int_{z_1}^{z} \frac{\nabla^2\mu_{i,i}(t_i)}{\Mmodeldot_i(t_i)} dt_i,
\end{align*}
where $\nabla^2\mu_{i,i}(\cdot)$ is the double derivative of $\Mmodel(\cdot)$. Using Lemma~\ref{lem: self concordance property softmax}, we have $-1 \leq\frac{\nabla^2\mu_{i,i}(\cdot)}{\Mmodeldot_i(\cdot)}\leq 1$. Thus we get:
\begin{equation*}
    -(z-z_1)\leq\int_{z_1}^{z} \frac{d}{dt_i}\log\del{\dot{\mu}_i(t_i)}.dt_i\leq (z-z_1)
\end{equation*}
Using Fundamental Theorem of Calculus, we get:
\begin{align}\label{eq: application of self con 1}
    &-(z-z_1)\leq\log\del{\dot{\mu}_i(z)}- \log\del{\dot{\mu}_i(z_1)}\leq (z-z_1)\nonumber\\
    \therefore&~\dot{\mu}_i(z_1)\exp(-(z-z_1))\leq \dot{\mu}_i(z) \leq \dot{\mu}_i(z_1)\exp(z-z_1)
\end{align}
Using Eq~(\ref{eq: application of self con 1}) and for $z_2\geq z_1\,\in\,\mathbb{R}$, and for all $i\,\in\,[K]$ and we have:
\begin{align}\label{eq: application of self con 2}
    &\Mmodeldot_i(z_1) \del{1-\exp(-(z_2-z_1))} \leq \int_{z_1}^{z_2} \Mmodeldot(t_i)dt_i \leq \Mmodeldot_i(z_1)\del{\exp(z_2-z_1)-1}\nonumber\\
    \therefore&~\Mmodeldot_i(z_1) \frac{1-\exp(-(z_2-z_1))}{z_2-z_1} \leq \frac{1}{z_2-z_1}\int_{z_1}^{z_2} \Mmodeldot(t_i)dt_i \leq \Mmodeldot_i(z_1)\frac{\exp(z_2-z_1)-1}{z_2-z_1}.
\end{align}

Reversing the role of $z_1$ and $z_2$, such that $z_2\leq z_1$ then again by using Eq~(\ref{eq: application of self con 1}) we write:

\begin{align}\label{eq: application of self con 3}
    \Mmodeldot_i(z_1) \frac{\exp(-(z_1-z_2))-1}{z_2-z_1} \leq \frac{1}{z_2-z_1}\int_{z_1}^{z_2} \Mmodeldot(t_i)dt_i \leq \Mmodeldot_i(z_1)\frac{\exp(z_1-z_2)-1}{z_2-z_1}.
\end{align}

Combining Eq~(\ref{eq: application of self con 2}) and~(\ref{eq: application of self con 3}) and for all $i\,\in\,[K]$ we get:

\begin{align}\label{eq: application of self con 4}
    \Mmodeldot_i(z_1) \frac{1-\exp(-|z_1-z_2|)}{|z_1-z_2|} \leq \frac{1}{z_2-z_1}\int_{z_1}^{z_2} \Mmodeldot(t_i)dt_i.
\end{align}
If $x\geq 0$, then $e^{-x}\leq (1+x)^{-1}$, and therefore $(1-e^{-x})/x \geq (1+x)^{-1}$. Thus we lower bound the left hand side of Eq~(\ref{eq: application of self con 4}) as:
\begin{align*}
    \Mmodeldot_i(z_1)\del{1+|z_1-z_2|}^{-1} \leq \Mmodeldot_i(z_1) \frac{1-\exp(-|z_1-z_2|)}{|z_1-z_2|} \leq \frac{1}{z_2-z_1}\int_{z_1}^{z_2} \Mmodeldot(t_i)dt_i.
\end{align*}
Using above with $z_2 =x_{s,i}^\top\theta_2$ and $z_1 = x_{s,i}^\top\theta_1$ in Eq~(\ref{eq: application of self con 5}) gives:
\begin{align*}
    &\sumInAssortS \int_{\nu=0}^1 \Mmodeldot_i\del{\nu \designQs^\top\theta_2+\del{1-\nu}\designQs^\top\theta_1}\cdot d\nu \\ =& 
     \sumInAssortS \int_{x_{s,i}^\top\theta_1}^{x_{s,i}^\top\theta_2}\frac{1}{x_{s,i}^\top(\theta_2-\theta_1)} \Mmodeldot_i(t_i)\cdot dt_i
      \geq  \sumInAssortS \Mmodeldot_i(\designQs^\top \theta_1)\del{1+|x_{s,i}^\top\theta_1-x_{s,i}^\top\theta_2|}^{-1}.
\end{align*}

\end{proof}

\begin{customlemma}{4}\label{lem: relationship between Gt and Ht}
For all $\theta_1,\theta_2\,\in\,\Theta$ such that $S\coloneqq\max_{\theta\,\in\,\Theta}\enVert{\theta}_2$ (Assumption~\ref{assm: bounded parameters assumption}), the following inequalities hold:
\begin{align*}
    & \mathbf{G}_t(\theta_1,\theta_2) \succeq (1+2S)^{-1}\mathbf{H}_t(\theta_1)\\
    & \mathbf{G}_t(\theta_1,\theta_2) \succeq (1+2S)^{-1}\mathbf{H}_t(\theta_2)
\end{align*}
\end{customlemma}
\begin{proof}
From Lemma~\ref{lem: application of self concordance softmax}, we have:
\begin{align}
\sumInAssortS \alpha_i(\designQs,\theta_2,\theta_1)
&\geq \sumInAssortS \del{1+|x_{s,i}^\top\theta_1-x_{s,i}^\top\theta_2|}^{-1}\Mmodeldot_i(\designQs^\top \theta_1)\nonumber\\
&\geq \sumInAssortS \left(1+\enVert{x_{s,i}}_2\enVert{\theta_1-\theta_2}_2\right)^{-1}\Mmodeldot_i(\designQs^\top \theta_1)\tag{Cauchy-Schwartz}\\
&\geq \sumInAssortS \left(1+2S\right)^{-1}\Mmodeldot_i(\designQs^\top \theta_1) \tag{$\theta_1,\theta_2\in\Theta,\, ||x_{s,i}||_2\leq1$}
\end{align}
Now we write $\mathbf{G}_t(\theta_1,\theta_2)$ as:
\begin{align*}
    \mathbf{G}_t(\theta_1,\theta_2) &= \sum_{s=1}^{t-1} \sumInAssortS \alpha_i(\designQs,\theta_2,\theta_1)\xsi \xsi^\top  + \lambda_t\mathbf{I}_d\\ 
    &\succeq (1+2S)^{-1}\sum_{s=1}^{t-1}\sumInAssortS \Mmodeldot_i(\designQs^\top \theta_1)\xsi \xsi^\top  + \lambda_t\mathbf{I}_d\\
    &= (1+2S)^{-1}\del{\sum_{s=1}^{t-1}\sumInAssortS \Mmodeldot_i(\designQs^\top \theta_1)\xsi \xsi^\top  + (1+2S)\lambda_t\mathbf{I}_d}\\
    &\succeq (1+2S)^{-1}\del{\sum_{s=1}^{t-1}\sumInAssortS \Mmodeldot_i(\designQs^\top \theta_1)\xsi \xsi^\top  + \lambda_t\mathbf{I}_d}\\
    &=(1+2S)^{-1}\mathbf{H}_t(\theta_1).
\end{align*}
Since, $\theta_1$ and $\theta_2$ have symmetric roles in the definition of $\alpha_i(\designQs,\theta_2,\theta_1)$, we also obtain the second relation by a change of variable directly.
\end{proof}

The following Lemma presents a crucial bound over the deviation $(\theta-\theta_*)$, which we extensively use in our derivations.
\begin{customlemma}{5}\label{lem: projection with Gt and Ht relation}
For $\theta \in C_t\del{\delta} $, we have the following relation with probability at least $1-\delta$:
\begin{align}
&\enVert{\theta-\theta_*}_{\mathbf{H}_t(\theta)} \leq 2(1+2S) \gamma_t(\delta).\label{eq: lemma sub1}
\end{align}
\end{customlemma}
\begin{proof}
Since $\theta,\theta_* \in \Theta$, then by Lemma~\ref{lem: relationship between Gt and Ht}, it follows that:
\begin{equation*}
    \enVert{\theta-\theta_*}_{\mathbf{H}_t(\theta)} \leq \sqrt{1+2S}\enVert{\theta-\theta_*}_{\mathbf{G}_t(\theta,\theta_*)}.
\end{equation*}
From triangle inequality, we write :
\begin{align*}
    \enVert{g (\theta_*)-g (\theta)}_{\mathbf{G}_t^{-1}(\theta,\theta_*)} \leq \enVert{g (\theta_*)-g (\hat{\theta}_t)}_{\mathbf{G}_t^{-1}(\theta,\theta_*)} +  \enVert{g (\hat{\theta}_t)-g (\theta)}_{\mathbf{G}_t^{-1}(\theta,\theta_*)},
\end{align*}
where $\hat{\theta}_t$ is the MLE estimate. Further Lemma~\ref{lem: relationship between Gt and Ht} gives:
\begin{align*}
    \enVert{g (\theta_*)-g (\theta)}_{\mathbf{G}_t^{-1}(\theta,\theta_*)} &\leq \sqrt{1+2S}\enVert{g (\theta_*)-g (\hat{\theta}_t)}_{\mathbf{H}_t^{-1}(\theta_*)}\\&+\sqrt{1+2S}\enVert{g (\hat{\theta}_t)-g (\theta)}_{\mathbf{H}_t^{-1}(\theta)}.
\end{align*}
since $\theta$ is the minimizer of $\enVert{g (\theta)-g (\hat{\theta}_t)}_{\mathbf{H}_t^{-1}(\theta)}$, therefore we write:
\begin{align*}
    \enVert{g (\theta_*)-g (\theta)}_{\mathbf{G}_t^{-1}(\theta,\theta_*)} \leq 2\sqrt{1+2S}\enVert{g (\theta_*)-g (\hat{\theta}_t)}_{\mathbf{H}_t^{-1}(\theta_*)}.
\end{align*}
Finally, the Eq~(\ref{eq: lemma sub1}) follows by an application of Lemma~\ref{lem: bernstein confidence bound multinomial} as:
\begin{align*}
    \enVert{g (\theta_*)-g (\theta)}_{\mathbf{H}_t^{-1}(\theta_*)} &\leq \gamma_t(\delta).
\end{align*}

\end{proof}

\subsection{Bounds on prediction error}\label{sec: bounds on prediction error apendix}
\begin{customlemma}{6}\label{lem: prediction error single round softmax helper}
For the assortment chosen by the algorithm \algotwo{}, $\mathcal{Q}_t$ as given by Eq~(\ref{eq: algo decision}) and any $\theta \in C_t(\delta)$ the following holds with probability at least $1-\delta$:
    \begin{align*}
     \alpha_i(\designQt,\theta_*,\theta)&\leq
     \Mmodeldot_i\del{\designQt^\top\theta_*} + 2(1+2S)M \gamma_t(\delta)\enVert{x_{t,i}}_{\mathbf{H}_t^{-1}(\theta_*)}.
\end{align*}
\end{customlemma}
\begin{proof}
Consider the mulinomial logit function:
\begin{align}\label{eq: prediction error helper 2}
    \alpha_i(\designQt,\theta_*,\theta)x_{t,i}^\top(\theta-\theta_*) =\frac{e^{x_{t,i}^\top\theta}}{1+\sumInAssortTj e^{x_{t,j}^\top\theta}} - \frac{e^{x_{t,i}^\top\theta_*}}{1+\sumInAssortTj e^{x_{t,j}^\top\theta_*}}. 
\end{align}
We use second-order Taylor expansion for each component of the multinomial logit function at $a_i$. Consider for all $i\,\in\,[K]$:
\begin{align}\label{eq: prediction error helper 1}
    f_i(r_i) &= \frac{e^{r_i}}{1+e^{r_i}+\sum_{j\in\mathcal{Q}_s,j\neq i}e^{r_j}}\nonumber\\
    & \leq f(a_i) + f_i'(a_i)(r_i-a_i) + \frac{f_i''(a_i)(r_i-a_i)^2}{2}.
\end{align}
In Eq~(\ref{eq: prediction error helper 1}), we substitute: $f_i(\cdot) \to \Mmodel_i$, $r_i\to x_{t,i}^\top\theta$, and $a_i \to x_{t,i}^\top\theta_*$. Thus we re-write Eq~(\ref{eq: prediction error helper 2}) as:
\begin{align*}
    \alpha_i(\designQt,\theta_*,\theta)x_{s,i}^\top(\theta-\theta_*) &\leq \Mmodeldot_i\del{\designQt^\top\theta_*}(x_{t,i}^\top(\theta-\theta_*)) + \ddot{\mu}_i\del{\designQt^\top\theta_*}(x_{t,i}^\top(\theta-\theta_*)^2,\\
    \therefore ~ ~\alpha_i(\designQt,\theta_*,\theta)&\leq \Mmodeldot_i\del{\designQt^\top\theta_*}(x_{t,i}^\top(\theta-\theta_*)) + \ddot{\mu}_i\del{\designQt^\top\theta_*}|x_{t,i}^\top(\theta-\theta_*)|\\
    &\leq \Mmodeldot_i\del{\designQt^\top\theta_*} + M \envert{x_{t,i}^\top(\theta-\theta_*)},
\end{align*}
where we upper bound $\ddot{\mu}_i$ by $M$. An application of Cauchy-Schwarz gives us:
\begin{align}
    \envert{x_{t,i}^\top(\theta_*-\theta)} &\leq \enVert{x_{t,i}}_{\mathbf{H}_t^{-1}(\theta_*)}\enVert{\theta_*-\theta}_{\mathbf{H}_t(\theta_*)}
    \end{align}

Upon Combining the last two equations we get:
\begin{align*}
     \alpha_i(\designQt,\theta_*,\theta)&\leq
     \Mmodeldot_i\del{\designQt^\top\theta_*} + \enVert{x_{t,i}}_{\mathbf{H}_t^{-1}(\theta_*)}\enVert{\theta_*-\theta}_{\mathbf{H}_t(\theta_*)}.
\end{align*}
From Lemma~\ref{lem: projection with Gt and Ht relation} we get:
\begin{align*}
     \alpha_i(\designQt,\theta_*,\theta)&\leq
     \Mmodeldot_i\del{\designQt^\top\theta_*} + 2(1+2S)M \gamma_t(\delta)\enVert{x_{t,i}}_{\mathbf{H}_t^{-1}(\theta_*)}.
\end{align*}

\end{proof}

\begin{customlemma}{3}\label{lem: prediction error single round softmax}
For the assortment chosen by the algorithm \algotwo{}, $\mathcal{Q}_t$ as given by Eq~(\ref{eq: algo decision}) and any $\theta \in C_t(\delta)$ the following holds with probability at least $1-\delta$:
\begin{align*}
\PredErr(\designQt,\theta) \leq & \del{2+4S}\gamma_t(\delta)\sumInAssortT \Mmodeldot_i\del{\designQt^\top\theta_*}\enVert{x_{t,i}}_{\mathbf{H}_t^{-1}(\theta_*)} \\ &+ 4\kappa(1+2S)^2M \gamma_t(\delta)^2\sumInAssortT \enVert{x_{t,i}}^2_{\mathbf{V}_t^{-1}}
\end{align*}
\end{customlemma}
\begin{proof}

\begin{align}
\PredErr(\designQt,\theta)  &=~\envert{ \Mmodel(\designQt^\top \theta)-\Mmodel(\designQt^\top \theta_*)}\nonumber\\
=&~\envert{\sumInAssortT \alpha_i(\designQt,\theta_*,\theta)x_{t,i}^\top(\theta-\theta_*)}\tag{From Eq~(\ref{eq: FTC multinonmial})}\\
\leq&~\envert{\sumInAssortT \alpha_i(\designQt,\theta_*,\theta)\enVert{x_{t,i}}_{\mathbf{H}_t^{-1}(\theta_*)}\enVert{\theta_*-\theta}_{\mathbf{H}_t(\theta_*)}}\tag{Cauchy-Schwarz inequality and Eq~(\ref{eq: theta and G relation multinomial})}\\
\leq&~2(1+2S)\gamma_t(\delta)\sumInAssortT \envert{\alpha_i(\designQt,\theta_*,\theta)\enVert{x_{t,i}}_{\mathbf{H}_t^{-1}(\theta_*)}}\tag{From Lemma~\ref{lem: projection with Gt and Ht relation}}\\
\leq&~2(1+2S)\gamma_t(\delta)\sumInAssortT \left(\Mmodeldot_i\del{\designQt^\top\theta_*}\enVert{x_{t,i}}_{\mathbf{H}_t^{-1}(\theta_*)}\right. \nonumber \\ &+ \left. 2(1+2S) M \gamma_t(\delta)\enVert{x_{t,i}}^2_{\mathbf{H}_t^{-1}(\theta_*)}\right)\tag{From Lemma~\ref{lem: prediction error single round softmax helper}}\\
\end{align}

Upon re-arranging the terms we get:
\begin{align*}
\PredErr(\designQt,\theta) 
\leq & \del{2+4S}\gamma_t(\delta)\sumInAssortT \Mmodeldot_i\del{\designQt^\top\theta_*}\enVert{x_{t,i}}_{\mathbf{H}_t^{-1}(\theta_*)} \\ &+ 4\kappa(1+2S)^2M \gamma_t(\delta)^2\sumInAssortT \enVert{x_{t,i}}^2_{\mathbf{V}_t^{-1}},
\end{align*}
where we use $\mathbf{H}_t^{-1}(\theta_*) \succeq \kappa^{-1}\mathbf{V}_t$ from Assumption~\ref{assm: kappa assumption}. 
\end{proof}

\begin{customcorollary}{7}\label{lem: prediction error sum softmax}
For the assortment chosen by the algorithm \algotwo{}, $\mathcal{Q}_t$ as given by Eq~(\ref{eq: algo decision}) and any $\theta \in C_t(\delta)$ the following holds with probability at least $1-\delta$:
\begin{align*}
\PredErr(\designQt,\theta) \leq&  2\del{1+2S}\gamma_t(\delta)\sumInAssortT \enVert{\tilde{x}_{t,i}}_{\mathbf{J}_t^{-1}} \\ &+ 4\kappa(1+2S)^2M \gamma_t(\delta)^2\sumInAssortT \enVert{x_{t,i}}^2_{\mathbf{V}_t^{-1}},
\end{align*}
where $\tilde{x}_{t,i} = \sqrt{\Mmodeldot_i(\designQt^\top\theta_*)}x_{t,i}$ and $\enVert{x}_{\mathbf{H}^{-1}_t(\theta_*)}= \enVert{x}_{\mathbf{J}^{-1}_t}$.
\end{customcorollary}
\begin{proof}
This directly follows from the uniqueness and realizability of $\theta_*$.

\end{proof}

\subsection{Regret calculation}\label{sec: regret calculation appendix}
The following two lemmas give the upper bounds on the self-normalized
vector summations.
\begin{lemma}\label{lem: sums of norm square Jm}
\begin{align*}
    \sum_{t=1}^T\min\cbr{\sumInAssortT \enVert{\tilde{x}_{t,i}}^2_{\mathbf{J}^{-1}_{T+1}(\theta)},1} &\leq~2d\log\del{1+\frac{L KT}{d\lambda_t}}.
\end{align*}
\end{lemma}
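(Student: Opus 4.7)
}

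The statement is a standard ``elliptical potential'' bound adapted to the assortment setting where up to $K$ context vectors are observed per round. The plan is to first reduce the fixed matrix $\mathbf{J}_{T+1}^{-1}$ appearing in the norm to the running matrix $\mathbf{J}_t^{-1}$, then exploit a log-determinant telescoping identity for rank-$|\mathcal{Q}_t|$ updates, and finally control the terminal determinant by AM--GM on its trace.

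First I would observe that because $\mathbf{J}_{t+1}-\mathbf{J}_t=\sum_{i\in\mathcal{Q}_t}\tilde{x}_{t,i}\tilde{x}_{t,i}^\top\succeq 0$, the sequence $\{\mathbf{J}_t\}$ is increasing in the PSD order, and in particular $\mathbf{J}_{T+1}\succeq \mathbf{J}_t$ for every $t\leq T$. Hence $\mathbf{J}_{T+1}^{-1}\preceq \mathbf{J}_t^{-1}$ and
\[
\sum_{i\in\mathcal{Q}_t}\enVert{\tilde{x}_{t,i}}^2_{\mathbf{J}_{T+1}^{-1}}\;\leq\;\sum_{i\in\mathcal{Q}_t}\enVert{\tilde{x}_{t,i}}^2_{\mathbf{J}_t^{-1}},
\]
so it suffices to bound $\sum_{t=1}^T\min\{\sum_{i\in\mathcal{Q}_t}\|\tilde{x}_{t,i}\|^2_{\mathbf{J}_t^{-1}},\,1\}$.

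Next, assemble the contexts of round $t$ into the $d\times|\mathcal{Q}_t|$ matrix $\widetilde{\mathbf{X}}_t$ with columns $\tilde{x}_{t,i}$, so that $\mathbf{J}_{t+1}=\mathbf{J}_t+\widetilde{\mathbf{X}}_t\widetilde{\mathbf{X}}_t^\top$. The matrix determinant lemma for rank-$|\mathcal{Q}_t|$ updates gives $\det(\mathbf{J}_{t+1})=\det(\mathbf{J}_t)\det(I+\widetilde{\mathbf{X}}_t^\top\mathbf{J}_t^{-1}\widetilde{\mathbf{X}}_t)$. Since $\det(I+A)=\prod_j(1+\lambda_j(A))\geq 1+\mathrm{tr}(A)$ for any PSD $A$, and $\mathrm{tr}(\widetilde{\mathbf{X}}_t^\top\mathbf{J}_t^{-1}\widetilde{\mathbf{X}}_t)=\sum_{i\in\mathcal{Q}_t}\|\tilde{x}_{t,i}\|^2_{\mathbf{J}_t^{-1}}$, we obtain
\[
\log\frac{\det\mathbf{J}_{t+1}}{\det\mathbf{J}_t}\;\geq\;\log\Bigl(1+\sum_{i\in\mathcal{Q}_t}\enVert{\tilde{x}_{t,i}}^2_{\mathbf{J}_t^{-1}}\Bigr).
\]
Combining this with the elementary inequality $\min(x,1)\leq 2\log(1+x)$ for $x\geq 0$ and telescoping yields
\[
\sum_{t=1}^T\min\Bigl\{\sum_{i\in\mathcal{Q}_t}\enVert{\tilde{x}_{t,i}}^2_{\mathbf{J}_t^{-1}},1\Bigr\}\;\leq\;2\log\frac{\det\mathbf{J}_{T+1}}{\det\mathbf{J}_1}.
\]

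Finally, I bound the terminal log-determinant. Since $\mathbf{J}_1=\lambda_t\Imatrixd$, $\det\mathbf{J}_1=\lambda_t^d$. For the numerator, AM--GM on eigenvalues gives $\det\mathbf{J}_{T+1}\leq(\mathrm{tr}(\mathbf{J}_{T+1})/d)^d$, and using $\Mmodeldot_i\leq L$, $\|x_{s,i}\|_2\leq 1$ and $|\mathcal{Q}_s|\leq K$,
\[
\mathrm{tr}(\mathbf{J}_{T+1})\;=\;d\lambda_t+\sum_{s=1}^{T}\sum_{i\in\mathcal{Q}_s}\Mmodeldot_i(\designQs^\top\theta_*)\enVert{x_{s,i}}_2^2\;\leq\;d\lambda_t+LKT.
\]
Therefore $\det\mathbf{J}_{T+1}/\det\mathbf{J}_1\leq(1+LKT/(d\lambda_t))^d$, and the claimed bound $2d\log(1+LKT/(d\lambda_t))$ follows. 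The only slightly non-routine step is the determinant lemma for a multi-rank update together with $\det(I+A)\geq 1+\mathrm{tr}(A)$; everything else is bookkeeping of standard bandit potential arguments.
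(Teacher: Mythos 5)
Your proof is correct and follows essentially the same route as the paper: the telescoping log-determinant (elliptical potential) argument combined with a determinant--trace bound on $\det(\mathbf{J}_{T+1})$. Two small points in your favor: you explicitly bridge the mismatch between the $\mathbf{J}_{T+1}^{-1}$-norm in the statement and the running $\mathbf{J}_t^{-1}$-norm used in the potential argument (via $\mathbf{J}_{T+1}\succeq\mathbf{J}_t$), and you correctly use $\det(I+A)\geq 1+\mathrm{tr}(A)$ for the rank-$|\mathcal{Q}_t|$ update, where the paper's auxiliary lemma asserts this as an equality, which holds only when $|\mathcal{Q}_t|=1$.
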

\begin{proof}
The proof follows by a direct application of Lemma~\ref{lem: elliptical potential} and~\ref{lem: determinant inequality} as:
\begin{align}
    &\sum_{t=1}^T\min\cbr{\sumInAssortT \enVert{\tilde{x}_{t,i}}^2_{\mathbf{J}^{-1}_{T+1}(\theta)},1}\nonumber\\
    \leq& ~2\log\del{\frac{\det(\mathbf{J}_{T+1})}{\lambda_t^d}}\tag{From Lemma~\ref{lem: elliptical potential}}\\
    =&~2\log\del{\frac{\det\del{\sum_{s=1}^{t-1}\sumInAssortS \Mmodeldot_i(\designQt^\top\theta_*)\xti \xti^\top+\lambda_t\Imatrixd}}{\lambda_t^d}}\nonumber\\
    \leq&~2\log\del{\frac{\det\del{\sum_{s=1}^{t-1}\sumInAssortS L \xti \xti^\top+\lambda_t\Imatrixd}}{\lambda_t^d}}\tag{Upper bound by Lipschitz constant}\\
    \leq&~2\log\del{\frac{L ^d\det\del{\sum_{s=1}^{t-1}\sumInAssortS \xti \xti^\top+\nicefrac{\lambda_t}{L }\Imatrixd}}{\lambda_t^d}}\nonumber\\
    \leq&~2\log\del{\frac{\det\del{\sum_{s=1}^{t-1}\sumInAssortS L \xti \xti^\top+\lambda_t\Imatrixd}}{\lambda_t^d}}\nonumber\\
    \leq&~2d\log\del{1+\frac{L KT}{d\lambda_t}}.\tag{From Lemma~\ref{lem: determinant inequality}}
\end{align}

\end{proof}

Similar to Lemma~\ref{lem: sums of norm square Jm}, we prove the following.
\begin{lemma}\label{lem: sums of norm square Vm}
\begin{align*}
    \sum_{t=1}^T\min\cbr{\sumInAssortT \enVert{x_{t,i}}^2_{\mathbf{V}^{-1}_{T+1}(\theta)},1} &\leq~2d\log\del{1+\frac{KT}{d\lambda_t}}.
\end{align*}
\end{lemma}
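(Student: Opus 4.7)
The plan is to mirror the proof of Lemma~\ref{lem: sums of norm square Jm} just established, but to work with the simpler design matrix $\mathbf{V}_t$ in place of $\mathbf{J}_t$. Since $\mathbf{V}_t$ is defined as $\sum_{s=1}^{t-1}\sum_{i\in\mathcal{Q}_s} x_{s,i}x_{s,i}^\top + \lambda_t \mathbf{I}_d$, i.e.\ with unweighted rank-one outer products rather than products scaled by $\dot{\mu}_i(\mathbf{X}_{\mathcal{Q}_s}^\top \theta_*)$, the Lipschitz rescaling step (the one that introduced the factor $L$ in the previous proof) is not needed here. The argument should therefore be strictly shorter than the proof of Lemma~\ref{lem: sums of norm square Jm}.

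First, I would invoke the elliptical potential lemma (the same appendix lemma used in the previous proof) applied to the $KT$ vectors $\{x_{s,i} : s\in[T],\, i\in\mathcal{Q}_s\}$ and the regularized Gram matrix $\mathbf{V}_{T+1}$, giving
\begin{equation*}
\sum_{t=1}^T\min\!\cbr{\sumInAssortT \enVert{x_{t,i}}^2_{\mathbf{V}^{-1}_{T+1}},\,1} \;\leq\; 2\log\!\del{\frac{\det(\mathbf{V}_{T+1})}{\lambda_t^{d}}}.
\end{equation*}
Next I would apply the AM--GM style determinant bound (Lemma~\ref{lem: determinant inequality}) together with the two structural facts $\|x_{s,i}\|_2 \leq 1$ (Assumption~\ref{assm: bounded parameters assumption}) and $|\mathcal{Q}_s| \leq K$, so that the total number of rank-one updates entering $\mathbf{V}_{T+1}$ is at most $KT$. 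This directly yields
\begin{equation*}
\log\!\del{\frac{\det(\mathbf{V}_{T+1})}{\lambda_t^{d}}} \;\leq\; d\log\!\del{1 + \frac{KT}{d\lambda_t}},
\end{equation*}
and combining the two displays gives the claimed bound.

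There is no real technical obstacle: the proof is essentially the standard elliptical potential calculation specialized to the assortment setting, and is in fact cleaner than the proof of Lemma~\ref{lem: sums of norm square Jm} because no Lipschitz upper bound on $\dot\mu_i$ is needed. The only bookkeeping step worth writing out carefully is the counting argument that the trace of $\sum_{s,i} x_{s,i}x_{s,i}^\top$ is at most $KT$, which is what feeds into Lemma~\ref{lem: determinant inequality} to produce the $KT/(d\lambda_t)$ factor inside the logarithm.
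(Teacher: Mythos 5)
Your proposal is correct and follows essentially the same route as the paper: the paper likewise invokes the generalized elliptical potential lemma with the weights $\MmodeldotUnder(\cdot)$ set to $1$ (so no Lipschitz rescaling is needed) and then applies the determinant--trace inequality with $\|x_{s,i}\|_2\leq 1$ and at most $KT$ rank-one updates to obtain $2d\log\del{1+\frac{KT}{d\lambda_t}}$. No gaps.
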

\begin{proof}
\begin{align}
   &\sum_{t=1}^T\min\cbr{\sumInAssortT \enVert{x_{t,i}}^2_{\mathbf{V}^{-1}_{T+1}(\theta)},1}\nonumber\\ \leq& ~2\log\del{\frac{\det(\mathbf{V}_{T+1})}{\lambda_t^d}}\tag{From Lemma~\ref{lem: elliptical potential}, set $\MmodeldotUnder(\cdot)=1$}\\
    =&~2\log\del{\frac{\det\del{\sum_{s=1}^{t-1}\sumInAssortS \xti \xti^\top+\lambda_t\Imatrixd}}{\lambda_t^d}}\nonumber\\
    \leq&~2d\log\del{1+\frac{KT}{d\lambda_t}}.\tag{From Lemma~\ref{lem: determinant inequality}}
\end{align}
\end{proof}

\begin{customthm}{1}\label{thm: regret upper bounds appendix}
With probability at least $1-\delta$:
\begin{align*}
    \mathbf{R}_T \leq& C_1\gamma_t(\delta)\sqrt{2d\log\del{1+\frac{L KT}{d\lambda_t}}T}+ C_2\kappa \gamma_t(\delta)^2d\log\del{1+\frac{KT}{d\lambda_t}},
\end{align*}
where the constants are given as $C_1 = \del{4+8S}$, $C_2= 4(4+8S)^{\nicefrac{3}{2}}M$ and $\gamma_t(\delta)$ is given by Eq~(\ref{eq: gamma value main}).
\end{customthm}
\begin{proof}
The regret is upper bounded by the prediction error.
\begin{align}
    \mathbf{R}_T \leq & \sum_{t=1}^T\min\cbr{\PredErr\del{\designQt,\MthetaProjectedt},1}\tag{$R_{\max}=1$}\\
    \leq & \sum_{t=1}^T\min\left\{\del{2+4S}\gamma_t(\delta)\sumInAssortT \enVert{x_{t,i}}_{\mathbf{J}_t^{-1}} \right.\nonumber \\ &+ \left. 8\kappa(1+2S)^2M \gamma_t(\delta)^2\sumInAssortT \enVert{x_{t,i}}^2_{\mathbf{V}_t^{-1}} ,1\right\}\tag{From Lemma~\ref{lem: prediction error sum softmax}}\\
    \leq & 2\del{1+2S}\gamma_t(\delta)\sum_{t=1}^T\min\cbr{\sumInAssortT \enVert{x_{t,i}}_{\mathbf{J}_t^{-1}} ,1} \nonumber \\ &+  8(1+2S)^2\kappa M \gamma_t(\delta)^2\sum_{t=1}^T\min\cbr{\sumInAssortT \enVert{x_{t,i}}^2_{\mathbf{V}_t^{-1}} ,1}\nonumber\\
    \leq & 2\del{1+2S}\gamma_t(\delta)\sqrt{T}\sqrt{\sum_{t=1}^T\min\cbr{\sumInAssortT \enVert{x_{t,i}}^2_{\mathbf{J}_t^{-1}} ,1}} \nonumber \\ &+ 8(1+2S)^2\kappa M \gamma_t(\delta)^2\sum_{t=1}^T\min\cbr{\sumInAssortT \enVert{x_{t,i}}^2_{\mathbf{V}_t^{-1}} ,1}\tag{Using Cauchy-Schwarz inequality}\\
    \leq & 2\del{1+2S}\gamma_t(\delta)\sqrt{2d\log\del{1+\frac{L KT}{d\lambda_t}}T} \nonumber \\ &+ 8(1+2S)^2\kappa M \gamma_t(\delta)^2d\log\del{1+\frac{KT}{d\lambda_t}}\tag{From Lemma~\ref{lem: sums of norm square Jm} and~\ref{lem: sums of norm square Vm}}.
\end{align}
For a choice of $\lambda_t =d\log(KT)$ $\gamma_t(\delta) = \mathrm{O}\del{d^{\nicefrac{1}{2}}\log^{\nicefrac{1}{2}}\del{KT}}$.
\end{proof}

\subsection{Convex relaxation}\label{sec: convex relaxation appendix}

\begin{customlemma}{8}\label{lem: convex relaxation lemma appendix}
$E_t\del{\delta} \supseteq C_t\del{\delta}$, therefore for any $\theta \in C_t(\delta)$, we also have $\theta\,\in\,E_t(\delta)$ (see Eq~(\ref{eq: convex relaxation set prev})).
\end{customlemma}

\begin{proof}
Let $\hat{\theta}_t$ be the maximum likelihood estimate (see Eq~(\ref{eq: MLE estimate})), the second-order Taylor series expansion of the log-loss (with integral remainder term) for any $\theta \in \mathbb{R}^d$ is given by:
\begin{align}
    \mathcal{L}^{\lambda}_t(\theta) =& \mathcal{L}^{\lambda}_t(\hat{\theta}_t) + \nabla \mathcal{L}^{\lambda}_t(\hat{\theta}_t)^\top(\theta-\hat{\theta}_t)\nonumber\\ &+(\theta-\hat{\theta}_t)\del{\int^1_{\nu=0}(1-\nu)\nabla^2 \mathcal{L}^{\lambda}_t(\hat{\theta}_t+\nu(\theta-\hat{\theta}_t))\cdot d\nu}(\theta-\hat{\theta}_t) 
\end{align}
$\nabla \mathcal{L}^{\lambda}_t(\hat{\theta}_t)=0$ by definition since $\hat{\theta}_t$ is maximum likelihood estimate. Therefore :
\begin{align}
    \mathcal{L}^{\lambda}_t(\theta) &= \mathcal{L}^{\lambda}_t(\hat{\theta}_t) +(\theta-\hat{\theta}_t)^{\top}\del{\int^1_{\nu=0}(1-\nu)\nabla^2 \mathcal{L}^{\lambda}_t(\hat{\theta}_t+\nu(\theta-\hat{\theta}_t))\cdot d\nu}(\theta-\hat{\theta}_t)\notag\\
    &=\mathcal{L}^{\lambda}_t(\hat{\theta}_t) +(\theta-\hat{\theta}_t)^{\top}\del{\int^1_{\nu=0}(1-\nu)\mathbf{H}_t(\hat{\theta}_t+\nu(\theta-\hat{\theta}_t))\cdot d\nu}(\theta-\hat{\theta}_t)\tag{$\nabla^2 \mathcal{L}^{\lambda}_t(\cdot) = \mathbf{H}_t(\cdot)$}\\
    &\leq \mathcal{L}^{\lambda}_t(\hat{\theta}_t)+ \enVert{\theta - \hat{\theta}_t}^2_{\mathbf{G}_t(\theta,\hat{\theta}_t)} \tag{def. of $\mathbf{G}_t(\theta,\hat{\theta}_t)$}\\
    &\leq \mathcal{L}^{\lambda}_t(\hat{\theta}_t)+ \enVert{g_t(\theta) - g_t(\hat{\theta}_t)}^2_{\mathbf{G}^{-1}_t(\theta,\hat{\theta}_t)}. \tag{Eq~(\ref{eq: theta and G relation multinomial})}\\
\end{align}
Thus we obtain:
\begin{align}
    \mathcal{L}^{\lambda}_t(\theta) - \mathcal{L}^{\lambda}_t(\hat{\theta}_t) &\leq \enVert{g_t(\theta)-g_t(\hat{\theta_t})}^2_{\mathbf{G}_t^{-1}(\theta,\hat{\theta}_t)}\notag\\
    &\leq \del{\frac{\gamma_t^2(\delta)}{\lambda_t}+\gamma_t(\delta)}^2 = \beta^2_t(\delta),\tag{from Lemma~\ref{lem: helper lemma convex relaxation}} 
\end{align}
where the last inequality suggests that $\theta \in E_t(\delta)$ by the definition of the set $E_t(\delta)$. Therefore, $\mathbb{P}\del{\forall t \geq 1, \theta_* \in E_t(\delta)} \geq 1-\delta$.
\end{proof}

The following helper lemma, which translates the confidence set definition of Lemma~\ref{lem: bernstein confidence bound multinomial} to the norm defined by $\mathbf{G}_t^{-1}(\theta_1,\theta_2)$.
\begin{lemma}\label{lem: helper lemma convex relaxation}
Let $\delta \in (0,1]$. For all $\theta \in C_t(\delta)$ and $\hat{\theta}_t$ as the maximum likelihood estimate in Eq~(\ref{eq: MLE estimate}).
$$
    \enVert{g_t(\theta)-g_t(\hat{\theta_t})}_{\mathbf{G}_t^{-1}(\theta,\hat{\theta}_t)} \leq \frac{\gamma_t^2(\delta)}{\lambda_t}+\gamma_t(\delta).
$$
\end{lemma}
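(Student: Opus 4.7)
The hypothesis $\theta \in C_t(\delta)$ provides $\|g_t(\theta) - g_t(\hat{\theta}_t)\|_{\mathbf{H}_t^{-1}(\theta)} \leq \gamma_t(\delta)$, and the aim is to transfer this bound into the $\mathbf{G}_t^{-1}(\theta,\hat{\theta}_t)$-norm while paying a sub-leading correction of order $\gamma_t^2(\delta)/\lambda_t$ that accounts for the mismatch between $\mathbf{G}_t(\theta,\hat{\theta}_t)$ and $\mathbf{H}_t(\theta)$. The skeleton is to (i)~convert the dual-norm quantity into a primal one via the algebraic identity of Eq~(\ref{eq: theta and G relation multinomial}); (ii)~expand $\mathbf{G}_t(\theta,\hat{\theta}_t) = \mathbf{H}_t(\theta) + \mathbf{E}_t$ and use a triangle inequality; (iii)~control the error operator $\mathbf{E}_t$ by an upper-bound self-concordance argument; and (iv)~close the loop using $\lambda_t \mathbf{I}_d \preceq \mathbf{H}_t(\theta)$ together with Lemma~\ref{lem: projection with Gt and Ht relation}.

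First, Eq~(\ref{eq: theta and G relation multinomial}) rewrites the quantity of interest as $\|g_t(\theta) - g_t(\hat{\theta}_t)\|_{\mathbf{G}_t^{-1}(\theta,\hat{\theta}_t)} = \|\theta - \hat{\theta}_t\|_{\mathbf{G}_t(\theta,\hat{\theta}_t)}$. Writing $\mathbf{E}_t := \sum_{s\leq t-1}\sumInAssortS(\alpha_i(\designQs,\theta,\hat{\theta}_t) - \Mmodeldot_i(\designQs^\top\theta))\,x_{s,i}x_{s,i}^\top$ and applying the Minkowski-type inequality for PSD norms (after symmetrizing $\mathbf{E}_t$ if needed so its contribution is non-negative) yields $\|\theta - \hat{\theta}_t\|_{\mathbf{G}_t(\theta,\hat{\theta}_t)} \leq \|\theta - \hat{\theta}_t\|_{\mathbf{H}_t(\theta)} + \|\theta - \hat{\theta}_t\|_{\mathbf{E}_t}$. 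The first summand coincides with $\|g_t(\theta) - g_t(\hat{\theta}_t)\|_{\mathbf{H}_t^{-1}(\theta)}$ via the corresponding identity specialized to $\mathbf{H}_t$, and is therefore capped by $\gamma_t(\delta)$ through the definition of $C_t(\delta)$.

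For the second summand, I would re-derive Lemma~\ref{lem: application of self concordance softmax} in its upper-bound companion form. Starting from the pointwise self-concordance estimate $\Mmodeldot_i(z) \leq \Mmodeldot_i(\designQs^\top\theta)\,e^{|z-\designQs^\top\theta|}$ from Eq~(\ref{eq: application of self con 1}) and integrating along $[\designQs^\top\theta,\,\designQs^\top\hat{\theta}_t]$, I obtain $\alpha_i(\designQs,\theta,\hat{\theta}_t) - \Mmodeldot_i(\designQs^\top\theta) \leq \Mmodeldot_i(\designQs^\top\theta)\cdot \Psi(|x_{s,i}^\top(\theta-\hat{\theta}_t)|)$, where $\Psi(u) = (e^u-1)/u - 1 = O(u)$ uniformly for $u \in [0,2S]$ by Assumption~\ref{assm: bounded parameters assumption} and $\|x_{s,i}\|_2 \leq 1$. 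Summing yields the operator inequality $\mathbf{E}_t \preceq C\|\theta-\hat{\theta}_t\|_2\,\mathbf{H}_t(\theta)$ for an absolute constant $C$ depending only on $S$. Combining $\lambda_t\mathbf{I}_d \preceq \mathbf{H}_t(\theta)$ with Lemma~\ref{lem: projection with Gt and Ht relation} applied through $\theta_*$ to both $\theta$ and $\hat{\theta}_t$ gives $\|\theta-\hat{\theta}_t\|_2 \leq \|\theta-\hat{\theta}_t\|_{\mathbf{H}_t(\theta)}/\sqrt{\lambda_t} = O(\gamma_t(\delta)/\sqrt{\lambda_t})$. Substituting back, $\|\theta-\hat{\theta}_t\|_{\mathbf{E}_t}^2 \lesssim \|\theta-\hat{\theta}_t\|_2\,\|\theta-\hat{\theta}_t\|_{\mathbf{H}_t(\theta)}^2$, which after absorbing constants collapses to $\|\theta-\hat{\theta}_t\|_{\mathbf{E}_t} = O(\gamma_t^2(\delta)/\lambda_t)$.

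\textbf{Main obstacle.} The delicate step is the operator inequality $\mathbf{E}_t \preceq C\|\theta-\hat{\theta}_t\|_2\,\mathbf{H}_t(\theta)$: Lemma~\ref{lem: application of self concordance softmax} supplies only the lower half of the self-concordance sandwich, so the upper half must be produced afresh for the multinomial softmax, whose shared denominator couples the components of the assortment and makes the pointwise analysis more subtle than in the scalar logistic treatment of~\cite{faury2020improved}. A secondary subtlety is ensuring that exactly one full factor of $\lambda_t$ (and not $\sqrt{\lambda_t}$) lands in the denominator of the correction, which may require expanding $\Psi$ to second order and invoking $\mathbf{V}_t \succeq \lambda_t\mathbf{I}_d$ twice rather than once in the final chain of inequalities.
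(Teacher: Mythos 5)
There is a genuine gap, and it sits at the load-bearing step of your decomposition. You claim that the first summand $\|\theta-\hat{\theta}_t\|_{\mathbf{H}_t(\theta)}$ ``coincides with $\|g_t(\theta)-g_t(\hat{\theta}_t)\|_{\mathbf{H}_t^{-1}(\theta)}$ via the corresponding identity specialized to $\mathbf{H}_t$,'' and is therefore capped by $\gamma_t(\delta)$. No such identity exists. Eq~(\ref{eq: theta and G relation multinomial}) holds precisely because $g_t(\theta)-g_t(\hat{\theta}_t)=\mathbf{G}_t(\hat{\theta}_t,\theta)(\theta-\hat{\theta}_t)$ \emph{exactly}, $\mathbf{G}_t$ being the integrated (mean-value) Hessian along the segment; the pointwise Hessian $\mathbf{H}_t(\theta)$ does not satisfy $g_t(\theta)-g_t(\hat{\theta}_t)=\mathbf{H}_t(\theta)(\theta-\hat{\theta}_t)$, and $\|v\|_{\mathbf{H}}=\|\mathbf{G}v\|_{\mathbf{H}^{-1}}$ forces $\mathbf{G}=\mathbf{H}$. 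So membership in $C_t(\delta)$ gives you control of $\|\mathbf{G}_t(\theta-\hat{\theta}_t)\|_{\mathbf{H}_t^{-1}(\theta)}$, not of $\|\theta-\hat{\theta}_t\|_{\mathbf{H}_t(\theta)}$; converting one into the other requires a two-sided comparison between $\mathbf{G}_t$ and $\mathbf{H}_t(\theta)$, which is exactly the quantity your decomposition was meant to produce. The argument is therefore circular: both summands of your Minkowski split ultimately need a bound on $\|\theta-\hat{\theta}_t\|_{\mathbf{H}_t(\theta)}$ that you do not have. Two secondary problems: $\mathbf{E}_t$ is not positive semidefinite in general (the coefficients $\alpha_i(\designQs,\theta,\hat{\theta}_t)-\Mmodeldot_i(\designQs^\top\theta)$ can be negative, and ``symmetrizing'' does not fix the sign), and the error-term computation as sketched yields $\|\theta-\hat{\theta}_t\|_{\mathbf{E}_t}=O(\gamma_t^{3/2}(\delta)\lambda_t^{-1/4})$ rather than the claimed $O(\gamma_t^2(\delta)/\lambda_t)$ — a mismatch you flag but do not resolve.

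The paper's proof avoids all of this and needs only the \emph{lower} self-concordance bound you already trust. Writing $X\coloneqq\|g_t(\theta)-g_t(\hat{\theta}_t)\|_{\mathbf{G}_t^{-1}(\theta,\hat{\theta}_t)}$, Lemma~\ref{lem: application of self concordance softmax} together with Cauchy--Schwarz, $\mathbf{G}_t(\theta,\hat{\theta}_t)\succeq\lambda_t\Imatrixd$, and Eq~(\ref{eq: theta and G relation multinomial}) gives the operator inequality
\begin{equation*}
\mathbf{G}_t(\theta,\hat{\theta}_t)\;\succeq\;\bigl(1+\lambda_t^{-\nicefrac{1}{2}}X\bigr)^{-1}\,\mathbf{H}_t(\theta),
\end{equation*}
i.e.\ a version of Lemma~\ref{lem: relationship between Gt and Ht} whose constant depends on $X$ itself rather than on the worst case $2S$. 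Inverting it yields $X^2\le\bigl(1+\lambda_t^{-\nicefrac{1}{2}}X\bigr)\|g_t(\theta)-g_t(\hat{\theta}_t)\|^2_{\mathbf{H}_t^{-1}(\theta)}\le\bigl(1+\lambda_t^{-\nicefrac{1}{2}}X\bigr)\gamma_t^2(\delta)$, a self-referential quadratic in $X$ whose solution is the stated bound. If you want to salvage your write-up, replace the additive splitting by this multiplicative comparison; the upper self-concordance companion you identify as the ``main obstacle'' is never needed.
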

\begin{proof}
We have:
\begin{align}
    \mathbf{G}_t\del{\theta,\hat{\theta}_t} &= \sum_{s=1}^{t-1}\sumInAssortS \alpha_i\del{\designQs,\theta,\hat{\theta}_t}\xsi\xsi^\top+\lambda_t\Imatrixd\tag{def. of $\mathbf{G}_t\del{\theta,\hat{\theta}_t}$}\\
    & \geq \sum_{s=1}^{t-1}\sumInAssortS \Mmodeldot_i(\designQs^\top \theta)\del{1+|x_{s,i}^\top\theta-x_{s,i}^\top\hat{\theta}_t|}^{-1}\xsi\xsi^\top + \lambda_t\Imatrixd \tag{from Lemma~\ref{lem: application of self concordance softmax}}\\
    & \geq \sum_{s=1}^{t-1}\sumInAssortS \Mmodeldot_i(\designQs^\top \theta)\del{1+\enVert{x_{s,i}}_{\mathbf{G}^{-1}_t\del{\theta,\hat{\theta}_t}}\enVert{\theta-\hat{\theta}_t}_{\mathbf{G}_t\del{\theta,\hat{\theta}_t}}}^{-1}\xsi\xsi^\top \nonumber\\ &+ \lambda_t\Imatrixd \tag{Cauchy-Schwarz inequality}\\
    &\geq \del{1+\lambda_t^{-\nicefrac{1}{2}}\enVert{\theta-\hat{\theta}_t}_{\mathbf{G}_t\del{\theta,\hat{\theta}_t}}}^{-1}\sum_{s=1}^{t-1}\sumInAssortS \Mmodeldot_i(\designQs^\top \theta)\xsi\xsi^\top + \lambda_t\Imatrixd \tag{$\mathbf{G}_t(\theta,\hat{\theta}_t)\succeq \lambda_t\Imatrixd$}\\
    &\geq \del{1+\lambda_t^{-\nicefrac{1}{2}}\enVert{\theta-\hat{\theta}_t}_{\mathbf{G}_t\del{\theta,\hat{\theta}_t}}}^{-1}\del{\sum_{s=1}^{t-1}\sumInAssortS \Mmodeldot_i(\designQs^\top \theta)\xsi\xsi^\top + \lambda_t\Imatrixd} \notag\\
    &= \del{1+\lambda_t^{-\nicefrac{1}{2}}\enVert{\theta-\hat{\theta}_t}_{\mathbf{G}_t\del{\theta,\hat{\theta}_t}}}^{-1}\mathbf{H}_t\del{\theta} \tag{def. of $\mathbf{H}_t\del{\theta}$}\\
    &= \del{1+\lambda_t^{-\nicefrac{1}{2}}\enVert{g_t(\theta)-g_t(\hat{\theta}_t)}_{\mathbf{G}_t^{-1}\del{\theta,\hat{\theta}_t}}}^{-1}\mathbf{H}_t\del{\theta} \tag{from Eq~(\ref{eq: theta and G relation multinomial})},
\end{align}
where 
$$  \mathbf{G}_t\del{\theta,\hat{\theta}_t} \succeq \del{1+\lambda_t^{-\nicefrac{1}{2}}\enVert{g_t(\theta)-g_t(\hat{\theta}_t)}_{\mathbf{G}_t^{-1}\del{\theta,\hat{\theta}_t}}}^{-1}\mathbf{H}_t\del{\theta} $$ is analogous to local information containing counterpart of the relation in Lemma~\ref{lem: relationship between Gt and Ht}.
This gives:
\begin{align}
    &\enVert{g_t(\theta)-g_t(\hat{\theta}_t)}^2_{\mathbf{G}_t^{-1}\del{\theta,\hat{\theta}_t}}\nonumber\\ \leq & \del{1+\lambda_t^{-\nicefrac{1}{2}}\enVert{g_t(\theta)-g_t(\hat{\theta}_t)}_{\mathbf{G}_t^{-1}\del{\theta,\hat{\theta}_t}}}^{-1} \enVert{g_t(\theta)-g_t(\hat{\theta}_t)}^2_{\mathbf{H}_t^{-1}\del{\theta}}\notag\\
     \leq & \lambda_t^{-\nicefrac{1}{2}} \gamma_t^2(\delta)\enVert{g_t(\theta)-g_t(\hat{\theta}_t)}_{\mathbf{G}_t^{-1}\del{\theta,\hat{\theta}_t}}+\gamma^2_t(\delta),\tag{from Lemma~\ref{lem: bernstein confidence bound multinomial}} 
\end{align}
where the last relation is a quadratic inequality in $\enVert{g_t(\theta)-g_t(\hat{\theta}_t)}_{\mathbf{G}_t^{-1}\del{\theta,\hat{\theta}_t}}$, which on solving completes the proof of the statement in the lemma.
\end{proof}

\begin{customlemma}{9}\label{lem: convex relaxation lemma 2 appendix}
Under the event $\theta_*\,\in\,C_t(\delta)$, the following holds $\forall\,\theta\,\in\,E_t(\delta)$: 
\begin{align*}
    \enVert{\theta-\theta_*}_{\mathbf{H}_t(\theta_*)} \leq (2+2S)\gamma_t(\delta) + 2\sqrt{1+S}\beta_t(\delta).
\end{align*}
When $\lambda_t = d\log(t)$, then $\gamma_t(\delta) = \tilde{\mathrm{O}}\del{\sqrt{d\log(t)}}$, $\beta_t(\delta) = \tilde{\mathrm{O}}\del{\sqrt{d\log(t)}}$, and
\begin{equation*}
    \enVert{\theta - \theta_*}_{\mathbf{H}_t(\theta_*)} = \tilde{\mathrm{O}}\del{\sqrt{d\log(t)}}.
\end{equation*}
\end{customlemma}
\begin{proof}
Second-order Taylor expansion of the log-likelihood function with integral remainder term gives:
\begin{align}
     \mathcal{L}^{\lambda}_t(\theta) 
     = &\mathcal{L}^{\lambda}_t(\theta_*) + \nabla \mathcal{L}^{\lambda}_t(\hat{\theta}_t)^\top(\theta-\theta_*) \nonumber \\ &+(\theta-\theta_*)\del{\int^1_{\nu=0}(1-\nu)\nabla^2 \mathcal{L}^{\lambda}_t(\theta_*+\nu(\theta-\theta_*))\cdot d\nu}(\theta-\theta_*) \notag\\
     =& \mathcal{L}^{\lambda}_t(\theta_*) + \nabla \mathcal{L}^{\lambda}_t(\hat{\theta}_t)^\top(\theta-\theta_*)+\enVert{\theta-\theta_*}^2_{\mathbf{\tilde{G}}_t(\theta_*,\theta)}, \notag\\
\end{align}
where $\mathbf{\tilde{G}}_t(\theta_*,\theta) = (\theta-\theta_*)\del{\int^1_{\nu=0}(1-\nu)\mathbf{H}_t(\theta_*+\nu(\theta-\theta_*))\cdot d\nu}(\theta-\theta_*) $ . From Lemma~\ref{lem: relationship between Gt and Ht} and Lemma 8 of~\cite{abeille2020instance} is also follows that
\begin{equation*}
    \enVert{\theta-\theta_*}^2_{\mathbf{\tilde{G}}_t(\theta_*,\theta)} \geq (2+2S)^{-1}\enVert{\theta-\theta_*}^2_{\mathbf{H}_t(\theta_*)}
\end{equation*}
Therefore we have:
\begin{align}
    &\enVert{\theta-\theta_*}^2_{\mathbf{H}_t(\theta_*)} \nonumber\\ 
     \leq& (2+2S)\envert{\mathcal{L}^{\lambda}_t(\theta) - \mathcal{L}^{\lambda}_t(\theta_*)}+(2+2S)\envert{\nabla \mathcal{L}^{\lambda}_t(\hat{\theta}_t)^\top(\theta-\theta_*)} \notag \\
    \leq &2(2+2S)\beta_t^2(\delta) +(2+2S)\envert{\nabla \mathcal{L}^{\lambda}_t(\hat{\theta}_t)^\top(\theta-\theta_*)} \tag{def. of $E_t(\delta)$}\\
    \leq& 2(2+2S)\beta_t^2(\delta) +(2+2S)\enVert{\nabla \mathcal{L}^{\lambda}_t(\hat{\theta}_t)}_{\mathbf{H}_t^{-1}(\theta_*)}\enVert{(\theta-\theta_*)}_{\mathbf{H}_t(\theta_*)} \tag{Cauchy-Schwarz inequality}\\
    \leq& 2(2+2S)\beta_t^2(\delta) +(2+2S)\gamma_t(\delta)\enVert{(\theta-\theta_*)}_{\mathbf{H}_t(\theta_*)}. \notag \\
\end{align}
Solving the quadratic inequality in $\enVert{\theta-\theta_*}_{\mathbf{H}_t(\theta_*)}$, we get:
\begin{align*}
    \enVert{\theta-\theta_*}_{\mathbf{H}_t(\theta_*)} \leq (2+2S)\gamma_t(\delta) + 2\sqrt{1+S}\beta_t(\delta).
\end{align*}
When $\lambda_t = d\log(t)$, then $\gamma_t(\delta) = \tilde{\mathrm{O}}\del{\sqrt{d\log(t)}}$ and $\beta_t(\delta) = \tilde{\mathrm{O}}\del{\sqrt{d\log(t)}}$.
\end{proof}

\subsection{Technical lemmas}\label{sec: technical lemmas appendix}

\begin{remark}[Derivatives for MNL choice function]\label{lem: softmax derivative}
For the multinomial logit choice function, where the expected reward due to item $i$ of the assortment $S_t$ is modeled as:
$$
f_i(S_t,\mathbf{r}) = \frac{e^{r_i}}{1+e^{r_i}+\sum_{j\,\in\,S_t,\,j\neq i}^Ke^{r_j}} 
$$
the partial derivative with respect to the expected reward of $i_{th}$ item is given as: 
$$
\frac{\partial f_i}{\partial r_i} = f_i(S_t,\mathbf{r})\del{1-f_i(S_t,\mathbf{r})}
$$ and the double derivative as:
$$
\frac{\partial^2 f_i}{\partial r_i^2} = f_i(S_t,\mathbf{r})\del{1-f_i(S_t,\mathbf{r})}\del{1-2f_i(S_t,\mathbf{r})}.
$$
\end{remark}

\begin{lemma}[Self-Concordance like relation for MNL]\label{lem: self concordance property softmax}
For the multinomial logit choice function, where the expected reward due to item $i$ of the assortment $S_t$ is modeled as:
$$
f_i(S_t,\mathbf{r}) = \frac{e^{r_i}}{1+e^{r_i}+\sum_{j\,\in\,S_t,\,j\neq i}^Ke^{r_j}} 
$$
the following relation holds:
$$
\envert{\frac{\partial^2 f_i}{\partial r_i^2}}\leq\frac{\partial f_i}{\partial r_i}
$$
\end{lemma}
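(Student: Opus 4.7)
The plan is to reduce the inequality directly to the identities recorded in the preceding remark. From Remark~\ref{lem: softmax derivative}, I already have closed-form expressions for both derivatives:
\[
\frac{\partial f_i}{\partial r_i} = f_i(S_t,\mathbf{r})\bigl(1-f_i(S_t,\mathbf{r})\bigr), \qquad \frac{\partial^2 f_i}{\partial r_i^2} = f_i(S_t,\mathbf{r})\bigl(1-f_i(S_t,\mathbf{r})\bigr)\bigl(1-2f_i(S_t,\mathbf{r})\bigr).
\]
So the second derivative is just the first derivative multiplied by the extra scalar factor $(1 - 2 f_i)$.

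Next I would observe that $f_i(S_t,\mathbf{r}) \in (0,1)$ for every choice of $\mathbf{r}$ and every assortment $S_t$, because it is a softmax-type probability (numerator positive, denominator strictly larger than the numerator by the presence of the $1$ term plus the other exponentials). Consequently $f_i(1-f_i) \geq 0$, and the quantity $1 - 2 f_i$ lies in $(-1, 1)$, so $|1 - 2 f_i| \leq 1$.

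Combining these two facts I would conclude
\[
\left| \frac{\partial^2 f_i}{\partial r_i^2} \right| = f_i(1-f_i)\,|1 - 2 f_i| \leq f_i(1-f_i) = \frac{\partial f_i}{\partial r_i},
\]
which is the claimed self-concordance-like relation. There is no real obstacle here: the only thing that could go wrong is if $f_i$ were allowed to equal $0$ or $1$, but the explicit form of the softmax with the additive $1$ in the denominator rules this out uniformly in $\mathbf{r}$. The lemma is essentially a one-line consequence of the derivative identities combined with the range of the softmax.
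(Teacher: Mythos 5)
Your proof is correct and matches the paper's own argument, which likewise invokes the derivative identities from Remark~\ref{lem: softmax derivative} together with the bound $\envert{1-2f_i(S_t,\mathbf{r})}\leq 1$. You simply spell out the intermediate observation that $f_i\in(0,1)$, which the paper leaves implicit.
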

\begin{proof}
The proof directly follows from Remark~\ref{lem: softmax derivative} and the observation\\ $\envert{1-2f_i(S_t,\mathbf{r})}\leq 1$ for all items, $i$ in the assortment choice.
\end{proof}

\begin{lemma}[Generalized elliptical potential]\label{lem: elliptical potential}
Let $\cbr{\designQs}$ be a sequence in $\mathbb{R}^{d\times K}$ such that for each $s$, $\designQs$ has columns as $\{x_{s,1},x_{s,2},\cdots,x_{s,K}\}$ where $\enVert{x_{s,i}}_2\leq w,\,\in\,\mathbb{R}^d$ for all $s\geq1$ and $i\,\in\,[K]$. Also, let $\lambda_t$ be a non-negative scalar. For $t\geq1$, define $\mathbf{J}_t\coloneqq \sum_{s=1}^{t-1}\sumInAssortS \MmodeldotUnder\del{\designQs}\xsi \xsi^\top+\lambda_t\mathbf{I}_d$ where $\MmodeldotUnder\del{\designQs}$ is strictly positive for all $i,\in,[K]$. Then the following inequality holds:
$$
\sum_{t=1}^T\min\cbr{\sumInAssortT \enVert{\tilde{x}_{t,i}}^2_{\mathbf{J}^{-1}_t},1}\leq 2\log\del{\frac{\det(\mathbf{J}_{T+1})}{\lambda_t^d}}
$$
with $\tilde{x}_{t,i} = \sqrt{\MmodeldotUnder\del{\designQs}}x_{s,i}$.
\end{lemma}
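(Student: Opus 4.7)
This is a rank-$K$ generalization of the classical elliptical potential lemma (e.g.\ Lemma 11 of~\cite{abbasi2011improved}), so the natural strategy is to mimic the single-vector proof while handling the fact that at each round we add a sum of rank-one terms rather than just one. Concretely, note that in the notation $\tilde{x}_{s,i} = \sqrt{\dot{\mu}_i(\mathbf{X}_{\mathcal{Q}_s})}\,x_{s,i}$ we have the clean update
\begin{equation*}
\mathbf{J}_{t+1} = \mathbf{J}_t + \sum_{i\in\mathcal{Q}_t}\tilde{x}_{t,i}\tilde{x}_{t,i}^\top = \mathbf{J}_t + \tilde{\mathbf{X}}_t\tilde{\mathbf{X}}_t^\top,
\end{equation*}
where $\tilde{\mathbf{X}}_t$ is the $d\times|\mathcal{Q}_t|$ matrix with columns $\tilde{x}_{t,i}$ (treating $\lambda_t$ as essentially constant, or using monotonicity $\lambda_{t+1}\geq\lambda_t$ to only strengthen the determinant).

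\textbf{Key steps, in order.} First, apply the matrix determinant (Weinstein--Aronszajn / Sylvester) identity to get
\begin{equation*}
\det(\mathbf{J}_{t+1}) = \det(\mathbf{J}_t)\cdot\det\bigl(\mathbf{I}_{|\mathcal{Q}_t|} + \tilde{\mathbf{X}}_t^\top \mathbf{J}_t^{-1}\tilde{\mathbf{X}}_t\bigr).
\end{equation*}
Second, use the elementary bound $\det(\mathbf{I}+\mathbf{A})\geq 1+\operatorname{tr}(\mathbf{A})$ valid for any positive semidefinite $\mathbf{A}$ (immediate from expanding $\prod_j(1+\sigma_j)\geq 1+\sum_j\sigma_j$ over the nonnegative eigenvalues), and observe that
\begin{equation*}
\operatorname{tr}\bigl(\tilde{\mathbf{X}}_t^\top \mathbf{J}_t^{-1}\tilde{\mathbf{X}}_t\bigr) = \sum_{i\in\mathcal{Q}_t}\tilde{x}_{t,i}^\top \mathbf{J}_t^{-1}\tilde{x}_{t,i} = \sum_{i\in\mathcal{Q}_t}\|\tilde{x}_{t,i}\|_{\mathbf{J}_t^{-1}}^2.
\end{equation*}
Combining these yields the per-round inequality $\log\det(\mathbf{J}_{t+1}) - \log\det(\mathbf{J}_t) \geq \log\bigl(1 + \sum_{i\in\mathcal{Q}_t}\|\tilde{x}_{t,i}\|_{\mathbf{J}_t^{-1}}^2\bigr)$. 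Third, apply the scalar inequality $\min\{u,1\}\leq 2\log(1+u)$ for $u\geq 0$ with $u = \sum_{i\in\mathcal{Q}_t}\|\tilde{x}_{t,i}\|_{\mathbf{J}_t^{-1}}^2$, and finally telescope:
\begin{equation*}
\sum_{t=1}^T\min\Bigl\{\sum_{i\in\mathcal{Q}_t}\|\tilde{x}_{t,i}\|_{\mathbf{J}_t^{-1}}^2,\,1\Bigr\} \leq 2\bigl(\log\det(\mathbf{J}_{T+1}) - \log\det(\mathbf{J}_1)\bigr) = 2\log\frac{\det(\mathbf{J}_{T+1})}{\lambda_1^d},
\end{equation*}
since $\mathbf{J}_1 = \lambda_1\mathbf{I}_d$, matching the stated bound.

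\textbf{Main obstacle.} The only real subtlety beyond the classical proof is the rank-$K$ update. In the standard proof the matrix determinant lemma gives $\det(\mathbf{V}_t + x_tx_t^\top) = \det(\mathbf{V}_t)(1 + \|x_t\|_{\mathbf{V}_t^{-1}}^2)$ as an equality, and the argument is immediate. Here we have a $|\mathcal{Q}_t|\times|\mathcal{Q}_t|$ determinant on the right-hand side, so we must lower bound it by a trace; the PSD inequality $\det(\mathbf{I}+\mathbf{A})\geq 1+\operatorname{tr}(\mathbf{A})$ does the job and is tight enough because ultimately we only need $\log(1+u)$ on the right. One could alternatively unfold the rank-$K$ update as $K$ successive rank-one updates and invoke the scalar matrix determinant lemma $K$ times; this would yield a slightly different form involving $\|\tilde{x}_{t,i}\|_{\mathbf{J}_{t,i}^{-1}}^2$ with an interleaved normalizer $\mathbf{J}_{t,i}$, which by monotonicity $\mathbf{J}_{t,i}^{-1}\preceq \mathbf{J}_t^{-1}$ still upper bounds the quantity we want, but the PSD-trace route is cleaner and yields the bound as stated.
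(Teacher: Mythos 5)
Your proof is correct and follows essentially the same route as the paper's: factor out $\det(\mathbf{J}_t)$, reduce the per-round increment to $\log\bigl(1+\sum_{i\in\mathcal{Q}_t}\|\tilde{x}_{t,i}\|^2_{\mathbf{J}_t^{-1}}\bigr)$, apply $\min\{u,1\}\leq 2\log(1+u)$, and telescope the log-determinants. In fact you are slightly more careful than the paper, which writes $\det\bigl(\mathbf{I}_d+\mathbf{J}_t^{-1/2}\sum_{i\in\mathcal{Q}_t}\tilde{x}_{t,i}\tilde{x}_{t,i}^\top\mathbf{J}_t^{-1/2}\bigr)=1+\sum_{i\in\mathcal{Q}_t}\|\tilde{x}_{t,i}\|^2_{\mathbf{J}_t^{-1}}$ as an equality (true only for a rank-one update); your replacement of this step by the positive-semidefinite inequality $\det(\mathbf{I}+\mathbf{A})\geq 1+\operatorname{tr}(\mathbf{A})$ is exactly what is needed, since the inequality points in the direction the telescoping argument requires.
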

\begin{proof}
By the definition of $\mathbf{J}_t$:
\begin{align*}
    \det\del{\mathbf{J}_{t+1}} &= \det\del{\mathbf{J}_t+\sumInAssortT \tilde{x}_{t,i}\tilde{x}_{t,i}^\top}\\
    &=\det\del{\mathbf{J}_t}\det\del{\mathbf{I}_d+\mathbf{J}_t^{-\nicefrac{1}{2}}\sumInAssortT \tilde{x}_{t,i}\tilde{x}_{t,i}^\top\mathbf{J}_t^{-\nicefrac{1}{2}}}\\
    &=\det\del{\mathbf{J}_t}\del{1+\sumInAssortT \enVert{\tilde{x}_{t,i}}^2_{\mathbf{J}^{-1}_t}}.
\end{align*}
Taking log from both sides and summing from $t=1$ to $T$:
\begin{align}
    \sum_{t=1}^T\log\del{1+\sumInAssortT \enVert{\tilde{x}_{t,i}}^2_{\mathbf{J}^{-1}_t}} &= \sum_{t=1}^T\log\del{\det(\mathbf{J}_{t+1})}-\log\del{\det(\mathbf{J}_{t})}\nonumber\\
    &=\sum_{t=1}^T\log\del{\frac{\mathbf{J}_{t+1}}{\mathbf{J}_{t}}}\nonumber\\
    &=\log\del{\frac{\det(\mathbf{J}_{t+1})}{\det(\lambda_t\mathbf{I}_d)}}\tag{By a telescopic sum cancellation}\\
    &=\log\del{\frac{\det(\mathbf{J}_{t+1})}{\lambda_t^d}}\label{eq: helper eq}.
\end{align}
For any $a$ such that $0\leq a \leq 1$, it follows that $a\leq 2\log(1+a)$. Therefore, we write:
\begin{align*}
    \sum_{t=1}^T\min\cbr{\sumInAssortT \enVert{\tilde{x}_{t,i}}^2_{\mathbf{J}^{-1}_t},1}&\leq 2\sum_{t=1}^T\log\del{1+\sumInAssortT \enVert{\tilde{x}_{t,i}}^2_{\mathbf{J}^{-1}_t}}\\
    &= 2\log\del{\frac{\det(\mathbf{J}_{T+1})}{\lambda_t^d}}.\tag{From Eq~(\ref{eq: helper eq})}
\end{align*}
\end{proof}

\begin{lemma}[Determinant-trace inequality, see Lemma 10 in~\cite{abbasi2011improved}]\label{lem: determinant inequality}
     Let $\{x_s\}_{s=1}^\infty$ a sequence in $\mathbb{R}^d$ such that $\enVert{x_s}_2\leq X$ for all $s\in\mathbb{N}$, and  let $\lambda_t$ be a non-negative scalar. For $t\geq 1$ define $\mathbf{V}_t \coloneqq \sum_{s=1}^{t-1} x_sx_s^\top+\lambda_t\mathbf{I}_d$. The following inequality holds:
     \begin{align*}
         \det(\mathbf{V}_{t+1}) \leq \left(\lambda_t+tX^2/d\right)^d.
     \end{align*}
\end{lemma}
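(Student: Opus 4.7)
The plan is to bound the determinant via the trace using the AM--GM inequality on eigenvalues. Since $x_s x_s^\top \succeq 0$ for every $s$ and $\lambda_t \mathbf{I}_d \succeq 0$, the matrix $\mathbf{V}_{t+1} = \sum_{s=1}^{t} x_s x_s^\top + \lambda_t \mathbf{I}_d$ is symmetric and positive semi-definite. Hence it admits $d$ non-negative real eigenvalues $\sigma_1,\ldots,\sigma_d$, with $\det(\mathbf{V}_{t+1}) = \prod_{i=1}^d \sigma_i$ and $\mathrm{tr}(\mathbf{V}_{t+1}) = \sum_{i=1}^d \sigma_i$.

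Next I would apply the classical AM--GM inequality on $\sigma_1,\ldots,\sigma_d \geq 0$, which gives
$$
\det(\mathbf{V}_{t+1}) \;=\; \prod_{i=1}^d \sigma_i \;\leq\; \left(\frac{1}{d}\sum_{i=1}^d \sigma_i\right)^d \;=\; \left(\frac{\mathrm{tr}(\mathbf{V}_{t+1})}{d}\right)^d.
$$
This is the only nontrivial ingredient, and there is no real obstacle beyond invoking it on the non-negative spectrum guaranteed by positive semi-definiteness.

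It then remains to control the trace. By linearity of $\mathrm{tr}(\cdot)$ and the identity $\mathrm{tr}(x_s x_s^\top) = \|x_s\|_2^2$,
$$
\mathrm{tr}(\mathbf{V}_{t+1}) \;=\; \sum_{s=1}^{t} \|x_s\|_2^2 \;+\; \lambda_t\, \mathrm{tr}(\mathbf{I}_d) \;\leq\; tX^2 + \lambda_t d,
$$
where the inequality uses the hypothesis $\|x_s\|_2 \leq X$ uniformly in $s$. Substituting this bound into the AM--GM estimate yields
$$
\det(\mathbf{V}_{t+1}) \;\leq\; \left(\lambda_t + \frac{tX^2}{d}\right)^d,
$$
which is the claim. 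The argument is short and essentially algebraic; the main subtlety is simply recognizing that the eigenvalues are non-negative so that AM--GM applies to them directly.
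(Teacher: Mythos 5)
Your proof is correct. The paper itself gives no proof of this lemma---it is stated as an import, citing Lemma~10 of \cite{abbasi2011improved}---and your argument is precisely the standard one used there: the AM--GM inequality applied to the non-negative spectrum of the positive semi-definite matrix $\mathbf{V}_{t+1}$, followed by the trace bound $\mathrm{tr}(\mathbf{V}_{t+1}) \leq tX^2 + \lambda_t d$. Both steps are carried out correctly, so there is nothing to add.
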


\subsection{Numerical experiments}\label{app: numerical experiments}
We build on Section~\ref{sec: numerical experiments} and compare the empirical performance of our proposed algorithm~\algotwo{} with the previous state of the art in the MNL contextual bandit literature: \algocompUCB{}\citep{oh2021multinomial} and \algocompTS{}\citep{oh2019thompson} on artificial data for varying model parameters. $\theta_*\in \mathbb{R}^d$ is $d-$dimensional uniformly random variable with each coordinate in $[0,1]$ independently and uniformly distributed. The contexts follow multivariate Gaussian distribution. Algorithm~\algotwo{} only knows the value of $N,T,K,d$. Besides, algorithms~\algocompTS{}and~\algocompUCB{} also need to know the value of $\kappa$ for their implementation. Here we simulate for two additional parameter instances again averaged over $25$ Monte Carlo runs. 

\begin{figure}[h]
\centering
\includegraphics[width=.49\columnwidth]{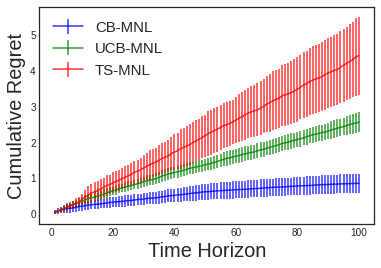}{}
\includegraphics[width=.49\columnwidth]{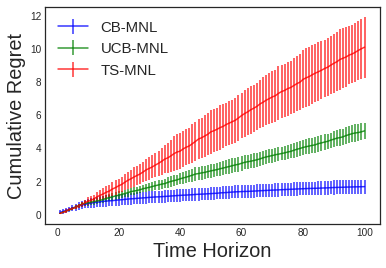}{}
\caption{Comparison of cumulative regret for two additional parameter instance ( left: $\kappa \gg \del{\sqrt{T}}$, $N=10,d=3,K=6,T=100$; right: $\kappa \gg \del{\sqrt{T}}$, $N=20,d=3,K=5,T=100$}
\label{fig: main 2}
\end{figure}

\end{document}